\pdfoutput=1
\documentclass{article}


\usepackage[final,nonatbib]{neurips_2023}



\usepackage[utf8]{inputenc} 
\usepackage[T1]{fontenc}    
\usepackage{hyperref}       
\usepackage{url}            
\usepackage{booktabs}       
\usepackage{amsfonts}       
\usepackage{nicefrac}       
\usepackage{microtype}      
\usepackage{xcolor}         
\usepackage{caption}



\usepackage{amsmath}
\usepackage{amssymb}
\usepackage{mathtools}
\usepackage{amsthm}



\usepackage[textsize=tiny]{todonotes}

\newcommand{\algemph}[3]{\algcolor{#1}{#2}{#3}}

\usepackage{xcolor}
\usepackage{comment}

\usepackage{graphicx}
\usepackage{soul}
\usepackage{subcaption}
\usepackage{booktabs}
\usepackage{tablefootnote}

\usepackage{amsmath,amsthm,amssymb,amsfonts}
\usepackage{enumerate}
\usepackage{cleveref}
\usepackage{comment}
\usepackage{bm}
\usepackage{pifont}
\usepackage{epsf}
\usepackage{graphics}
\usepackage{wrapfig}
\usepackage{psfrag}

\usepackage{color}

\usepackage{mathtools}
\usepackage{amsfonts}
\usepackage{amsthm}
\usepackage{amsmath}
\usepackage{amssymb}







\newtheorem{theorem}{Theorem}
\newtheorem{proposition}{Proposition}
\newtheorem{lemma}{Lemma}

\newtheorem{definition}{Definition}

\newtheorem{assumption}{Assumption}





\long\def\comment#1{}


\newcommand{\norm}[1]{\left\| #1 \right\|}
\newcommand{\pare}[1]{\left( #1 \right)}


\newcommand{\inprod}[2]{\ensuremath{\left\langle #1 , \, #2 \right\rangle}}


\newcommand{\E}{\ensuremath{{\mathbb{E}}}}




\DeclareMathOperator{\sign}{sign}

%


















\newcommand{\R}{\mathbb{R}}

\newcommand{\cH}{\mathcal{H}}  


\newcommand{\cW}{\mathcal{W}}  
\newcommand{\cP}{\mathcal{P}}


\newcommand{\balpha}{\boldsymbol{\alpha}}
\newcommand{\bdelta}{\boldsymbol{\delta}}

\newcommand{\bx}{\boldsymbol{x}}

\newcommand{\bv}{\boldsymbol{v}}
\newcommand{\bw}{\boldsymbol{w}}
\newcommand{\bz}{\boldsymbol{z}}

\newcommand{\bg}{\boldsymbol{g}}

\newcommand{\mA}{\boldsymbol{A}}

\newcommand{\mS}{\boldsymbol{S}}

\newcommand{\mQ}{\boldsymbol{Q}}

\newcommand{\mB}{{\bf B}}

\newcommand{\mH}{{\bf H}}
\newcommand{\mI}{{\bf I}}





\definecolor{MITBrown}{RGB}{164, 31, 50}

\theoremstyle{plain}

\usepackage{algorithm}
\usepackage[ruled,algo2e,noend]{algorithm2e}
\usepackage{hyperref}
\SetCommentSty{small}

\hypersetup{colorlinks=true,linkcolor=cyan,filecolor=magenta,urlcolor=cyan,}\urlstyle{same}

\definecolor{antiquewhite}{rgb}{0.98, 0.92, 0.84} 
\definecolor{blizzardblue}{rgb}{0.67, 0.9, 0.93}
\newcommand{\algcolor}[3]{\hspace*{-\fboxsep}\colorbox{#1}{\parbox{#2\linewidth}{#3}}}
\usepackage{enumitem}

\newcommand\scalemath[2]{\scalebox{#1}{\mbox{\ensuremath{\displaystyle #2}}}}

\setlength{\abovedisplayskip}{0pt}
\setlength{\belowdisplayskip}{0pt}

\title{Distributed Personalized Empirical Risk Minimization}

%

\setlength{\abovedisplayskip}{0pt}
\setlength{\belowdisplayskip}{0pt}

\author{
	Yuyang Deng \\
	Pennsylvania State University\\
	\texttt{yzd82@psu.edu} \\
	\And
	Mohammad Mahdi Kamani\\
	Wyze Labs \\
	\texttt{mmkamani@alumni.psu.edu} \\
	\And
	Pouria Mahdavinia \\
	Pennsylvania State University\\
	\texttt{pxm5426@psu.edu} \\
    \And
	Mehrdad Mahdavi \\
	Pennsylvania State University\\
	\texttt{mzm616@psu.edu}
}


\begin{document}

\maketitle

\begin{abstract}
This paper advocates a new paradigm  Personalized Empirical Risk Minimization (PERM) to facilitate learning from heterogeneous data sources without imposing stringent constraints on computational resources shared by participating devices. In PERM, we aim to learn a distinct model for each client by learning who to learn with and personalizing the aggregation of local empirical losses by effectively estimating the statistical discrepancy among data distributions, which entails optimal statistical accuracy for all local distributions and overcomes the data heterogeneity issue.  To learn personalized models at scale,  we propose a distributed algorithm that replaces the standard model averaging with model shuffling to simultaneously optimize 
PERM objectives for all devices. This also allows us to learn distinct model architectures (e.g., neural networks with different numbers of parameters) for different clients, thus confining underlying memory and compute resources of individual clients. We rigorously analyze the convergence of the proposed algorithm and conduct experiments that corroborate the effectiveness of the proposed paradigm.

\end{abstract}

\section{Introduction}


Recently \textit{federated learning} (FL) has emerged as an alternative paradigm to centralized learning to encourage federated model sharing and create a framework to support edge intelligence by shifting model training and inference from data centers to potentially scattered---and perhaps self-interested---systems where data is generated~\cite{kairouz2019advances}.  While undoubtedly being a better paradigm than centralized learning, enabling the widespread adoption of FL necessitates foundational advances
in the efficient use of statistical and computational resources to encourage a large pool of individuals or corporations to share their private data and resources. Specifically, due to \textit{heterogeneity} of data and compute resources among participants, it is necessary,  if not imperative, to develop distributed algorithms that are i) cognizant of statistical heterogeneity (\textit{data-awareness}) by designing algorithms that effectively deal with highly heterogeneous data distributions across devices; and ii) confined to learning models  that meet available computational resources of participant devices  (\textit{system-awareness}).

To mitigate the negative effect of data heterogeneity (non-IIDness), two common approaches are clustering and personalization. The key idea behind the clustering-based methods~\cite{mansour2020three,li2021federated,ghosh2020efficient,ma2022convergence} is to partition the devices into clusters (coalitions) of similar data distributions and  then learn a single shared model for all clients within
each cluster. While appealing, the partitioning methods are limited to heuristic ideas such as clustering based on the geographical distribution of devices without taking the actual data distributions into account and lack theoretical guarantees or postulate strong assumptions on initial models or data distributions~\cite{ghosh2020efficient,ma2022convergence}. In personalization-based methods~\cite{eichner2019semi,t2020personalized,huang2020personalized,mansour2020three,fallah2020personalized,smith2017federated,deng2020adaptive}, the idea is to learn a distinct \textit{personalized model}   for each device alongside   the global model, which can be unified as minimizing a bi-level optimization problem~\cite{hanzely2021personalized}.  Personalization aims to learn a model that
has the generalization capabilities of the global model but can also perform well on
the specific data distribution of each participant suffers from a few key limitations. First, as the number of clients grows, while the number of training data increases, the number of parameters to be learned increases which limits to increase in the number of clients beyond a certain point to balance data and overall model complexity tradeoff-- a phenomenon known as incidental parameters problem~\cite{lancaster2000incidental}. Moreover, since the knowledge transfer among data sources happens through a single global model, it might lead to suboptimal results. To see this, consider an extreme example, where half of the users have identical data distributions, say $\mathcal{D}$, while the other half share a data distribution that is substantially different, say $-\mathcal{D}$ (e.g.  two distributions with same marginal distribution on features but opposite labeling functions). In this case, the global model obtained by naively aggregating local models (e.g., fixed mixture weights) converges to a solution that suffers from low test accuracy on all local distributions which makes it  preferable to learn a model for each client  solely based on its local data or carefully chosen subset of data sources. 

Focusing on system heterogeneity, most existing works require learning models of identical architecture to be deployed across the clients and server (model-homogeneity)~\cite{mcmahan2017communication,karimireddy2020scaffold}, and mostly focus on reducing number~\cite{karimireddy2020scaffold} or size~\cite{hamer2020fedboost,haddadpour2021federated,sunfedspeed} of  communications  or sampling handling chaotic availability of clients~\cite{yang2022anarchic,wang2022unified}.  The requirement of the same model makes it infeasible to train large models due to \textit{system heterogeneity} where client devices have drastically different computational resources.  A few recent studies aim to overcome this issue either by leveraging knowledge distillation methods~\cite{wu2022communication,he2020group,lin2020ensemble,itahara2021distillation} or partial training (PT) strategies via model subsampling (either static~\cite{diao2020heterofl,horvath2021fjord}, random~\cite{caldas2018expanding}, or rolling~\cite{alam2022fedrolex}). However, KD-based methods require having access to a  public representative dataset of all local datasets at server  and ignore data heterogeneity in the distillation stage to a large extent. The focus of PT training methods is mostly  on learning a single  server model using heterogeneous resources of devices and does not aim at deploying a model onto each client after the global server model is trained (which is left as a future direction in~\cite{alam2022fedrolex}). The aforementioned  issues lead to a fundamental question: \textit{``What is the best strategy to learn  from heterogeneous  data sources  to achieve optimal  accuracy w.r.t. each data source, without imposing stringent constraints  on computational resources shared by participating devices?''}.

We answer this question affirmatively,  by proposing a new \textit{data\&system-aware} paradigm dubbed Personalized Empirical Risk Minimization (PERM), to facilitate learning from massively fragmented private data under resource constraints. Motivated by generalization bounds in multiple source domain adaptation~\cite{ben2010theory,mansour2014robust,konstantinov2019robust,crammer2008learning}, in PERM we aim to learn a distinct model for each client by   \textit{personalizing the aggregation} of empirical losses of different data sources which  enables each client  \textit{to learn who to learn with} using an effective method to empirically estimate the statistical discrepancy between their associated data distributions. We argue that PERM entails  optimal statistical accuracy for all local distributions, thus overcoming the data heterogeneity issue.  PERM can also be employed in other learning settings with  multiple heterogeneous sources of data such as domain adaptation and multi-task learning to entail optimal statistical accuracy.  While PERM overcomes the data heterogeneity issue, the number of optimization problems (i.e., distinct personalized ERMs) to be solved scales linearly with the number of data sources. To simultaneously optimize all  objectives in a scalable and computationally efficient manner, we propose a novel  idea that replaces the standard \textit{model averaging} in distributed learning with \textit{model shuffling} and establish its convergence rate. This also allows us to learn distinct model architectures (e.g., neural networks with different number of parameters) for  different clients, thus confining to  underlying  memory and compute resources of individual clients, and overcoming the system heterogeneity issue. This  addresses an open question in~\cite{alam2022fedrolex} where only  a single global model can be trained in a model-heterogeneous setting, while PERM allows deploying distinct models for different clients. We empirically evaluate the performance of PERM,  which corroborates the statistical benefits of PERM  in comparison to existing methods.  



\section{Personalized Empirical Risk Minimization}\label{sec:perm}
 
 In this section, we formally state the problem and introduce PERM  as an ideal paradigm for learning from heterogeneous data sources. We assume there are $N$ distributed devices where each holds a distinct data shard $\mathcal{S}_i = \{(\boldsymbol{x}_{i,j},y_{i,j})\}_{j=1}^{n_i}$ with $n_i$ training samples that are realized by   a local source distribution $\mathcal{D}_i$ over instance space $\Xi = \mathcal{X} \times \mathcal{Y}$. The data distributions across the devices are not independently and identically distributed (non-IID or  \textit{heterogeneous}), i.e., $\mathcal{D}_1 \neq \mathcal{D}_2 \neq \ldots \neq \mathcal{D}_N$, and each distribution corresponds to a local \textit{generalization error}  or \textit{true risk} $\mathcal{L}_i(h) = \mathbb{E}_{(\boldsymbol{x}, y) \sim \mathcal{D}_i}[\ell(h(\boldsymbol{x}), y)], i=1, 2, \ldots, N$ on \textit{unseen} samples for any model $h \in \mathcal{H}$, where $\mathcal{H}$ is the hypothesis set (e.g., a linear model or a deep neural network) and   $\ell:\mathcal{Y}\times \mathcal{Y} \rightarrow \mathbb{R}^+$ is a given convex or non-convex loss function. We use $\widehat{\mathcal{L}}_i(h) = (1/n_i)\sum_{(\boldsymbol{x}, y) \in \mathcal{S}_i}{\ell\left(h(\boldsymbol{x}), y)\right)}$ to denote the \textit{local empirical risk} or training loss  at $i$th data shard $\mathcal{S}_i$ with $n_i$ samples. 
 
 We seek to collaboratively learn a model or personalized models that entail a good generalization on all local distributions, i.e. minimizing true risk $\mathcal{L}_i(\cdot), i=1, \ldots, N$ for all data sources (all-for-all~\cite{even2022sample}). A simple  non-personalized solution, particularly in FL, aims to minimize a (weighted) \textit{empirical risk} over all data shards in a communication-efficient manner~\cite{konevcny2016federated}: 
\begin{equation}\tag{WERM}
\arg\min\nolimits_{h \in \mathcal{H}}\sum\nolimits_{i=1}^{N} p(i) \widehat{\mathcal{L}}_{i}(h) \; \text{with}   \;  \boldsymbol{p} \in \Delta_N,    \label{eq:werm}\vspace{-2mm}
\end{equation}
where $\Delta_N = \{\bm{p} \in  \mathbb{R}_{+}^{N} \; | \; \sum\nolimits_{i=1}^{N}{p(i)} = 1\}$  denotes the simplex set. 

It has been  shown that a \textit{single model} learned by~\ref{eq:werm}, for example by using fixed mixing weights $p(i)=n_i/n$, where $n$ is total number of training samples, or even agnostic to mixture of  distributions~\cite{mohri2019agnostic,deng2020distributionally},  while yielding a good  performance on the \textit{combined} datasets of all devices,  can suffer from a  poor generalization error on individual datasets by increasing the diversity among  distributions~\cite{li2019feddane,karimireddy2019scaffold,haddadpour2019convergence,yu2020salvaging}. To overcome this issue, there has been a surge of interest in developing  methods that personalize the global model to individual local distributions. These methods can be unified as the following bi-level  problem (a similar unification has been made in~\cite{hanzely2021personalized}):
\begin{align}\tag{BERM} \label{eq:loss}
& \arg\min\nolimits_{\substack{h_1, h_2, \ldots, h_m \in \mathcal{H}}}  \;\;
\textcolor{black}{\widehat{\mathcal{F}}}_{i}(h_i \textcolor{black}{\oplus} h_*) \quad \text{subject to} \quad  h_* = \arg\min\nolimits_{h \in \mathcal{H}}\sum\nolimits_{j=1}^{N}{\alpha(j) \widehat{\mathcal{L}}_j(h)}  
\end{align}
where $\oplus$ denotes the mixing operation to combine local and  global models,  and  \textcolor{black}{$\widehat{\mathcal{F}}_i$} is  a modified local loss which  is not necessarily  same as  local risk $\widehat{\mathcal{L}}_i$. By carefully designing the local loss $\widehat{\mathcal{F}}_i$ and mixing operation $\oplus$,   we can  develop different penalization schemes for FL including existing methods such as  linearly interpolating global and local models~\cite{deng2020adaptive,mansour2020three}, multi-task learning~\cite{smith2017federated} and  meta-learning~\cite{fallah2020personalized}  as  special cases. For example, \ref{eq:loss} reduces to  \textit{zero-personalization}  objective~\ref{eq:werm} when  $h_i \textcolor{black}{\oplus} h_* = h_*$, and $ \widehat{\mathcal{F}}_i = \widehat{\mathcal{L}}_i$. At the other end of the spectrum lies the \textit{zero-collaboration}  where the $i$th client trains its own model without any influence from other clients by setting $h_i \textcolor{black}{\oplus} h_* = h_i$, $ \widehat{\mathcal{F}}_i =  \widehat{\mathcal{L}}_i$. The personalized model with \textit{interpolation} of global and local models  can be recovered by setting $h_i \textcolor{black}{\oplus} h_* = \alpha h_i + (1-\alpha)h_*$,  and $ \widehat{\mathcal{F}}_i =  \widehat{\mathcal{L}}_i$. While more effective than a single global model learned via~\ref{eq:werm}, personalization methods suffer from three key issues: i) the global model is still obtained by minimizing the average empirical loss which might limit the statistical benefits of collaboration, ii) overall model complexity increases linearly with number of clients, and iii) a same model space is shared across servers and clients. 

To motivate our proposal,  let us consider the empirical loss  $\sum_{i=1}^{N} \alpha(i) \widehat{\mathcal{L}}_{i}(h)$ in \ref{eq:werm} (or the inner level objective in~\ref{eq:loss}) with fixed mixing weights $\boldsymbol{\alpha} \in \Delta_N$, and denote the optimal solution by $\widehat{h}_{\boldsymbol{\alpha}}$. The excess risk of  the learned model $\widehat{h}_{\boldsymbol{\alpha}}$ on $i$th local distribution $\mathcal{D}_i$ w.r.t. the optimal local model $h_i^* = \arg\min_{h \in \mathcal{H}} \mathcal{L}_{\mathcal{D}_i}(h)$  (i.e. \textit{all-for-one}) can be bounded by (informal)~\cite{konstantinov2019robust} 
\begin{equation}
\vspace{0mm}\begin{aligned}
\mathcal{L}_{i}\left(\widehat{h}_{\boldsymbol{\alpha}}\right) &\leq \mathcal{L}_{i}\left(h_{i}^{*}\right) + \sum_{j=1}^{N} \alpha(j) \mathrm{R}_{j}(\mathcal{H}) +2 \sum_{j=1}^{N} \alpha(j) \mathrm{disc}_{\mathcal{H}}\left(\mathcal{D}_{j}, \mathcal{D}_{i}\right) + C \sqrt{\sum_{j=1}^{N}\frac{\alpha(j)^2}{n_j}}
\end{aligned}\label{eqn:generalization} \tag{GEN}\vspace{-2mm}
\end{equation}
where  $\mathrm{R}_{j}(\mathcal{H})$ is the empirical Rademacher complexity $\mathcal{H}$ w.r.t.  $\mathcal{S}_j$, and $\mathrm{disc}_{\mathcal{H}}(\mathcal{D}_i, \mathcal{D}_{j})$   is a pseudo-distance on the set of probability measures on $\Xi$ to assess the discrepancy between the distributions $\mathcal{D}_{i}$ and $\mathcal{D}_{j}$ with respect to the hypothesis class $\mathcal{H}$ as defined below~\cite{ben2010theory}:
\begin{definition}\label{def:disc}
For a model space $\mathcal{H}$ and $\mathcal{D}, \mathcal{D}^{\prime}$ two probability distributions on $\Xi = \mathcal{X} \times \mathcal{Y}$, 
$$
\mathrm{disc}_{\mathcal{H}}\left(\mathcal{D}, \mathcal{D}^{\prime}\right)=  \sup_{h \in \mathcal{H}} |\mathbb{E}_{\xi \sim\mathcal{D}}(\ell(h, \xi)) - \mathbb{E}_{\xi' \sim \mathcal{D}'}(\ell(h, \xi'))|\vspace{0mm}$$ \vspace{-5mm}
\end{definition}
Intuitively, the discrepancy between the two distributions is large, if there exists a predictor that performs well on one of them and badly on the other. On the other hand, if all functions in the hypothesis class perform similarly on both, then $\mathcal{D}$ and $\mathcal{D}'$ have low discrepancy. The above metric which is  a special case of a popular family of distance measures in probability theory and mathematical statistics known as integral probability metrics (IPMs)~\cite{sriperumbudur2009integral}, can be estimated from finite data by replacing the expected losses with their empirical counterparts (i.e. $\mathcal{L}_i$ with $\widehat{\mathcal{L}}_i$).

From~\ref{eqn:generalization}, it can be observed that a mismatch between  pairs of distributions limits the benefits of ERM on all distributions. Indeed,  the generalization risk w.r.t. $\mathcal{D}_i$  will significantly increase  when the distribution divergence terms $\mathrm{disc}_{\mathcal{H}}(\mathcal{D}_j, \mathcal{D}_{i})$ are large. It  leads to an ideal sample complexity  $1/\sqrt{n}$  where $n= n_1 + n_2 + \ldots + n_N$ is the total number of samples, which could have been obtained in the IID setting with $\alpha(j) = 1/N$ when the divergence is small as  the pairwise discrepancies  disappear. Also, we note that even if the global model achieves a small training error over the union of all data (e.g., over parametrized setting) and can entail a good generalization error with respect to \textit{average distribution}, the divergence term still remains which illustrates the poor performance of the global model on all local distributions $\mathcal{D}_i, i =1, 2, \ldots,N$. This implies that even personalization of the global model as in~\ref{eq:loss} can not entail a good generalization on all local distributions as there is no effective transfer of positive knowledge among data sources  in the presence of high data heterogeneity among local distributions (similar impossibility results even under seemingly generous assumptions on how distributions relate have  been made  in multisource domain adaptation as well~\cite{hanneke2022no}).

Interestingly the bound suggests that seeking optimal accuracy on \textit{all} local distributions requires   choosing a distinct mixing   of local losses for each client $i$ that minimizes the right-hand side of~\ref{eqn:generalization}. This indicates that in an ideal setting (i.e. \textit{all-for-all}), we can achieve the best accuracy for each local distribution $\mathcal{D}_i$ by \textit{personalizing} the \ref{eq:werm}, i.e., (i) first estimating $\boldsymbol{\alpha}_i, i=1,2, \ldots, N$ for each  client individually, then (ii) solving a variant of \ref{eq:werm} for each client with obtained mixing parameters:
\begin{equation}\tag{PERM}
\arg\min_{h \in \mathcal{H}_i}\sum\nolimits_{j=1}^{N} \alpha_{i}(j) \widehat{\mathcal{L}}_{j}(h) \; \quad  \text{for} \; \quad  i =1, 2,  \ldots, N.    \label{eq:perm}
\end{equation}
By doing this each device  achieves the optimal local generalization error by \textbf{learning  who to learn with} based on the number of samples at each source and the mismatch between its data distribution with other clients.  We also note that compared to~\ref{eq:werm} and~\ref{eq:loss}, in~\ref{eq:perm} since we solve a different aggregated empirical loss for each client, we can pick a different model space/model  architecture $\mathcal{H}_i$ for each client to meet its available computational resources.  

While this two-stage method is guaranteed to entail optimal test accuracy for all local distributions $\mathcal{D}_i$, however, making it scalable requires overcoming  two issues. First, estimating the statistical discrepancies between each pair of data sources (i.e., $\boldsymbol{\alpha}_i, i = 1, \ldots, N$) is a computing burden as it requires solving $O(N^2)$ difference of (non)-convex functions in a distributed manner and requires enough samples form each source to entail good accuracy on estimating pairwise discrepancies~\cite{sriperumbudur2009integral}. Second, we  need to solve $N$ variants of the optimization problem in~\ref{eq:perm}, possibly each with a different model space,   which is infeasible when the number of devices is huge (e.g., cross-device federated learning). In the next section, we propose a simple yet effective idea to overcome these issues in a computationally efficient manner.

\section{PERM at Scale via Model Shuffling}\label{sec:perm-shuffling}
In this section, we propose a method to efficiently estimate the empirical discrepancies among data sources followed by a model shuffling idea to simultaneously solve  $N$ versions of ~\ref{eq:perm} to learn a personalized model for each client. We first start by proposing a two-stage algorithm: estimating mixing parameters followed by model shuffling. Then, we propose a single loop unified algorithm that  enjoys the same computation and communication overhead as~\ref{eq:loss} (twice communication of FedAvg). For ease of exposition, we  discuss the proposed algorithms by assuming all the clients share the same model architecture and later on discuss  the generalization to heterogeneous model spaces. Specifically, we assume that the model space $\mathcal{H}$ is a parameterized by a convex set $\mathcal{W} \subseteq \mathbb{R}^d$ and use $f_i(\bm{w}) := \widehat{\mathcal{L}}_i(\bm{w}) = \sum\nolimits_{(\bm{x}, y) \in \mathcal{S}_i}^{}{\ell(\bm{w}; (\bm{x}, y))}$ to denote the empirical loss at $i$th data shard.

\vspace{-4pt}\subsection{Warmup: a two-stage algorithm}\vspace{-4pt}
We start by proposing a two-stage method for solving $N$ variants of PERM in parallel. In the first stage, we propose an efficient method to learn the mixing parameters for all clients. Then, in stage two, we propose a model shuffling method to solve all personalized empirical losses in parallel. 

\noindent\textbf{Stage 1: Mixing parameters estimation.}~In the first stage we aim to efficiently estimate the pairwise discrepancy among local distributions to construct mixing parameters $\bm{\alpha}_i, i= 1,2 ,\ldots N$.   From generalization bound~\ref{eqn:generalization} and Definition~\ref{def:disc}, a direct solution to estimate $\bm{\alpha}_i$ is to solve   the following  convex-nonconcave minimax problem for each client:
\begin{align}
   \bm{\alpha}_{i}^{*} =  \arg\min_{\balpha \in \Delta_N} \sum\nolimits_{j=1}^{N}{\alpha(j) \max_{\bm{w} \in \mathcal{W}} |f_i(\bm{w}) - f_j(\bm{w})|}+  \sum\nolimits_{j=1}^{N}{\alpha(j)^2}/{n_j}\label{eq:minimax}
\end{align}
where we estimate the true risks in pairwise discrepancy terms with their empirical counterparts and drop the complexity term as it becomes 
 identical for all sources by fixing the hypothesis space $\cH$ and bounding it with a  computable distribution-independent quantity such as VC dimension~\cite{shalev2014understanding}, or  it can be controlled by  choice of $\cH$ or through data-dependent 
regularization. However, solving the above minimax problem itself is already challenging: the inner maximization loop is a nonconcave (or difference of convex) problem, so most of the existing minimax algorithms will fail on this problem. To our best knowledge, the only provable deterministic algorithm is~\cite{xu2023unified}, and it is hard to generalize it to stochastic and distributed fashion.  Moreover, since we have $N$ clients, we need to solve $N$ variants of~(\ref{eq:minimax}), which makes designing a scalable algorithm even harder.


To overcome  aforementioned issues,  we make two relaxations to estimate the per client mixing parameters.  First, we optimize an upper bound of pairwise empirical discrepancies $\sup_{\bw} |f_i(\bw) - f_j(\bw)|$ in terms of gradient dissimilarity between local objectives $\norm{\nabla f_i( \bw) -\nabla f_j(\bw)}$~\cite{dandi2022implicit}, which quantifies how different the local empirical losses are and widely used in analysis of  learning from heterogeneous losses as in FL~\cite{zhao2018federated}. Second, given that the discrepancy measure based on the supremum could be excessively pessimistic in real-world scenarios, and drawing inspiration from the concept of average drift at the optimal point as a right  metric to measure the effect of data heterogeneity in federated learning~\cite{wang2022unreasonable}, we propose to measure discrepancy at the optimal solution  obtained by solving~\ref{eq:werm}, i.e., $\bw^*  := \arg\min_{\bw \in \cW}  (1/N)\sum_{i=1}^N f_i (\bw)$. By doing this, the problem reduces to a simple minimization for each client, given the optimal global solution. These two  relaxations lead to solving the following tractable optimization problem to decide the per-client mixing parameters:
\begin{align}
     \arg\min_{\balpha \in \Delta_N} g_i(\bw^*, \balpha):=\sum\nolimits_{j=1}^{N}{\alpha(j)\|\nabla f_i(\bm{w}^*) - \nabla f_j(\bm{w}^*)\|^2}+ \lambda \sum\nolimits_{j=1}^{N} \alpha(j)^2/n_j\label{eq:relaxed-alpha-final}
\end{align}
where we added a regularization parameter $\lambda$ and used the squared of gradient dissimilarity for computational convenience. Thus, obtaining all $N$ mixing parameters  requires solving a single ERM to obtain optimal global solution and $N$ variants of (\ref{eq:relaxed-alpha-final}). To get the optimal solution in a communication-reduced manner,  we  adapt the Local SGD algorithm~\cite{stich2018local} (or FedAvg~\cite{mcmahan2017communication}) and find the optimal solution in intermittent communication  setting~\cite{woodworth2018graph} where the clients work in parallel and are allowed to make $K$ stochastic updates  between two
communication rounds for $R$ consecutive rounds. The detailed steps are given in Algorithm~\ref{algorithm: alpha} in  Appendix~\ref{sec:app:two-stages} for completeness. After obtaining the global model $\bw^R$ we optimize over $\balpha$ in $g_i(\bw^R,\balpha)$ using  $T_{\balpha}$ iterations of GD to get $\hat{\balpha}_i$. Actually, we will show that as long as $\bw^R$ converge to $\bw^*$,  $\hat{\balpha}_i, i=1, \ldots, N$ converges to solution of~\eqref{eq:relaxed-alpha-final} very fast. Our proof idea is based on the following Lipschitzness observation: 
\begin{equation*}
\scalemath{0.93}{
\norm{\balpha^*_{g_i}(\bw^R) - \balpha^*_{g_i}(\bw^*)}^2 
    \leq   4L^2 \kappa_g^2 \sum_{j=1}^N\pare{ 2\norm{\nabla f_i( \bw^*) -\nabla f_j(\bw^*)}^2 + 4L^2\norm{  \bw^R - \bw^*}^2} \norm{ \bw^R-\bw^* }^2}
\end{equation*}
where $\balpha^*_{g_i}(\bw) := \arg\min_{\balpha \in \Delta_N} g_i(\bw, \balpha)$ and $\kappa_g:= {n_{\max}}/{(2\lambda)}$ is the condition number of $g_i(\bw,\cdot)$ where $n_{\max} = \max_{i \in [N]}n_i$. The Lipschitz constant mainly depends on {\em gradient dissimilarity at optimum}. As $\bw^R$ tends to $\bw^*$, the $\balpha^*_{g_i}(\cdot)$ becomes more Lipschitz continuous, i.e., the coefficient in front of $\norm{ \bw^R-\bw^* }^2$ getting smaller, thus leading to more accurate mixing parameters.

To establish the convergence, we make the following standard assumptions.
\begin{assumption}[{\sffamily{Smoothness and strong convexity}}]\label{assump: smooth} We assume $f_i(\boldsymbol{x})$'s are L-smooth and $\mu$-strongly convex, i.e.,
$$
\forall \boldsymbol{x}, \boldsymbol{y}:\left\|\nabla f_i(\boldsymbol{x})-\nabla f_i(\boldsymbol{y})\right\| \leq L\|\boldsymbol{x}-\boldsymbol{y}\| .
$$
$$
\forall \boldsymbol{x}, \boldsymbol{y}: f_i(\boldsymbol{y}) \geq f_i(\boldsymbol{x})+\langle\nabla f_i(\boldsymbol{x}), \boldsymbol{y}-\boldsymbol{x}\rangle+\frac{1}{2} \mu\|\boldsymbol{y}-\boldsymbol{x}\|^2
$$
\end{assumption}
 
We denote the condition number by $\kappa={L}/{\mu}$. 

\begin{assumption}[{\sffamily{Bounded variance}}]\label{assumption: bounded var}
The variance of stochastic gradients  computed at each local function is bounded, i.e.,  $\forall i \in [N], \forall \bw \in \cW, \mathbb{E}[\|\nabla f_i(\bw;\xi) - \nabla f_i(\bw)\|^2]  \leq \delta^2$.
\end{assumption}

\begin{assumption}[{\sffamily{Bounded domain}}]\label{assumption: bounded domain}
The domain $\cW \subset \R^d$ is a bounded convex set, with diameter $D$ under $\ell_2$ metric, i.e., $\forall \bw, \bw' \in \cW, \norm{\bw-\bw'} \leq D$.

\end{assumption}
\begin{definition}[{\sffamily{Gradient dissimilarity}}]\label{def: dissimilarity}
We define the following quantities to measure the gradient dissimilarity among local  functions:
{\begin{align}
  &\zeta_{i,j}(\bw) :=   \left\|  \nabla f_i(\bm{w}) -   \nabla f_j(\bm{w})\right\|^2, \quad \bar\zeta_i(\bw)  := \frac{1}{N} \sum_{j=1}^N\zeta_{i,j}(\bw),\nonumber\\
    &\zeta := \sup\nolimits_{\bw \in \cW} \max\nolimits_{i\in[N]} \|  \nabla  f_i(\bm{w}) -  ({1}/{N})\sum\nolimits_{j=1}^N \nabla f_j (\bw) \|^2.\nonumber
\end{align}}
 \end{definition}
The following theorem gives the convergence rate of estimated discrepancies to optimal counterparts. 
\begin{theorem}\label{thm: 2stage alpha}
   Under Assumptions~\ref{assump: smooth}-\ref{assumption: bounded domain}, if we run Algorithm~\ref{algorithm: alpha} on $F(\bw):= \frac{1}{N}\sum_{j=1}^N f_j (\bw) $ with $\gamma = \Theta\pare{\frac{\log(RK)}{\mu RK}}$ for $R$ rounds with synchronization gap $K$, for $\kappa_g = {1}/({\lambda n_{\min}})$, it holds that 
    \begin{align*}
        \E  \|\balpha^R_i -  \balpha^*_i\|^2 \leq \tilde{O}\pare{ \exp\pare{-\frac{T_{\balpha}}{\kappa_g}}+  \kappa_g^2 \bar{\zeta}_i(\bw^*)   L^2\pare{\frac{D^2}{RK} +     \frac{  \kappa\zeta^2}{\mu^2 R^2  }  +   \frac{  \delta^2 }{\mu^2 N RK}  }} \quad \forall i \in [N].
    \end{align*}
\end{theorem}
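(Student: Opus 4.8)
The plan is to pass through the exact minimizer of the Stage-1 $\balpha$-subproblem evaluated at the (random) Local SGD iterate $\bw^R$. Writing $\balpha^*_{g_i}(\bw) := \arg\min_{\balpha \in \Delta_N} g_i(\bw, \balpha)$ and recalling that $\balpha^*_i = \balpha^*_{g_i}(\bw^*)$, I would start from
\[
\norm{\balpha^R_i - \balpha^*_i}^2 \;\leq\; 2\norm{\balpha^R_i - \balpha^*_{g_i}(\bw^R)}^2 \;+\; 2\norm{\balpha^*_{g_i}(\bw^R) - \balpha^*_{g_i}(\bw^*)}^2 ,
\]
so that the two summands isolate, respectively, the inner GD optimization error (stopping after $T_{\balpha}$ steps at the inexact model $\bw^R$) and the sensitivity of the solution map $\bw \mapsto \balpha^*_{g_i}(\bw)$ to the suboptimality of $\bw^R$.

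For the first summand, observe that $g_i(\bw,\cdot)$ is an affine term plus the quadratic $\lambda\sum_j \alpha(j)^2/n_j$, hence $\mu_g$-strongly convex with $\mu_g = 2\lambda/n_{\max}$ and smooth over $\Delta_N$; denote its condition number by $\kappa_g$ as in the statement. Projected GD then contracts geometrically toward $\balpha^*_{g_i}(\bw^R)$, so $\norm{\balpha^R_i - \balpha^*_{g_i}(\bw^R)}^2 \leq \exp(-T_{\balpha}/\kappa_g)\,\norm{\balpha^0_i - \balpha^*_{g_i}(\bw^R)}^2$, and the initial gap is $O(1)$ because $\Delta_N$ has bounded diameter. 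This yields the $\exp(-T_{\balpha}/\kappa_g)$ term.

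The crux is the second summand, i.e. showing the solution map is Lipschitz with a constant governed by the gradient dissimilarity at the optimum. Writing $g_i(\bw,\balpha) = \langle \balpha, \bc_i(\bw)\rangle + \lambda\,\balpha^\top \diag(1/n_j)\,\balpha$ with $\bc_i(\bw)_j = \norm{\nabla f_i(\bw)-\nabla f_j(\bw)}^2$, only the affine coefficient $\bc_i$ depends on $\bw$, so the quadratic part cancels in the difference of $\balpha$-gradients. The standard perturbation bound for the minimizer of a strongly convex function over a convex set (via the nonexpansiveness of the projection onto $\Delta_N$) then gives $\mu_g\norm{\balpha^*_{g_i}(\bw^R)-\balpha^*_{g_i}(\bw^*)} \leq \norm{\bc_i(\bw^R)-\bc_i(\bw^*)}$. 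To bound each coordinate I would use $\bigl|\norm{u}^2-\norm{v}^2\bigr| = \bigl|\langle u-v,\,u+v\rangle\bigr|$ with $u=\nabla f_i(\bw^R)-\nabla f_j(\bw^R)$ and $v=\nabla f_i(\bw^*)-\nabla f_j(\bw^*)$; $L$-smoothness (Assumption~\ref{assump: smooth}) gives $\norm{u-v}\leq 2L\norm{\bw^R-\bw^*}$ and $\norm{u+v}\leq 2L\norm{\bw^R-\bw^*} + 2\sqrt{\zeta_{i,j}(\bw^*)}$, and summing the squares over $j$ with $\sum_j \zeta_{i,j}(\bw^*) = N\bar\zeta_i(\bw^*)$ reproduces the stated Lipschitz inequality up to constants. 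I expect this to be the main obstacle: the simplex constraint blocks the naive implicit-function-theorem route, and the care needed to keep the leading coefficient proportional to $\bar\zeta_i(\bw^*)$ — rather than a crude worst-case discrepancy — is exactly what makes the final rate informative.

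It then remains to take expectations and insert a convergence guarantee for $\bw^R$. Using the bounded-domain Assumption~\ref{assumption: bounded domain} to tame the quartic cross term via $\E\norm{\bw^R-\bw^*}^4 \leq D^2\,\E\norm{\bw^R-\bw^*}^2$, the sensitivity summand is at most $\tilde{O}\pare{\kappa_g^2\,\bar\zeta_i(\bw^*)\,L^2}\cdot \E\norm{\bw^R-\bw^*}^2$ up to lower-order contributions. Finally I would invoke the standard strongly-convex Local SGD/FedAvg analysis under Assumptions~\ref{assump: smooth}--\ref{assumption: bounded domain}, which with $\gamma = \Theta(\log(RK)/(\mu RK))$ gives $\E\norm{\bw^R-\bw^*}^2 \leq \tilde{O}\pare{D^2/(RK) + \kappa\zeta^2/(\mu^2 R^2) + \delta^2/(\mu^2 N RK)}$, the three terms being the optimization-decay, client-drift, and variance-averaging contributions. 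Combining the three pieces delivers the claimed bound.
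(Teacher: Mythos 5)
Your proposal matches the paper's proof essentially step for step: the same two-term decomposition through $\balpha^*_{g_i}(\bw^R)$, the same Lipschitzness of the solution map obtained from strong convexity of the quadratic part in $\balpha$ together with the difference-of-squares/$L$-smoothness bound on the linear coefficients $\bc_i(\bw)$ (the paper packages exactly this as Lemmas~\ref{lem: lipschitz} and~\ref{lem: lipschitz alpha}, with the same $\sum_j \zeta_{i,j}(\bw^*) = N\bar\zeta_i(\bw^*)$ accounting), the same geometric contraction of GD on $g_i(\bw^R,\cdot)$ for the $\exp(-T_{\balpha}/\kappa_g)$ term, and the same use of the bounded domain to tame the quartic cross term. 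The only difference is presentational: you invoke the strongly convex last-iterate Local SGD rate as standard, whereas the paper derives it explicitly in Lemmas~\ref{lem: one iteration local SGD}--\ref{lem:last iterate} (using the local-deviation bound cited from Woodworth et al.), arriving at the identical final combination.
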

An immediate  implication of Theorem~\ref{thm: 2stage alpha} is that even we solve~\eqref{eq:relaxed-alpha-final} at $\bw^R$, the algorithm will eventually converge to optimal solution of~\eqref{eq:relaxed-alpha-final} at $\bw^*$. The core technique in the proof, as we mentioned, is to show that for a parameter within a small region centered at $\bw^*$, the function $\balpha^*_{g_i}(\bw)$ becomes `more Lipschitz'. The rigorous characterization of this property is captured by Lemma~\ref{lem: lipschitz alpha} in appendix.

\noindent\textbf{Stage 2: Scalable personalized optimization with model shuffling.}~After obtaining the per client mixing parameters, in the second stage we aim at solving $N$ different personalized variants of~\ref{eq:perm}  denoted by 
    $\Phi (\hat{\balpha}_1,\bv),  \Phi (\hat{\balpha}_2,\bv),  \ldots 
      \Phi (\hat{\balpha}_N ,\bv)$
to  learn local  models where 
\begin{align}
    \min_{\bv \in \cW}& \Phi (\hat{\balpha}_i,\bv) :=  \frac{1}{N}\sum\nolimits_{j=1}^N \hat{\alpha}_{i}(j) f_j(\bv). \label{eq: phi definition}
\end{align}
Here we devise an iterative algorithm based on distributed SGD with periodic averaging (a.k.a. Local SGD~\cite{stich2018local}) to solve these $N$ optimization problems in parallel with \textit{no extra overhead}. The idea is to replace the model averaging in vanilla distributed (Local) SGD with \textit{model shuffling}.  Specifically, as shown in Algorithm~\ref{algorithm: KSGD} the  algorithm proceeds for $R$ epochs where each epoch runs for $N$ communication rounds. At the beginning of each epoch $r$ the server generates a random permutation $\sigma_r$ over $N$ clients. At each communication round $j$ within the epoch, the server sends the model of client $i$  to client $i_j = (i+j)\mod N$ in the permutation $\sigma_r$ along with $\alpha_i(i_j)$. After receiving a model from the server, the client updates the received model for $K$ local steps and returns it back to the server. As it can be seen, the updates of each loss $\Phi (\hat{\balpha}_i,\bv), i=1, 2, \ldots, N$ during an epoch  is equivalent to  sequentially processing individual losses in (\ref{eq: phi definition}) which can be considered as  permutation-based SGD but with the different that each component now is updated for $K$ steps. By \textit{interleaving the  permutations}, we are able to simultaneously optimize all $N$ objectives. We note that the computation and communication complexity of the proposed algorithm is the same as Local SGD with two differences: the model averaging is replaced with model shuffling, and the algorithms run over a permutation of devices.  The convergence rate of Local SGD  is well-established in literature~\cite{woodworth2020minibatch,mishchenko2022proxskip,yuan2020federated,haddadpour2019local,gorbunov2021local}, but here we establish  the convergence of permutation-based variant which is  interesting by its own.

\begin{assumption}[Bounded Gradient]\label{assumption: bounded grad}
The variance of stochastic gradients  computed at each local function is bounded, i.e.,  $\forall i \in [N], \sup_{\bv\in\cW}\|\nabla f_i(\bv)\|\leq G$.
\end{assumption}
We note that the Assumption~\ref{assumption: bounded grad} can be realized since we work with a bounded domain $\cW$.
\begin{theorem}\label{thm:2stagec perm}
Let Assumptions~\ref{assump: smooth}-~\ref{assumption: bounded grad} hold. Assume $\balpha_i^*$ is the solution of~\eqref{eq:relaxed-alpha-final}. Then if we run {Algorithm~\ref{algorithm: KSGD}} on the $\hat{\balpha}_i$ obtained from {Algorithm~\ref{algorithm: alpha}}, then Algorithm~\ref{algorithm: KSGD} with $\eta = \Theta\pare{\frac{\log(NKR^3)}{\mu R}}$  will output the solution $\hat{\bv}_i$, $\forall i \in [N]$, such that with probability at least $1-p$, the following statement holds:
    \begin{align*}
        \E[ \Phi( {\balpha}^*_i, \hat{\bv}_i)- \Phi( {\balpha}^*_i, \bv^*({\balpha}^*_i) )]
          &\leq   \tilde{O} \left( \frac{D^2L }{NKR^2} \right) +  \frac{L\delta^2}{\mu^2  R}    +   \pare{ \frac{L^4+N}{\mu^4R^2}  } LG^2 {N\log(1/p)}  \\
         & \ +  \kappa^2_{\Phi} L\tilde{O}\pare{ \exp\pare{-\frac{T_{\balpha}}{\kappa_g}}  +  \kappa_g^2 \bar{\zeta}_i(\bw^*)   L^2\pare{      \frac{  \kappa\zeta^2}{\mu^2 R^2  }  +   \frac{  \delta^2 }{\mu^2 N RK}  }},  
         \end{align*}
    where $\kappa = \frac{L}{\mu}, \kappa_g = \frac{ n_{\max}}{2\lambda}$ and $ \kappa_\Phi = \frac{\sqrt{N}G}{\mu}$, and the expectation is taken over randomness of Algorithm~\ref{algorithm: alpha}. That is, to guarantee $\E[\Phi({\balpha}^*_i, \hat{\bv}_i) - \Phi( {\balpha}^*_i,  \bv^*_i)] \leq \epsilon$, we choose  $R = O\pare{\max\left\{\frac{L\delta^2}{\mu \epsilon},\frac{\kappa^2_\Phi\kappa_g^2  L^3  \bar\zeta_i(\bw^*)  D^2}{\epsilon } \right\} }$ and $T_{\balpha} = O\pare{\kappa_g \log\pare{\frac{L\kappa^2_\Phi}{\epsilon}}}$.\vspace{-2mm}
\end{theorem}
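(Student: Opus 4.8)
The plan is to isolate the two error sources feeding into $\hat{\bv}_i$: the optimization error of running Algorithm~\ref{algorithm: KSGD} on the \emph{surrogate} objective $\Phi(\hat{\balpha}_i,\cdot)$, and the error from having optimized the estimated weights $\hat{\balpha}_i$ rather than the target $\balpha_i^*$. Writing $\bv^*(\balpha):=\arg\min_{\bv\in\cW}\Phi(\balpha,\bv)$, I would split the suboptimality through the surrogate optimum as
\begin{align*}
\Phi(\balpha_i^*,\hat{\bv}_i)-\Phi(\balpha_i^*,\bv^*(\balpha_i^*))
&= \underbrace{\pare{\Phi(\balpha_i^*,\hat{\bv}_i)-\Phi(\balpha_i^*,\bv^*(\hat{\balpha}_i))}}_{(\mathrm{I})}
+ \underbrace{\pare{\Phi(\balpha_i^*,\bv^*(\hat{\balpha}_i))-\Phi(\balpha_i^*,\bv^*(\balpha_i^*))}}_{(\mathrm{II})}.
\end{align*}
Term $(\mathrm{II})$ is a pure minimizer-shift term, while $(\mathrm{I})$ carries the genuine optimization error $T_2:=\Phi(\hat{\balpha}_i,\hat{\bv}_i)-\Phi(\hat{\balpha}_i,\bv^*(\hat{\balpha}_i))$ once I pass through $\Phi(\hat{\balpha}_i,\cdot)$; indeed $(\mathrm{I})=T_2+\tfrac1N\sum_j\pare{\alpha_i^*(j)-\hat{\alpha}_i(j)}\pare{f_j(\hat{\bv}_i)-f_j(\bv^*(\hat{\balpha}_i))}$.

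For the perturbation pieces I would exploit that the solution map $\balpha\mapsto\bv^*(\balpha)$ is Lipschitz: because $\Phi(\balpha,\cdot)$ is strongly convex and $\nabla_{\bv}\Phi(\balpha,\bv)-\nabla_{\bv}\Phi(\balpha',\bv)=\tfrac1N\sum_j(\alpha(j)-\alpha'(j))\nabla f_j(\bv)$ has norm at most $\tfrac{G}{\sqrt N}\norm{\balpha-\balpha'}$ (Assumption~\ref{assumption: bounded grad}), a standard strong-convexity argument yields $\norm{\bv^*(\balpha)-\bv^*(\balpha')}\le\kappa_\Phi\norm{\balpha-\balpha'}$ with $\kappa_\Phi=\sqrt N G/\mu$. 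Combining this with the smoothness of $\Phi(\balpha_i^*,\cdot)$ around its minimizer bounds $(\mathrm{II})$ by $O(\kappa_\Phi^2 L)\norm{\hat{\balpha}_i-\balpha_i^*}^2$. The residual sum inside $(\mathrm{I})$ I would control by Cauchy--Schwarz and the $G$-Lipschitzness of each $f_j$, writing it as at most $\tfrac{G}{\sqrt N}\norm{\hat{\balpha}_i-\balpha_i^*}\cdot\norm{\hat{\bv}_i-\bv^*(\hat{\balpha}_i)}$ and using strong convexity $\norm{\hat{\bv}_i-\bv^*(\hat{\balpha}_i)}^2\le O(N/\mu)\,T_2$ followed by Young's inequality to absorb the cross term into a constant multiple of $T_2$. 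Taking expectation over the Stage-1 randomness and substituting Theorem~\ref{thm: 2stage alpha} for $\E\norm{\hat{\balpha}_i-\balpha_i^*}^2$ then reproduces the final line of the claimed bound, i.e. $\kappa_\Phi^2 L$ times the Theorem~\ref{thm: 2stage alpha} rate.

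The crux is the bound on $T_2$, the convergence of the shuffling-based Local SGD on the \emph{fixed} objective $\Phi(\hat{\balpha}_i,\cdot)=\tfrac1N\sum_j\hat{\alpha}_i(j)f_j$. I would view Algorithm~\ref{algorithm: KSGD} as a random-reshuffling scheme in which component $j$ is processed by $K$ stochastic local steps per epoch, and set up an epoch-to-epoch recursion on $\norm{\bv^{(r)}-\bv^*(\hat{\balpha}_i)}^2$. Three effects must be tracked simultaneously: (i) the variance reduction from sampling each component exactly once per epoch, which is what produces the accelerated $\tilde O(1/R^2)$ behaviour of the $\tfrac{D^2L}{NKR^2}$ term; (ii) the client drift incurred by taking $K$ inner steps away from the epoch anchor, bounded through $L$-smoothness and the gradient bound $G$; and (iii) the stochastic-gradient noise of variance $\delta^2$, which yields the slow $\tfrac{L\delta^2}{\mu^2 R}$ contribution. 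To upgrade the in-expectation recursion to the stated high-probability statement, I would express the accumulated reshuffling-plus-noise error as a martingale-difference sequence and apply a Freedman/Bernstein-type tail bound across the $R$ epochs; summing the squared per-epoch increments (each of order $G^2$) gives the $\tfrac{L^4+N}{\mu^4 R^2}LG^2 N\log(1/p)$ term with its $\log(1/p)$ factor.

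The main obstacle is precisely this $T_2$ analysis: disentangling the reshuffling structure from the $K$-step local drift while carrying the stochastic noise, and in particular securing the high-probability $1/R^2$ rate rather than the easier in-expectation one. Once $T_2$ is established, the remaining work is routine: add the three contributions, use the crude bound $\norm{\hat{\bv}_i-\bv^*(\balpha_i^*)}\le D$ where needed, and tune the horizons so every term is at most $\epsilon$, which gives $R=O\!\pare{\max\{L\delta^2/(\mu\epsilon),\,\kappa_\Phi^2\kappa_g^2 L^3\bar{\zeta}_i(\bw^*)D^2/\epsilon\}}$ and $T_{\balpha}=O\!\pare{\kappa_g\log(L\kappa_\Phi^2/\epsilon)}$ as asserted.
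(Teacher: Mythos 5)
Your overall architecture---split the excess risk into the optimization error of shuffling Local SGD on the surrogate $\Phi(\hat{\balpha}_i,\cdot)$ plus perturbation terms in $\norm{\hat{\balpha}_i-\balpha_i^*}$, then plug in Theorem~\ref{thm: 2stage alpha}---is exactly the paper's (its Lemma~\ref{lem:opt gap} plays the role of your terms (I)--(II)). One detail you overlook there: Algorithm~\ref{algorithm: KSGD} outputs $\hat{\bv}_i=\cP_{\cW}\pare{\bv_i^{R}-(1/L)\nabla_{\bv}\Phi(\hat{\balpha}_i,\bv_i^{R})}$, and Lemma~\ref{lem:opt gap} exploits this final projected-gradient step through projection optimality conditions to obtain a function gap that is \emph{quadratic} in both $\norm{\bv_i^R-\bv^*(\hat{\balpha}_i)}$ and $\norm{\hat{\balpha}_i-\balpha_i^*}$ on a constrained domain. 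Your bound on (II) via ``smoothness of $\Phi(\balpha_i^*,\cdot)$ around its minimizer'' tacitly assumes $\nabla_{\bv}\Phi(\balpha_i^*,\bv^*(\balpha_i^*))=0$, which fails when the minimizer lies on the boundary of $\cW$; a naive smoothness expansion then leaves a first-order term that is only \emph{linear} in $\norm{\hat{\balpha}_i-\balpha_i^*}$, degrading the rate to the square root of Theorem~\ref{thm: 2stage alpha}'s. This is fixable (use convexity plus the variational inequality at $\bv^*(\hat{\balpha}_i)$ to get $(\mathrm{II})\le (G/\sqrt{N})\,\kappa_\Phi\norm{\hat{\balpha}_i-\balpha_i^*}^2$), but as written it is a gap.

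The genuine gap is the part you yourself flag as the ``main obstacle'': the high-probability bound on $T_2$, which is where essentially all of the paper's work lies, and the concrete tool you name for the $\log(1/p)$ factor does not fit the structure of the problem. After linearizing each $K$-step inner pass by the mean value theorem (Lemma~\ref{lem:epoch update}), one epoch reads $\bv^{r+1}=\bv^r-\sum_{j}\prod_{j'}(\mI-\mQ_{j'}\mH_{j'})\pare{\mQ_j\nabla f_{\sigma(j)}(\bv^r)-\bdelta_j}$, where $\mQ_j$ scales with $\hat{\alpha}_i(\sigma(j))N$. Because both the weights and the step matrices vary with the permuted index, comparing this update to the full step $\eta NK\nabla\Phi(\hat{\balpha}_i,\bv^r)$ requires summation by parts (Lemma~\ref{lem: summation by parts}) together with spectral bounds on the products $\prod(\mI-\mQ\mH)$ (Proposition~\ref{prop:binomial}); the residual is then governed by deviations of \emph{weighted prefix sums} $\sum_{j\le n}\hat{\alpha}_i(\sigma(j))\nabla f_{\sigma(j)}(\bv^r)$ under a uniformly random permutation, which the paper controls with a sampling-without-replacement concentration inequality (Lemma~\ref{lem:thm1-sub2} from~\cite{schneider2016probability}, extended to weighted sums in Lemma~\ref{lem:concentration of grad}). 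That is where $\log(1/p)$ enters---\emph{within} each epoch. Your plan to ``express the accumulated reshuffling-plus-noise error as a martingale-difference sequence and apply Freedman across the $R$ epochs'' aims at the wrong object: across epochs the paper only runs a deterministic contraction recursion $(1-\tfrac12\mu\eta NK)$ in expectation, and the per-epoch reshuffling error $\bg^r-\eta NK\nabla\Phi(\hat{\balpha}_i,\bv^r)$ is a bias term with nonzero conditional mean, not a martingale increment, so a cross-epoch Freedman bound neither controls it nor yields the $\tilde{O}(D^2L/(NKR^2))$ term. Without the Abel-summation/prefix-sum-concentration machinery (or an equivalent), the central estimate of the theorem remains unestablished.
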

The above theorem shows that even though we run the optimization on $\Phi(\hat{\balpha}_i,\bv)$, our obtained model $\hat{\bv}_i$ will still converge to the optimal solution of $\Phi( {\balpha}^*_i,\bv)$.
The convergence rate is contributed from two parts: convergence of $\hat\balpha_i$ (Algorithm~\ref{algorithm: alpha}) and convergence of personalized model $\hat \bv_i$ (Algorithm~\ref{algorithm: KSGD}). Notice that, for the convergence rate of $\hat \bv_i$, we roughly recover the optimal rate of shuffling SGD~\cite{ahn2020sgd}, which is $O({1}/{R^2})$. However, we suffer from a $O({\delta^2}/{R})$ term since each client runs vanilla SGD on their local data (the \texttt{SGD-Update} procedure in Algorithm~\ref{algorithm: KSGD}). One medication for this variance term could be deploying variance reduction or shuffling data locally at each client before applying SGD. We notice that there is a recent work~\cite{cho2023convergence} also considering the client-level shuffling idea, but our work differs from it in two aspects: 1) they work with local SGD type algorithm and the shuffling idea is employed for model averaging within a subset of clients, while in our algorithm, during each local update period, each client runs shuffling SGD directly  on other's model 2) from a theoretical perspective, we are mostly interested in investigating whether the algorithm can converge to the true optimal solution of $\Phi( {\balpha}^*_i,\bv)$ if we only optimize on a surrogate function $\Phi(\hat{\balpha}_i,\bv)$. 

One drawback of Algorithm~\ref{algorithm: KSGD} is that we have to wait for Algorithm~\ref{algorithm: alpha} to finish and output $\hat{\balpha}_i$, so that we can proceed with Algorithm~\ref{algorithm: KSGD}. However, if we are not satisfied with the precision of $\hat{\balpha}_i$, we may not have a chance to go back to refine it. Hence in the next subsection, we propose to interleave Algorithm~\ref{algorithm: KSGD} and Algorithm~\ref{algorithm: alpha}, and introduce a single-loop variant of  PERM which will jointly optimize mixture weights and learn personalized models in an interleaving fashion.

 \begin{algorithm2e}[t]
	\DontPrintSemicolon
    \caption{{Shuffling Local SGD}}
	\label{algorithm: KSGD}
 
    	\textbf{Input:} Clients $1,...,N$, Number of Local Steps $K$ , Number of Epoch $R$, mixing parameter $\hat{\balpha}_1,...,\hat{\balpha}_{N}$ 
	\\

       \textbf{Epoch} \For{$r = 0,...,R-1$} 
        {  
        
        Server generates permutation $\sigma_r: [N] \mapsto [N]$.\\
        \textbf{parallel} \For{$i = 1,...,N$}	
 {
        Client $i$ sets initial model $\bv^{r,0}_i = \bv^{r}_i$.\\
	   \For{$j = 1,...,N$}
	    { Set indices $i_j = \sigma_r((i+j) \mod N)$.\\
        Server sends $\bv_i^{r,j}$ to Client $i_j$.  \\
        $ \bv^{r,j+1}_i = \texttt{SGD-Update}(\bv_i^{r,j} , \eta, i_j,K,\hat{\balpha}_i)$.\\
        
        } 
        Client $i$ does projection: $\bv^{r+1}_i = \cP_{\cW}(\bv^{r,N}_i)$.\\  
        
     }
     
 } 
 \textbf{Output:} $\hat{\bv}_i = \cP_{\cW}( \bv_i^{R}-(1/L) \nabla_{\bv} \Phi(\hat{\balpha}_i, \bv_i^{R}) ), \forall i \in [N]$. \\
 \setcounter{AlgoLine}{0}
  \SetKwProg{myproc}{\texttt{SGD-Update}($\bv,\eta,j,K, {\balpha}$)}{}{}
  \myproc{}{
   {
   Initialize $\bv^0 = \bv$\\
   \For{$t = 0,...,K-1 $}
   {$\bv^t = \bv^{t-1} - \eta  {\alpha}(j) N {\nabla} f_j(\bv^{t-1};\xi_j^{t-1})$
   }
   } 
  \textbf{Output:} $\bv^K \;$ }
\end{algorithm2e}

 

        

\vspace{-4pt}\subsection{A unified single loop algorithm}\vspace{-4pt}
We now turn to  introducing a single-stage algorithm  
that jointly optimizes $\bm{\alpha}_i$s and $\bv_i$s as depicted in Algorithm~\ref{algorithm: Single PERM} by intertwining the two stages in Algorithm~\ref{algorithm: alpha} and Algorithm~\ref{algorithm: KSGD} in a single unified method.  The idea is to learn the global model, which is used to estimate mixing parameters, concurrent to personalized models. At each communication round, the clients compute gradients on the global model, on their data, after the server collects these gradients does a step mini-batch SGD update on the global model, and then updates the mixing parameters. Then  we proceed to update the personalized models similar to Algorithm~\ref{algorithm: KSGD}. We note that, unlike the two-stage method where the mixing parameters are computed at the final global model,  here the mixing parameters are updated adaptively based on intermediate global models. 

 \begin{algorithm2e}[t]
	\DontPrintSemicolon
    \caption{{Single Loop PERM}}
	\label{algorithm: Single PERM}
 
    	\textbf{Input:} Clients $1,...,N$, Number of Local Steps $K$ , Number of Epoch $R$, Initial mixing parameter $\balpha^0_{1}=,...,\balpha^0_{{N}} = \bar{\balpha} = [1/N,...,1/N]$.
	\\

\textbf{Epoch} \For { $r = 0,...,R-1$}
	  {  
 Server generates permutation $\sigma_r: [N] \mapsto [N]$.\\
       
 \textbf{parallel} \For{ {\em Client} $i = 1,...,N$}	
 {  Client $i$ sets initial model $\bv^{r,0}_i = \bv^{r}_i$.\\
    \For{$j = 1,...,N$}
	    {
     \algemph{antiquewhite}{0.82}{Set indices $i_j = \sigma_r((i+j) \mod N)$.\\
        Server sends $\bv_i^{r,j}$ to client $i_j$ .  \\
        $ \bv^{r,j+1}_i = \texttt{SGD-Update}(\bv_i^{r,j} , \eta, i_j,K,\balpha^r_i)$.
          \quad \tcp{Personalized model update} }

        }  Client $i$ does projection: $\bv^{r+1}_i = \cP_{\cW}(\bv_i^{r,N})$.
         
     }
       \algemph{blizzardblue}{0.94}{ $ \bw^{r+1}  =  \cP_{\cW}( \bw^{r} - \gamma \frac{1}{N}\sum_{i=1}^{N}\frac{1}{M}\sum_{j=1}^{M}\nabla f_i(\bw^{r},\xi_{i,j}^r ))$
        \quad \tcp{Global model update}  
       Compute $\balpha^{r+1}_i$ by running $T_{\balpha}$ steps GD on $g_i(\bw^{r+1},\balpha)$  \ \quad  \tcp{\texttt{$\balpha$ update}}
       }
    }
   \textbf{Output:} $\hat{\bv}_i = \cP_{\cW}( \bv_i^{R}-(1/L) \nabla_{\bv} \Phi( {\balpha}^R_i, \bv_i^{R}) )$, $\hat{\balpha}_i = \balpha^{R}_i, \forall i \in [N]$.
  \setcounter{AlgoLine}{0}
  \SetKwProg{myproc}{\texttt{SGD-Update}($\bv,\eta,j,K,\balpha$)}{}{}
  \myproc{}{
    {
   Initialize $\bv_j^0 = \bv$\\
   \For{$t = 0,...,K-1 $}
   {$\bv^t = \bv^{t-1} - \eta  {\alpha}(j) N {\nabla} f_j(\bv_j^{t-1};\xi_j^{t-1})$
   }
   
   } 
   \textbf{Output:} $\bv^K$ \;}
\end{algorithm2e} 
\begin{theorem}\label{thm:single loop}
  Let Assumptions~\ref{assump: smooth} to~\ref{assumption: bounded grad} to be satisfied. Assume $\balpha_i^*$ is the solution of~\eqref{eq:relaxed-alpha-final}. Then if we run  Algorithm~\ref{algorithm: Single PERM} with $\eta = \Theta\pare{\frac{\log(NKR^3)}{\mu R}}$ and $\gamma = \Theta\pare{\frac{\log(NKR^3)}{\mu R}}$, it will output the solution $\hat{\bv}_i$, $\forall i \in [N]$, such that with probability at least $1-p$, the following statement holds:
    \begin{align*}
       \E[\Phi({\balpha}^*_i, \hat{\bv}_i) - \Phi( {\balpha}^*_i,  \bv^*_i)]  & \leq O\pare{ \frac{LD^2}{NKR^3} }   + \tilde{O} \pare{   \pare{  \frac{\kappa^4L}{  R^2}  + \frac{NL}{\mu^2 R^2}   } G^2 {N\log(1/p)} + \frac{L \delta^2}{\mu R}     } \\
        &  + \kappa^2_\Phi L\tilde{O}\pare{  \frac{ \kappa^2\kappa_g^2  L^2  \bar\zeta_i(\bw^*)  DG}{R } +  R^2  \exp\pare{ -\frac{T_{\balpha}}{\kappa_g}}     +  \frac{  L  \kappa^2 \kappa_g^2    \bar\zeta_i(\bw^*)   \delta^2}{\mu^2 M  } },\vspace{-5mm}
    \end{align*}
    where $\kappa = \frac{L}{\mu}, \kappa_g = \frac{ n_{\max}}{2\lambda}, \kappa_\Phi = \frac{\sqrt{N}G}{\mu}$ and the expectation is taken over the randomness of stochastic samples in Algorithm~\ref{algorithm: Single PERM}.
    That is, to guarantee $\E[\Phi({\balpha}^*_i, \hat{\bv}_i) - \Phi( {\balpha}^*_i,  \bv^*_i)] \leq \epsilon$, we choose $M = O\pare{\frac{L^2 \kappa^2\kappa_g^2  \kappa^2_\Phi\bar\zeta_i(\bw^*)\delta^2}{\mu^2 \epsilon}}$, $R = O\pare{\max\left\{\frac{L\delta^2}{\mu \epsilon},\frac{\kappa^2_\Phi\kappa^2\kappa_g^2  L^3  \bar\zeta_i(\bw^*)  DG}{\epsilon } \right\} }$ and $T_{\balpha} = O\pare{\kappa_g \log\pare{\frac{LR^2}{\epsilon}}}$.
\end{theorem}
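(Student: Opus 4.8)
The plan is to reduce the single-loop guarantee to three decoupled error sources and then recombine them. The structural fact driving everything is that $\Phi(\balpha,\bv)=\frac1N\sum_j\alpha(j)f_j(\bv)$ is \emph{linear} in $\balpha$ and $\frac{\mu}{N}$-strongly convex, $\frac{L}{N}$-smooth in $\bv$, so that the minimizer map $\bv^*(\balpha):=\argmin_{\bv\in\cW}\Phi(\balpha,\bv)$ is $\kappa_\Phi$-Lipschitz with $\kappa_\Phi=\sqrt N G/\mu$ (implicit function theorem: $\nabla^2_{\bv\bv}\Phi\succeq(\mu/N)I$ and $\norm{\nabla^2_{\bv\,\alpha(j)}\Phi}\le G/N$). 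Since $\bv^*_i=\bv^*(\balpha^*_i)$ minimizes the smooth convex $\Phi(\balpha^*_i,\cdot)$, I would first bound the target gap by a distance, $\Phi(\balpha^*_i,\hat\bv_i)-\Phi(\balpha^*_i,\bv^*_i)\le \frac{G}{N}\norm{\hat\bv_i-\bv^*_i}+\frac{L}{2N}\norm{\hat\bv_i-\bv^*_i}^2$, and then use the output step $\hat\bv_i=\cP_\cW(\bv^R_i-(1/L)\nabla_\bv\Phi(\balpha^R_i,\bv^R_i))$ together with the triangle inequality $\norm{\hat\bv_i-\bv^*(\balpha^*_i)}\le\norm{\hat\bv_i-\bv^*(\balpha^R_i)}+\kappa_\Phi\norm{\balpha^R_i-\balpha^*_i}$ to split the problem into (a) the shuffling-SGD distance $\norm{\bv^R_i-\bv^*(\balpha^R_i)}$ to the \emph{running} objective's minimizer (the projected gradient output step is non-expansive toward $\bv^*(\balpha^R_i)$), and (b) the mixing-parameter error $\norm{\balpha^R_i-\balpha^*_i}$.

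Next I would bound the two upstream quantities. The global iterate $\bw^r$ is produced by plain minibatch SGD on $F=\frac1N\sum_j f_j$ with batch $M$ and step $\gamma=\Theta(\log(NKR^3)/(\mu R))$, so strong convexity gives $\E\norm{\bw^r-\bw^*}^2\le\tilde{O}(\exp(-\mu\gamma r)D^2+\delta^2/(\mu^2 M))$; this is exactly where the residual $\delta^2/M$ term, which cannot be driven down by $R$, first appears and forces the large-$M$ choice. I would then feed this into the ``more Lipschitz near $\bw^*$'' estimate quoted before Assumption~\ref{assump: smooth} (made rigorous by Lemma~\ref{lem: lipschitz alpha}) together with the linear convergence of the $T_{\balpha}$ inner GD steps on the $\kappa_g$-conditioned strongly convex map $g_i(\bw^r,\cdot)$ to obtain a per-epoch bound of the form $\E\norm{\balpha^r_i-\balpha^*_i}^2\le\tilde{O}\pare{\exp(-T_{\balpha}/\kappa_g)+\kappa_g^2\bar\zeta_i(\bw^*)L^2\,\E\norm{\bw^r-\bw^*}^2}$. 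This is essentially a time-indexed version of Theorem~\ref{thm: 2stage alpha}, now evaluated at the running global model rather than the final one.

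The crux is the personalized update, which is shuffling SGD on a \emph{time-varying} objective: during epoch $r$ each client runs $K$-step \texttt{SGD-Update}s through a random permutation $\sigma_r$ of the components of $\Phi(\balpha^r_i,\cdot)$, so one epoch behaves like a single (scaled) gradient step on $\Phi(\balpha^r_i,\cdot)$ plus a shuffling bias and stochastic noise. The plan is to track $d_r:=\E\norm{\bv^r_i-\bv^*(\balpha^*_i)}^2$ against the \emph{fixed} target $\bv^*(\balpha^*_i)$, so that using $\balpha^r_i$ instead of $\balpha^*_i$ injects an additive gradient perturbation of size $\norm{\nabla_\bv\Phi(\balpha^r_i,\cdot)-\nabla_\bv\Phi(\balpha^*_i,\cdot)}\le(G/\sqrt N)\norm{\balpha^r_i-\balpha^*_i}$. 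Strong convexity then yields a one-epoch contraction $d_{r+1}\le(1-c)d_r+(\text{shuffling } O(1/R^2))+(\text{variance } O(\delta^2/R))+c'\,\E\norm{\balpha^r_i-\balpha^*_i}^2$, which I would unroll as a geometric sum so that the dominant $\balpha$-contribution comes from the late epochs where $\E\norm{\balpha^r_i-\balpha^*_i}^2$ is smallest. Summing the shuffling and noise terms recovers the $\tilde{O}(D^2L/(NKR^3))+L\delta^2/(\mu R)$ rates, while the permutation-dependent and stochastic fluctuations are upgraded to the high-probability $G^2N\log(1/p)$ terms by a martingale/bounded-difference concentration over the $\{\sigma_r\}$ and the gradient samples, with the expectation retained over the remaining stochastic noise.

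I expect the main obstacle to be exactly this three-way coupling: the personalized recursion, the $\balpha$-recursion, and the global-model recursion all run concurrently, so one must choose a joint Lyapunov potential (or a careful induction over epochs) that simultaneously tracks $\E\norm{\bw^r-\bw^*}^2$, $\E\norm{\balpha^r_i-\balpha^*_i}^2$, and $d_r$, and shows that the moving-target drift $\norm{\bv^*(\balpha^r_i)-\bv^*(\balpha^*_i)}\le\kappa_\Phi\norm{\balpha^r_i-\balpha^*_i}$ does not destroy the contraction. Keeping the shuffling-SGD bias analysis valid under this drift, and cleanly separating which error terms vanish with $R$ from the residual controlled only by $M$, is the delicate part. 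Finally I would recombine parts (a) and (b), substitute the per-epoch $\balpha$-bound and global-model bound into the unrolled recursion, and plug in the parameter choices $M$, $R$, $T_{\balpha}$ to reach the stated $\epsilon$-bound.
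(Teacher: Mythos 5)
Your three-source decomposition (global SGD on $\bw$ feeding the $\balpha$-estimate through the ``more Lipschitz near $\bw^*$'' lemma; shuffling SGD for the personalized iterate; recombination at the end) mirrors the paper's architecture, but your endgame has a genuine gap: the conversion from distance to function gap. You bound $\Phi(\balpha^*_i,\hat\bv_i)-\Phi(\balpha^*_i,\bv^*_i)\le \frac{G}{N}\norm{\hat\bv_i-\bv^*_i}+\frac{L}{2N}\norm{\hat\bv_i-\bv^*_i}^2$ and use the output step only as a non-expansive map. Since your recursion controls $\E\norm{\hat\bv_i-\bv^*_i}^2$, the \emph{linear} term is of order the square root of everything you proved: e.g.\ it turns the variance contribution into $\frac{G}{N}\cdot\delta/(\mu\sqrt{R})$ instead of the claimed $L\delta^2/(\mu R)$, and $\exp(-T_{\balpha}/(2\kappa_g))$ instead of $\exp(-T_{\balpha}/\kappa_g)$; and in the constrained setting $\nabla_{\bv}\Phi(\balpha^*_i,\bv^*_i)\neq 0$, so this term cannot be dropped by convexity or a variational inequality. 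The paper's Lemma~\ref{lem:opt gap} is precisely the device that eliminates it: it analyzes the extra projected-gradient output step $\hat\bv_i=\cP_{\cW}\pare{\bv^R_i-(1/L)\nabla_{\bv}\Phi(\balpha^R_i,\bv^R_i)}$ through its projection optimality condition, combined with smoothness and strong convexity of $\Phi(\balpha^*_i,\cdot)$ and the $\kappa_\Phi$-Lipschitzness of $\bv^*(\cdot)$, to obtain the purely quadratic bound $\Phi(\balpha^*_i,\hat\bv_i)-\Phi(\balpha^*_i,\bv^*_i)\le 2L\norm{\bv^R_i-\bv^*(\balpha^R_i)}^2+\pare{2\kappa_\Phi^2L+4NG^2/L}\norm{\balpha^R_i-\balpha^*_i}^2$. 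Treating that final step as mere non-expansion discards its entire purpose, and as written your route proves a strictly weaker theorem than the statement.

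The other deviation is legitimate and worth noting. The paper tracks the \emph{moving} target, $\norm{\bv^r-\bv^*(\balpha^r)}^2$, and pays a per-epoch drift $\kappa_\Phi^2\norm{\balpha^{r+1}-\balpha^r}^2$ controlled by Lemma~\ref{lem: iterate diff alpha}, which combines Lemma~\ref{lem: lipschitz alpha} with the SGD \emph{iterate-difference} bound \eqref{eq: SGD convergence 2}; this $O(\gamma^2)$-small drift is exactly where the non-decaying, $M$-controlled residual $L^2\kappa^2\kappa_g^2\kappa_\Phi^2\bar\zeta_i(\bw^*)\delta^2/(\mu^2 M)$ enters, with $\norm{\balpha^R_i-\balpha^*_i}^2$ handled separately by Lemma~\ref{lem:conv alpha perm} via the distance bound \eqref{eq: SGD convergence 3}. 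Your fixed-target recursion with a gradient perturbation proportional to $\norm{\balpha^r_i-\balpha^*_i}$, unrolled with geometric weighting so that late epochs dominate, is a workable alternative and would yield comparable terms from the same two SGD estimates; your remaining ingredients (linear convergence of the inner $T_{\balpha}$ GD steps, and concentration of partial permutation sums, for which the paper uses the without-replacement inequality behind Lemma~\ref{lem:concentration of grad} rather than a generic martingale argument) match the paper. In short: repair the function-gap conversion with the Lemma~\ref{lem:opt gap}-style analysis of the output step; the rest is sound modulo bookkeeping.
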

Compared to Theorem~\ref{thm:2stagec perm}, we achieve  a slightly worse rate, since we need a large mini-batch when we update global model $\bw$. However, the advantages of the single-loop algorithm are two-fold. First, as we mentioned in the previous subsection, we have the freedom to optimize $\hat\balpha_i$ to arbitrary accuracy, while in double loop algorithm (Algorithm~\ref{algorithm: alpha} + Algorithm~\ref{algorithm: KSGD}), once we get $\hat \balpha_i$, we do not have the chance to further refine it. Second, in practice, a single-loop algorithm is often easier to implement and can make better use of  caches by operating on data sequentially, leading to improved performance, especially on modern processors with complex memory hierarchies.

\vspace{-4pt}\subsection{Extension to heterogeneous model setting}\vspace{-4pt}
In the homogeneous model setting, we assumed a shared model space $\mathcal{W}$ for clients and the server. However, in real-world FL applications, devices have diverse resources and can only train models that match their capacities. We demonstrate that the \ref{eq:perm} paradigm can be extended to support learning in model-heterogeneous settings, where different models with varying capacities are used by the server and clients. Focusing on learning the global optimal model to estimate pairwise statistical discrepancies, we note that by utilizing partial training methods~\cite{alam2022fedrolex}, where at each communication round a sub-model with a size proportional to resources of each client is sampled from the server's global model (extracted  either random, static, or rolling) and is transmitted  to be updated locally. Upon receiving updated sub-models, the server can  simply  aggregate (average) heterogeneous sub-model updates sent from the clients to update the global model.  We can consider the complexity of models used by clients when estimating mixing parameters by solving a modified version of (\ref{eq:relaxed-alpha-final}) as: 
\begin{equation*}
     \sum_{j=1}^{N}{\alpha(j) \sqrt{{{\text{\sffamily VC}}(\mathcal{H}_j)}/{n_j}}} + \sum_{j=1}^{N}{\alpha(j)\|\nabla f_i(\bm{m}_i\odot\bm{w}^*) - \nabla f_j(\bm{m}_j\odot\bm{w}^*)\|^2}+ \lambda \sum_{j=1}^{N} \alpha(j)^2/n_j,\vspace{-2mm}
\end{equation*}
where we simply upper bounded the Rademacher complexity w.r.t. each data source in~(\ref{eqn:generalization}) with {\sffamily{VC}} dimension~\cite{mohri2018foundations}. Here $\bm{m}_i$ is the masking operator to extract a sub-model of the global model to compute local gradients at client $i$ based on its available resources.  By doing so, we can adjust mixing parameters based on the complexity of underlying models, as different sub-models of the global model (i.e., $\bm{m}_i\odot\bm{w}^*$ versus $\bm{m}_j\odot\bm{w}^*$) are used to compute drift between pair of gradients at the optimal solution. With regards to training personalized models with heterogeneous local models,  as we solve a distinct aggregated empirical loss for each client by interleaving permutations and shuffling models,  we can utilize different model spaces $\cW_i, i=1, \ldots, N$ for different clients that meet their available resources with aforementioned partial training strategies. 

\section{Experimental Results} \label{sec:exp}
In this section we benchmark the effectiveness of PERM on synthetic data with 50 clients, where it notably outshone other renowned methods as evident in Figure~\ref{fig:synthetic}.  Our experiments concluded with the CIFAR10 dataset, employing a 2-layer convolutional neural network, where PERM, despite a warm-up phase, demonstrated unmatched convergence performance (Figure~\ref{fig:cifar10}).  Additional experiments are reported in the appendix. Across all datasets, the PERM algorithm consistently showcased its robustness and unmatched efficiency in the realm of personalized federated learning.

\noindent\textbf{Experiment on synthetic data.}~To demonstrate the superior effectiveness of our proposed single-loop PERM algorithm compared to other existing personalization methods, we conducted an experiment using synthetic data generated according to the following specifications. We consider a scenario with a total of $N$ clients, where we draw samples from the distribution $\mathcal{N}(\bm{\mu}_1,\bm{\Sigma}_i)$ for half of the clients, denoted by $i \in [1,\frac{N}{2}]$, and from $\mathcal{N}(\bm{\mu}_2,\bm{\Sigma}_i)$ for the remaining clients, denoted by $i \in (\frac{N}{2},N]$. Following the approach outlined in~\cite{li2018federated}, we adopt a uniform variance for all samples, with ${\Sigma}_{k,k} = k^{-1.2}$. Subsequently, we generate a labeling model using the distribution $\mathcal{N}(\bm{\mu}_w,\bm{\Sigma}_w)$. 

Given a data sample $\bx\in\R^d$, the labels are generated as follows: clients $1,...,\frac{N}{2}$ assign labels based on $y = \sign(\bw^\top \bx)$, while clients $\frac{N}{2}+1,...,N$ assign labels based on $y = \sign(-\bw^\top \bx)$. For this specific experiment, we set $\mu_1=0.2$, $\mu_2=-0.2$, and $\mu_w=0.1$. The data dimension is $d=60$, and there are 2 classes in the output. We have a total of 50 clients, each generating $500$ samples following the aforementioned guidelines. We train a logistic regression model on each client's data.

\begin{figure}[t!]
  \centering
  \begin{subfigure}[b]{0.4\textwidth}
    \centering
    \includegraphics[width=\textwidth]{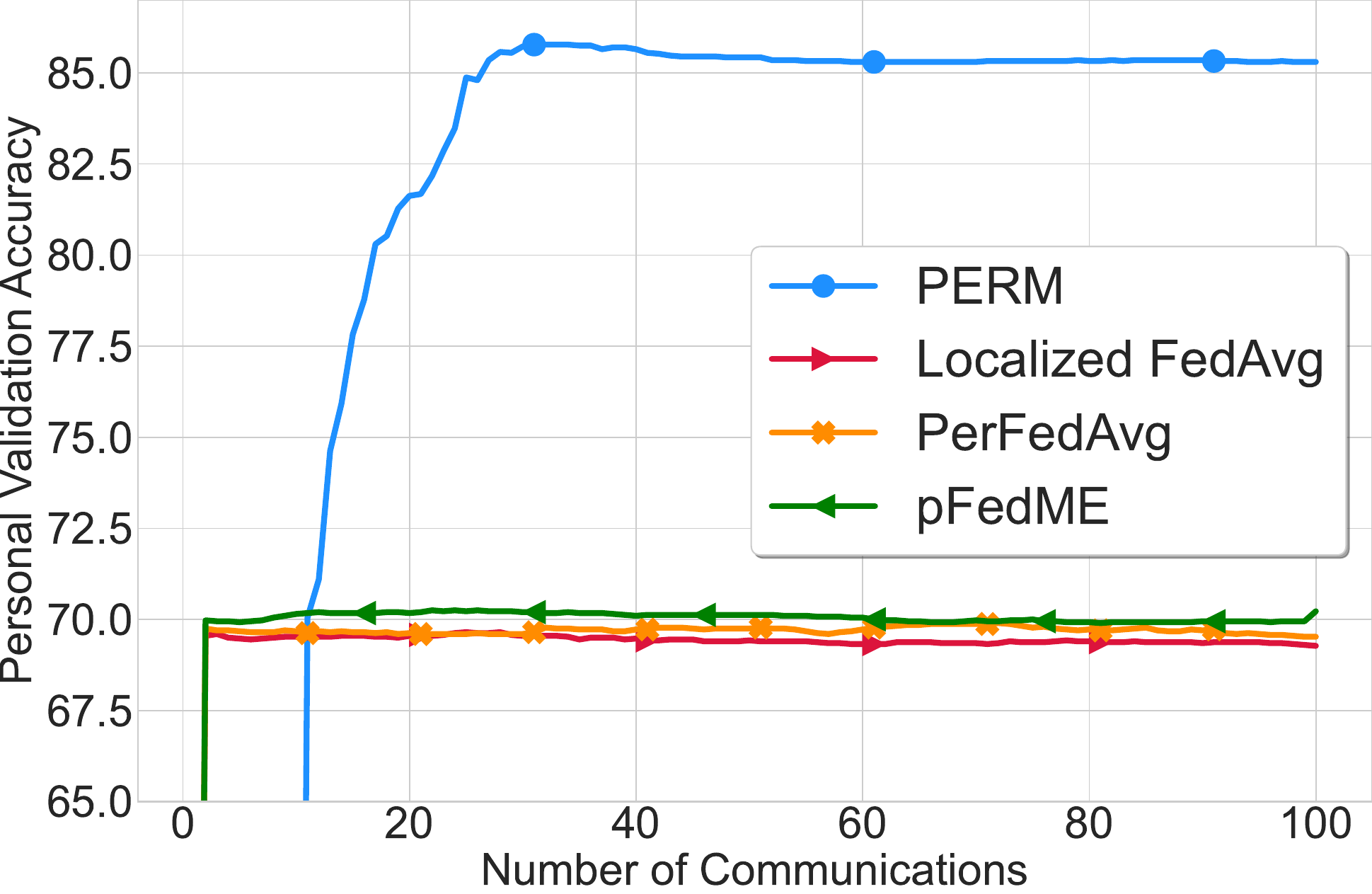}
    \caption{Personalized Accuracy}
    \label{fig:syn-acc}
  \end{subfigure}
  \hspace{20pt}
  \begin{subfigure}[b]{0.4\textwidth}
    \centering
   \includegraphics[width=\textwidth]{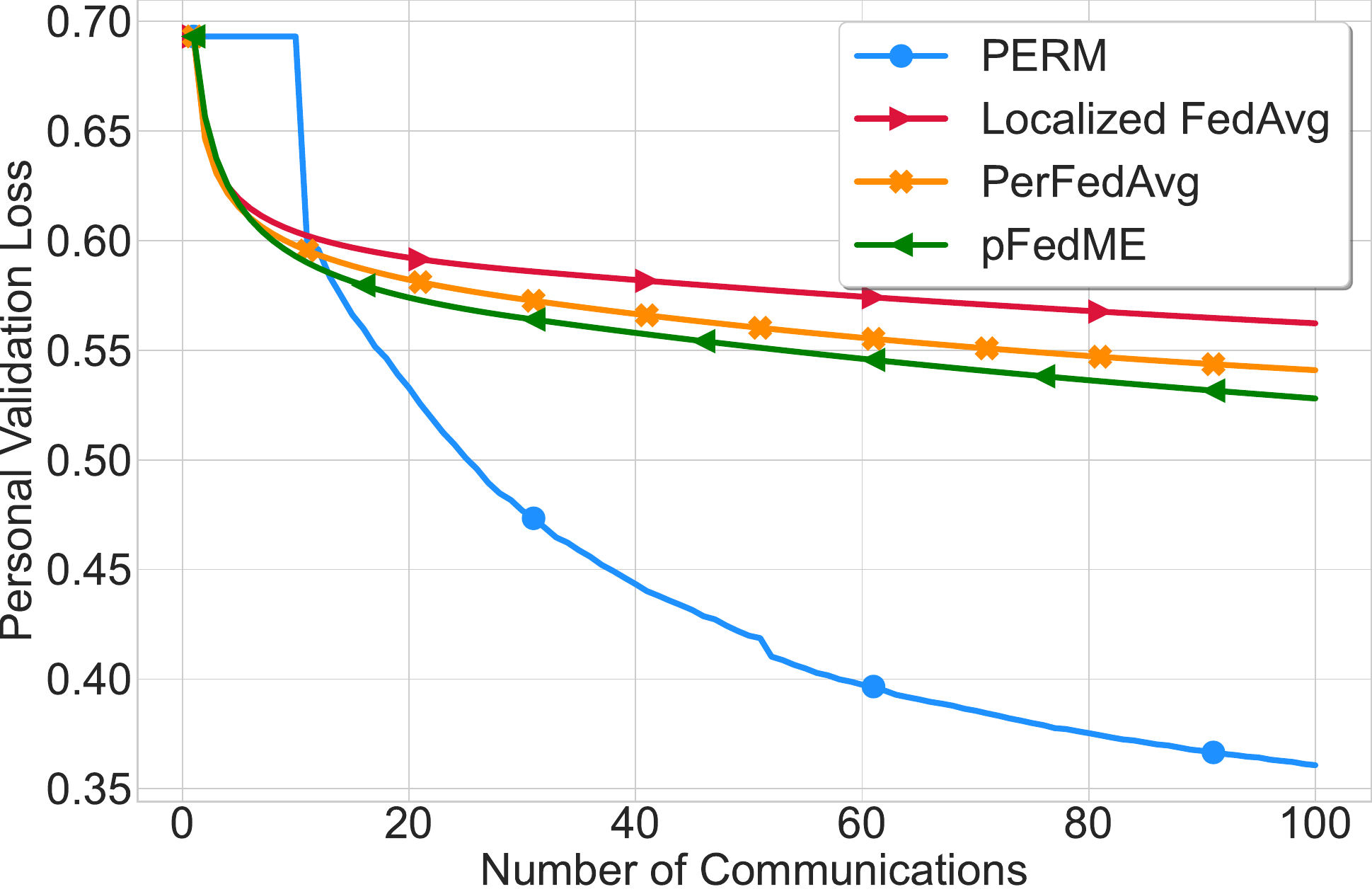}
    \caption{Personalized Loss}
    \label{fig:syn-loss}
  \end{subfigure}\vspace{-1mm}
  \caption{Comparative analysis of personalization methods, including our single-loop PERM algorithm, localized FedAvg, perFedAvg, and pFedME, with synthetic data. The disparity in personalized accuracy and loss highlights PERM's capability to leverage relevant client correlations.}
  \label{fig:synthetic}\vspace{-5mm}
\end{figure}

To demonstrate the superiority of our PERM algorithm, we conducted a performance comparison against other prominent personalized approaches, including the fined-tuned model of FedAvg~\cite{mcmahan2017communication} (referred to as localized FedAvg), perFedAg~\cite{fallah2020personalized}, and pFedME~\cite{t2020personalized}. The results in Figure~\ref{fig:synthetic} highlight PERM's efficient learning of personalized models for individual clients. In contrast, competing methods relying on globally trained models struggle to match PERM's effectiveness in highly heterogeneous scenarios, as seen in personalized accuracy and loss. This showcases PERM's exceptional ability to leverage relevant client learning. 


\begin{figure}[t!]
  \centering 
  \begin{subfigure}[b]{0.40\textwidth}
    \centering 
    \includegraphics[width=\textwidth]{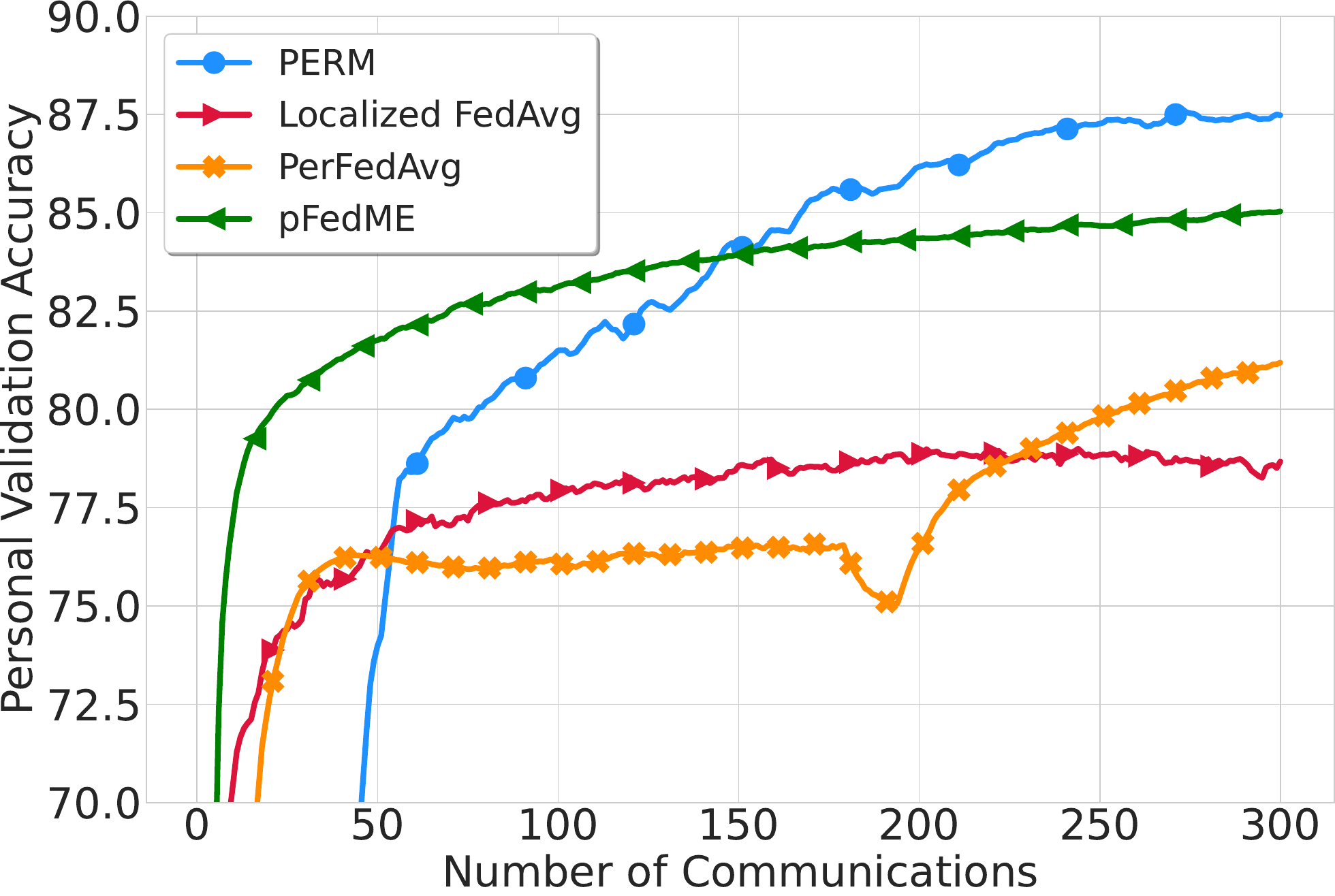}
    \caption{Personalized Accuracy}
    \label{fig:cifar10_acc}
  \end{subfigure}
  \hspace{20pt}
  \begin{subfigure}[b]{0.40\textwidth}
    \centering
    \includegraphics[width=\textwidth]{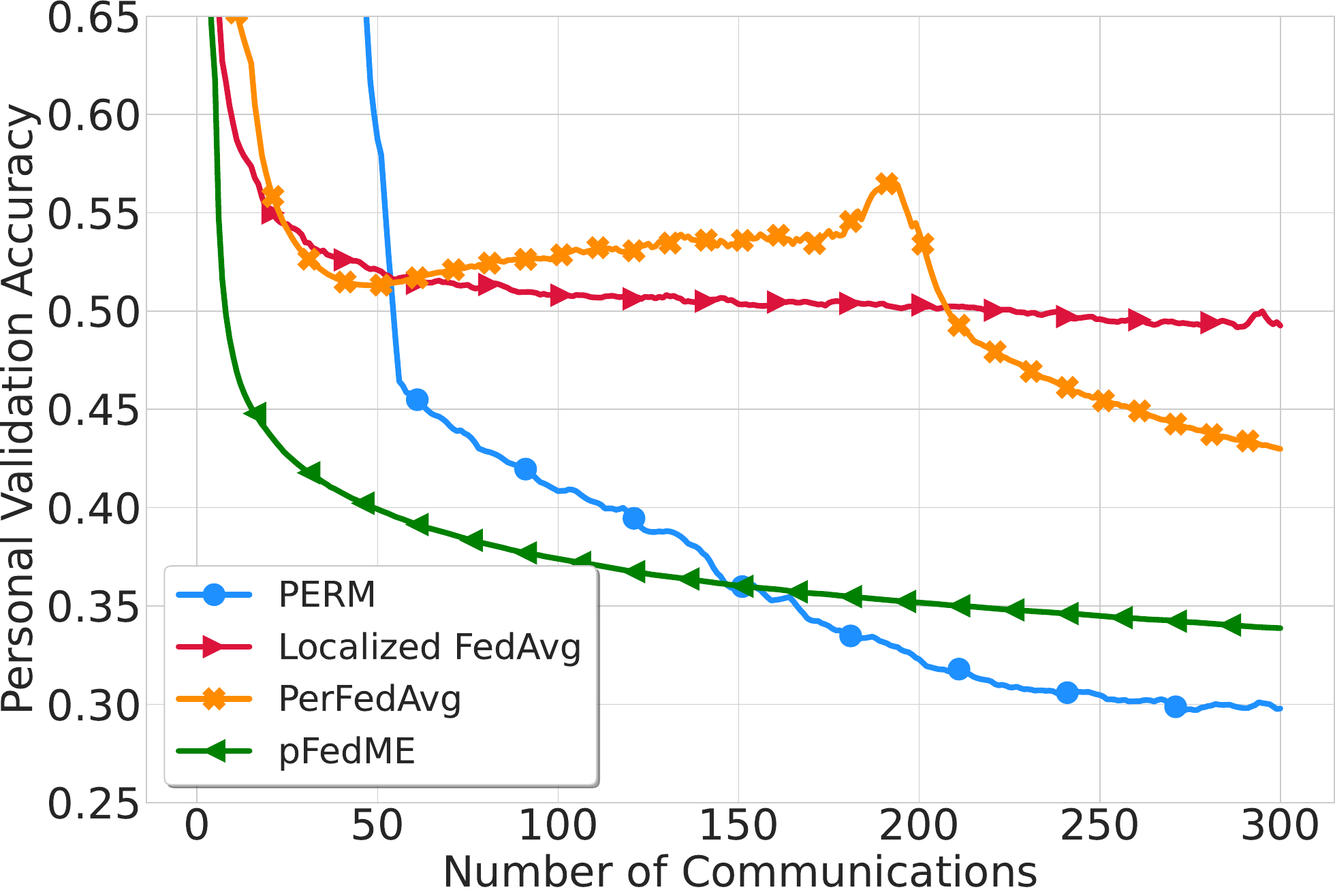}
    \caption{Personalized Loss}
    \label{fig:cifar10_loss}
  \end{subfigure}\vspace{-1mm}
  \caption{Comparative analysis  of our single-loop PERM algorithm, localized FedAvg, PFedMe, and perFedAg, on CIFAR10 dataset and a 2-layer CNN model. Each client has access to only 2 classes of data. PERM rapidly catches up after 10 rounds of warmup without personalization involved.}
  \label{fig:cifar10}\vspace{-4mm}
\end{figure}

\begin{wrapfigure}{r}{0.5\textwidth}
    \centering\vspace{-2mm}
    \includegraphics[width=0.5\textwidth]{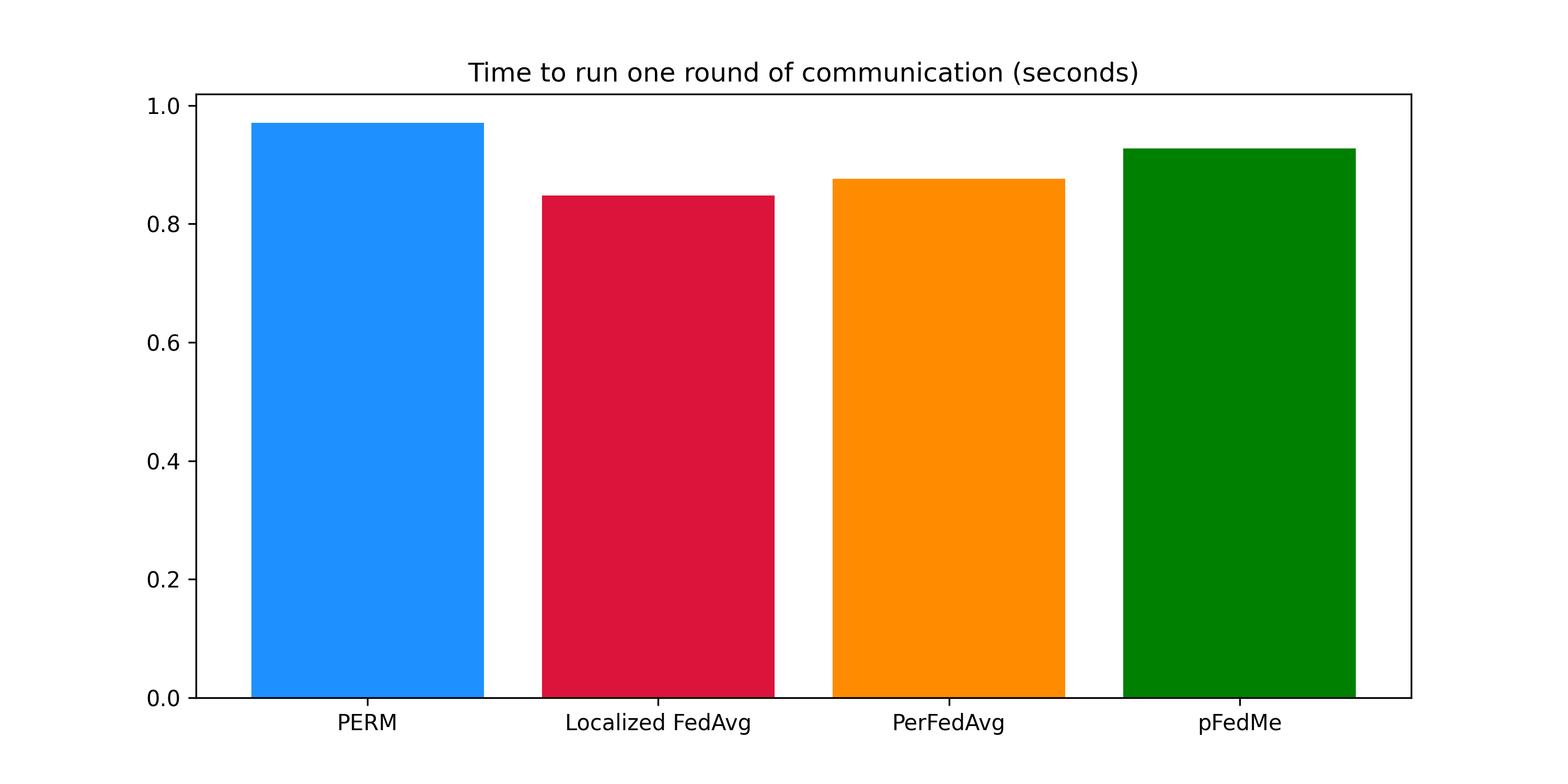}\vspace{-2mm}
    \caption{Runtime of different  algorithms in a limited environment. We compare  PERM (single loop), PerFedAvg, FedAvg, and pFedMe. PERM has a minimal overhead over FedAvg and is comparable to other personalization methods.}\vspace{-3mm}
    \label{fig:time}
\end{wrapfigure}

\noindent\textbf{Experiment on CIFAR10 dataset.}~We extend our experimentation to the CIFAR10 dataset using a 2-layer convolutional neural network. During this test, 50 clients participate, each limited to data from just 2 classes, resulting in a pronounced heterogeneous data distribution. We benchmark our algorithm against  PerFedAvg, PFedMe, and the localized FedAvg. As illustrated in Figure~\ref{fig:cifar10},  PERM  demonstrates superior convergence performance compared to other personalized strategies. It's noteworthy that PERM's initial personalized validation is significantly lower than that of approaches like PerFedAvg and PFedMe. This discrepancy stems from our choice to implement 10 communication rounds as a warm-up phase before initiating personalization, whereas other models embark on personalization right from the outset.

\noindent\textbf{Computational overhead.} In demonstrating the computational efficiency of the proposed PERM algorithm, we present a comparison of wall-clock time of completing one round of communication of PERM and other methods. Each method undertakes 20 local steps along with their distinct computations for personalization. As depicted in Figure~\ref{fig:time}, the PERM (single loop) algorithm's runtime is compared against personalization methods such as PerFedAvg, FedAvg, and pFedMe. Remarkably, PERM maintains a notably minimal computational overhead. The run-time is slightly worse due to overhead of estimating mixing parameters.

\vspace{-2mm}\section{Discussion \& Conclusion}\vspace{-2mm}
This paper introduces a new \textit{data\&system-aware} paradigm for learning from multiple heterogeneous data sources to achieve optimal statistical accuracy across all data distributions without imposing stringent constraints on computational resources shared by participating devices.  The proposed PERM schema, though simple, provides an efficient solution to enable each client to learn a personalized model by \textit{learning who to learn with} via personalizing the aggregation of data sources through an efficient empirical statistical discrepancy estimation module.  To efficiently solve all aggregated personalized losses, we propose a model shuffling idea to optimize all losses in parallel.  PERM can also be employed in other learning settings with multiple sources of data such as domain adaptation and multi-task learning to entail optimal statistical accuracy.   

We would like to embark on the scalability of PERM.  The compute burden on clients and servers is roughly the same as existing methods thanks to shuffling (except for extra overhead due to estimating mixing parameters which is the same as running FedAvg in a two-stage approach and an extra communication in an interleaved approach). The only hurdle would be the required \textit{memory at server} to maintain mixing parameters, which scales proportionally to the square of the number of clients,
 which can be alleviated by clustering devices which we leave as a future work. 


 \clearpage

 \section*{Acknowledgement}

This work was partially supported by NSF CAREER Award \#2239374 NSF CNS   Award \#1956276. 
\bibliographystyle{unsrt}  
\bibliography{references.bib}  

\begin{thebibliography}{10}

\bibitem{kairouz2019advances}
Peter Kairouz, H.~Brendan McMahan, Brendan Avent, Aurélien Bellet, Mehdi
  Bennis, Arjun~Nitin Bhagoji, Keith Bonawitz, Zachary Charles, Graham Cormode,
  Rachel Cummings, Rafael G.~L. D'Oliveira, Salim~El Rouayheb, David Evans,
  Josh Gardner, Zachary Garrett, Adrià Gascón, Badih Ghazi, Phillip~B.
  Gibbons, Marco Gruteser, Zaid Harchaoui, Chaoyang He, Lie He, Zhouyuan Huo,
  Ben Hutchinson, Justin Hsu, Martin Jaggi, Tara Javidi, Gauri Joshi, Mikhail
  Khodak, Jakub Konečný, Aleksandra Korolova, Farinaz Koushanfar, Sanmi
  Koyejo, Tancrède Lepoint, Yang Liu, Prateek Mittal, Mehryar Mohri, Richard
  Nock, Ayfer Özgür, Rasmus Pagh, Mariana Raykova, Hang Qi, Daniel Ramage,
  Ramesh Raskar, Dawn Song, Weikang Song, Sebastian~U. Stich, Ziteng Sun,
  Ananda~Theertha Suresh, Florian Tramèr, Praneeth Vepakomma, Jianyu Wang,
  Li~Xiong, Zheng Xu, Qiang Yang, Felix~X. Yu, Han Yu, and Sen Zhao.
\newblock Advances and open problems in federated learning.
\newblock {\em Foundations and Trends{\textregistered} in Machine Learning},
  2021.

\bibitem{mansour2020three}
Yishay Mansour, Mehryar Mohri, Jae Ro, and Ananda~Theertha Suresh.
\newblock Three approaches for personalization with applications to federated
  learning.
\newblock {\em arXiv preprint arXiv:2002.10619}, 2020.

\bibitem{li2021federated}
Chengxi Li, Gang Li, and Pramod~K Varshney.
\newblock Federated learning with soft clustering.
\newblock {\em IEEE Internet of Things Journal}, 9(10):7773--7782, 2021.

\bibitem{ghosh2020efficient}
Avishek Ghosh, Jichan Chung, Dong Yin, and Kannan Ramchandran.
\newblock An efficient framework for clustered federated learning.
\newblock {\em Advances in Neural Information Processing Systems},
  33:19586--19597, 2020.

\bibitem{ma2022convergence}
Jie Ma, Guodong Long, Tianyi Zhou, Jing Jiang, and Chengqi Zhang.
\newblock On the convergence of clustered federated learning.
\newblock {\em arXiv preprint arXiv:2202.06187}, 2022.

\bibitem{eichner2019semi}
Hubert Eichner, Tomer Koren, Brendan Mcmahan, Nathan Srebro, and Kunal Talwar.
\newblock Semi-cyclic stochastic gradient descent.
\newblock In {\em Proceedings of the 36th International Conference on Machine
  Learning, PMLR}, volume~97, 2019.

\bibitem{t2020personalized}
Canh T~Dinh, Nguyen Tran, and Tuan~Dung Nguyen.
\newblock Personalized federated learning with moreau envelopes.
\newblock {\em Advances in Neural Information Processing Systems}, 33, 2020.

\bibitem{huang2020personalized}
Yutao Huang, Lingyang Chu, Zirui Zhou, Lanjun Wang, Jiangchuan Liu, Jian Pei,
  and Yong Zhang.
\newblock Personalized federated learning: An attentive collaboration approach.
\newblock {\em arXiv preprint arXiv:2007.03797}, 2020.

\bibitem{fallah2020personalized}
Alireza Fallah, Aryan Mokhtari, and Asuman Ozdaglar.
\newblock Personalized federated learning: A meta-learning approach.
\newblock {\em Advances in Neural Information Processing Systems}, 2020.

\bibitem{smith2017federated}
Virginia Smith, Chao-Kai Chiang, Maziar Sanjabi, and Ameet~S Talwalkar.
\newblock Federated multi-task learning.
\newblock In {\em Advances in Neural Information Processing Systems}, pages
  4424--4434, 2017.

\bibitem{deng2020adaptive}
Yuyang Deng, Mohammad~Mahdi Kamani, and Mehrdad Mahdavi.
\newblock Adaptive personalized federated learning.
\newblock {\em arXiv preprint arXiv:2003.13461}, 2020.

\bibitem{hanzely2021personalized}
Filip Hanzely, Boxin Zhao, and Mladen Kolar.
\newblock Personalized federated learning: A unified framework and universal
  optimization techniques.
\newblock {\em arXiv preprint arXiv:2102.09743}, 2021.

\bibitem{lancaster2000incidental}
Tony Lancaster.
\newblock The incidental parameter problem since 1948.
\newblock {\em Journal of econometrics}, 95(2):391--413, 2000.

\bibitem{mcmahan2017communication}
Brendan McMahan, Eider Moore, Daniel Ramage, Seth Hampson, and Blaise~Aguera
  y~Arcas.
\newblock Communication-efficient learning of deep networks from decentralized
  data.
\newblock In {\em Artificial intelligence and statistics}, pages 1273--1282.
  PMLR, 2017.

\bibitem{karimireddy2020scaffold}
Sai~Praneeth Karimireddy, Satyen Kale, Mehryar Mohri, Sashank Reddi, Sebastian
  Stich, and Ananda~Theertha Suresh.
\newblock Scaffold: Stochastic controlled averaging for federated learning.
\newblock In {\em International Conference on Machine Learning}, pages
  5132--5143. PMLR, 2020.

\bibitem{hamer2020fedboost}
Jenny Hamer, Mehryar Mohri, and Ananda~Theertha Suresh.
\newblock Fedboost: A communication-efficient algorithm for federated learning.
\newblock In {\em International Conference on Machine Learning}, pages
  3973--3983. PMLR, 2020.

\bibitem{haddadpour2021federated}
Farzin Haddadpour, Mohammad~Mahdi Kamani, Aryan Mokhtari, and Mehrdad Mahdavi.
\newblock Federated learning with compression: Unified analysis and sharp
  guarantees.
\newblock In {\em International Conference on Artificial Intelligence and
  Statistics}, pages 2350--2358. PMLR, 2021.

\bibitem{sunfedspeed}
Yan Sun, Li~Shen, Tiansheng Huang, Liang Ding, and Dacheng Tao.
\newblock Fedspeed: Larger local interval, less communication round, and higher
  generalization accuracy.
\newblock In {\em The Eleventh International Conference on Learning
  Representations}, 2023.

\bibitem{yang2022anarchic}
Haibo Yang, Xin Zhang, Prashant Khanduri, and Jia Liu.
\newblock Anarchic federated learning.
\newblock In {\em International Conference on Machine Learning}, pages
  25331--25363. PMLR, 2022.

\bibitem{wang2022unified}
Shiqiang Wang and Mingyue Ji.
\newblock A unified analysis of federated learning with arbitrary client
  participation.
\newblock {\em Advances in Neural Information Processing Systems},
  35:19124--19137, 2022.

\bibitem{wu2022communication}
Chuhan Wu, Fangzhao Wu, Lingjuan Lyu, Yongfeng Huang, and Xing Xie.
\newblock Communication-efficient federated learning via knowledge
  distillation.
\newblock {\em Nature communications}, 13(1):1--8, 2022.

\bibitem{he2020group}
Chaoyang He, Murali Annavaram, and Salman Avestimehr.
\newblock Group knowledge transfer: Federated learning of large cnns at the
  edge.
\newblock {\em Advances in Neural Information Processing Systems},
  33:14068--14080, 2020.

\bibitem{lin2020ensemble}
Tao Lin, Lingjing Kong, Sebastian~U Stich, and Martin Jaggi.
\newblock Ensemble distillation for robust model fusion in federated learning.
\newblock {\em Advances in Neural Information Processing Systems},
  33:2351--2363, 2020.

\bibitem{itahara2021distillation}
Sohei Itahara, Takayuki Nishio, Yusuke Koda, Masahiro Morikura, and Koji
  Yamamoto.
\newblock Distillation-based semi-supervised federated learning for
  communication-efficient collaborative training with non-iid private data.
\newblock {\em IEEE Transactions on Mobile Computing}, 22(1):191--205, 2021.

\bibitem{diao2020heterofl}
Enmao Diao, Jie Ding, and Vahid Tarokh.
\newblock Heterofl: Computation and communication efficient federated learning
  for heterogeneous clients.
\newblock {\em arXiv preprint arXiv:2010.01264}, 2020.

\bibitem{horvath2021fjord}
Samuel Horvath, Stefanos Laskaridis, Mario Almeida, Ilias Leontiadis, Stylianos
  Venieris, and Nicholas Lane.
\newblock Fjord: Fair and accurate federated learning under heterogeneous
  targets with ordered dropout.
\newblock {\em Advances in Neural Information Processing Systems},
  34:12876--12889, 2021.

\bibitem{caldas2018expanding}
Sebastian Caldas, Jakub Kone{\v{c}}ny, H~Brendan McMahan, and Ameet Talwalkar.
\newblock Expanding the reach of federated learning by reducing client resource
  requirements.
\newblock {\em arXiv preprint arXiv:1812.07210}, 2018.

\bibitem{alam2022fedrolex}
Samiul Alam, Luyang Liu, Ming Yan, and Mi~Zhang.
\newblock Fedrolex: Model-heterogeneous federated learning with rolling
  sub-model extraction.
\newblock {\em Advances in Neural Information Processing Systems},
  35:29677--29690, 2022.

\bibitem{ben2010theory}
Shai Ben-David, John Blitzer, Koby Crammer, Alex Kulesza, Fernando Pereira, and
  Jennifer~Wortman Vaughan.
\newblock A theory of learning from different domains.
\newblock {\em Machine learning}, 79:151--175, 2010.

\bibitem{mansour2014robust}
Yishay Mansour and Mariano Schain.
\newblock Robust domain adaptation.
\newblock {\em Annals of Mathematics and Artificial Intelligence},
  71(4):365--380, 2014.

\bibitem{konstantinov2019robust}
Nikola Konstantinov and Christoph Lampert.
\newblock Robust learning from untrusted sources.
\newblock In {\em International conference on machine learning}, pages
  3488--3498. PMLR, 2019.

\bibitem{crammer2008learning}
Koby Crammer, Michael Kearns, and Jennifer Wortman.
\newblock Learning from multiple sources.
\newblock {\em Journal of Machine Learning Research}, 9(8), 2008.

\bibitem{even2022sample}
Mathieu Even, Laurent Massouli{\'e}, and Kevin Scaman.
\newblock On sample optimality in personalized collaborative and federated
  learning.
\newblock In {\em NeurIPS 2022-36th Conference on Neural Information Processing
  System}, 2022.

\bibitem{konevcny2016federated}
Jakub Kone{\v{c}}n{\`y}, H~Brendan McMahan, Felix~X Yu, Peter Richt{\'a}rik,
  Ananda~Theertha Suresh, and Dave Bacon.
\newblock Federated learning: Strategies for improving communication
  efficiency.
\newblock {\em arXiv preprint arXiv:1610.05492}, 2016.

\bibitem{mohri2019agnostic}
Mehryar Mohri, Gary Sivek, and Ananda~Theertha Suresh.
\newblock Agnostic federated learning.
\newblock In {\em International Conference on Machine Learning}, pages
  4615--4625. PMLR, 2019.

\bibitem{deng2020distributionally}
Yuyang Deng, Mohammad~Mahdi Kamani, and Mehrdad Mahdavi.
\newblock Distributionally robust federated averaging.
\newblock {\em Advances in neural information processing systems},
  33:15111--15122, 2020.

\bibitem{li2019feddane}
Tian Li, Anit~Kumar Sahu, Manzil Zaheer, Maziar Sanjabi, Ameet Talwalkar, and
  Virginia Smithy.
\newblock Feddane: A federated newton-type method.
\newblock In {\em 2019 53rd Asilomar Conference on Signals, Systems, and
  Computers}, pages 1227--1231. IEEE, 2019.

\bibitem{karimireddy2019scaffold}
Sai~Praneeth Karimireddy, Satyen Kale, Mehryar Mohri, Sashank~J Reddi,
  Sebastian~U Stich, and Ananda~Theertha Suresh.
\newblock Scaffold: Stochastic controlled averaging for on-device federated
  learning.
\newblock {\em International Conference on Machine Learning}, 119:5132--5143,
  2020.

\bibitem{haddadpour2019convergence}
Farzin Haddadpour and Mehrdad Mahdavi.
\newblock On the convergence of local descent methods in federated learning.
\newblock {\em arXiv preprint arXiv:1910.14425}, 2019.

\bibitem{yu2020salvaging}
Tao Yu, Eugene Bagdasaryan, and Vitaly Shmatikov.
\newblock Salvaging federated learning by local adaptation.
\newblock {\em arXiv preprint arXiv:2002.04758}, 2020.

\bibitem{sriperumbudur2009integral}
Bharath~K Sriperumbudur, Kenji Fukumizu, Arthur Gretton, Bernhard
  Sch{\"o}lkopf, and Gert~RG Lanckriet.
\newblock On integral probability metrics,$\backslash$phi-divergences and
  binary classification.
\newblock {\em arXiv preprint arXiv:0901.2698}, 2009.

\bibitem{hanneke2022no}
Steve Hanneke and Samory Kpotufe.
\newblock A no-free-lunch theorem for multitask learning.
\newblock {\em The Annals of Statistics}, 50(6):3119--3143, 2022.

\bibitem{shalev2014understanding}
Shai Shalev-Shwartz and Shai Ben-David.
\newblock {\em Understanding machine learning: From theory to algorithms}.
\newblock Cambridge university press, 2014.

\bibitem{xu2023unified}
Zi~Xu, Huiling Zhang, Yang Xu, and Guanghui Lan.
\newblock A unified single-loop alternating gradient projection algorithm for
  nonconvex--concave and convex--nonconcave minimax problems.
\newblock {\em Mathematical Programming}, pages 1--72, 2023.

\bibitem{dandi2022implicit}
Yatin Dandi, Luis Barba, and Martin Jaggi.
\newblock Implicit gradient alignment in distributed and federated learning.
\newblock In {\em Proceedings of the AAAI Conference on Artificial
  Intelligence}, volume~36, pages 6454--6462, 2022.

\bibitem{zhao2018federated}
Yue Zhao, Meng Li, Liangzhen Lai, Naveen Suda, Damon Civin, and Vikas Chandra.
\newblock Federated learning with non-iid data.
\newblock {\em arXiv preprint arXiv:1806.00582}, 2018.

\bibitem{wang2022unreasonable}
Jianyu Wang, Rudrajit Das, Gauri Joshi, Satyen Kale, Zheng Xu, and Tong Zhang.
\newblock On the unreasonable effectiveness of federated averaging with
  heterogeneous data.
\newblock {\em arXiv preprint arXiv:2206.04723}, 2022.

\bibitem{stich2018local}
Sebastian~U Stich.
\newblock Local sgd converges fast and communicates little.
\newblock In {\em International Conference on Learning Representations}, 2018.

\bibitem{woodworth2018graph}
Blake~E Woodworth, Jialei Wang, Adam Smith, Brendan McMahan, and Nati Srebro.
\newblock Graph oracle models, lower bounds, and gaps for parallel stochastic
  optimization.
\newblock In {\em Advances in neural information processing systems}, pages
  8496--8506, 2018.

\bibitem{woodworth2020minibatch}
Blake~E Woodworth, Kumar~Kshitij Patel, and Nati Srebro.
\newblock Minibatch vs local sgd for heterogeneous distributed learning.
\newblock {\em Advances in Neural Information Processing Systems},
  33:6281--6292, 2020.

\bibitem{mishchenko2022proxskip}
Konstantin Mishchenko, Grigory Malinovsky, Sebastian Stich, and Peter
  Richt{\'a}rik.
\newblock Proxskip: Yes! local gradient steps provably lead to communication
  acceleration! finally!
\newblock In {\em International Conference on Machine Learning}, pages
  15750--15769. PMLR, 2022.

\bibitem{yuan2020federated}
Honglin Yuan and Tengyu Ma.
\newblock Federated accelerated stochastic gradient descent.
\newblock {\em Advances in Neural Information Processing Systems},
  33:5332--5344, 2020.

\bibitem{haddadpour2019local}
Farzin Haddadpour, Mohammad~Mahdi Kamani, Mehrdad Mahdavi, and Viveck Cadambe.
\newblock Local sgd with periodic averaging: Tighter analysis and adaptive
  synchronization.
\newblock In {\em Advances in Neural Information Processing Systems}, pages
  11080--11092, 2019.

\bibitem{gorbunov2021local}
Eduard Gorbunov, Filip Hanzely, and Peter Richt{\'a}rik.
\newblock Local sgd: Unified theory and new efficient methods.
\newblock In {\em International Conference on Artificial Intelligence and
  Statistics}, pages 3556--3564. PMLR, 2021.

\bibitem{ahn2020sgd}
Kwangjun Ahn, Chulhee Yun, and Suvrit Sra.
\newblock Sgd with shuffling: optimal rates without component convexity and
  large epoch requirements.
\newblock {\em Advances in Neural Information Processing Systems},
  33:17526--17535, 2020.

\bibitem{cho2023convergence}
Yae~Jee Cho, Pranay Sharma, Gauri Joshi, Zheng Xu, Satyen Kale, and Tong Zhang.
\newblock On the convergence of federated averaging with cyclic client
  participation.
\newblock {\em arXiv preprint arXiv:2302.03109}, 2023.

\bibitem{mohri2018foundations}
Mehryar Mohri, Afshin Rostamizadeh, and Ameet Talwalkar.
\newblock {\em Foundations of machine learning}.
\newblock MIT press, 2018.

\bibitem{li2018federated}
Tian Li, Anit~Kumar Sahu, Manzil Zaheer, Maziar Sanjabi, Ameet Talwalkar, and
  Virginia Smith.
\newblock Federated optimization in heterogeneous networks.
\newblock {\em arXiv preprint arXiv:1812.06127}, 2018.

\bibitem{caldas2018leaf}
Sebastian Caldas, Peter Wu, Tian Li, Jakub Kone{\v{c}}n{\`y}, H~Brendan
  McMahan, Virginia Smith, and Ameet Talwalkar.
\newblock Leaf: A benchmark for federated settings.
\newblock {\em arXiv preprint arXiv:1812.01097}, 2018.

\bibitem{lin2019gradient}
Tianyi Lin, Chi Jin, and Michael~I Jordan.
\newblock On gradient descent ascent for nonconvex-concave minimax problems.
\newblock {\em arXiv preprint arXiv:1906.00331}, 2019.

\bibitem{schneider2016probability}
Markus Schneider.
\newblock Probability inequalities for kernel embeddings in sampling without
  replacement.
\newblock In {\em Artificial Intelligence and Statistics}, pages 66--74. PMLR,
  2016.

\end{thebibliography}

\clearpage


\appendix

\clearpage
\appendix

\section{Additional Experiments}
In addition to experiments on synthetic and CIFAR10 datasets reported before, we have also conducted experiments on the EMNIST dataset, highlighting PERM's capability to derive superior personalized models by tapping into inter-client data similarities. Additionally, further insights emerged from our tests on the MNIST dataset, revealing how PERM's learned mixture weights adeptly respond to both homogeneous and highly heterogeneous data scenarios.

\paragraph{Experiment on EMNIST dataset} In addition to the synthetic and CIFAR10 datasets discussed in the main body, we run experiments on the EMNIST dataset~\cite{caldas2018leaf}, which is naturally distributed in a federated setting. In this case, we chose 50 clients and use a 2-layer MLP model, each with 200 neurons. We compare the PERM algorithm with the localized model in FedAvg and perFedAvg~\cite{fallah2020personalized}. As it can be seen in Figure~\ref{fig:emnist}, PERM can learn a better personalized model by attending to each client's data according to the similarity of the data distribution between clients. The learned values of $\balpha$, in Figure~\ref{app:fig:alpha}, show that the clients are learning from each others' data, and not focused on their own data only. This signifies that the distribution of data among clients in this dataset is not highly heterogeneous. Note that, since we are using a subset of clients in the EMNIST dataset for the training (only 50 clients for 100 rounds of communication), the results would be sub-optimal. Nonetheless, the experiments are designed to show the effectiveness of different algorithms. As it can be concluded,  in terms of performance, PERM consistently excels beyond its peers, demonstrating exemplary results on various benchmark datasets.

\begin{figure}[h!]
  \centering
  \begin{subfigure}[b]{0.4\textwidth}
    \centering
    \includegraphics[width=\textwidth]{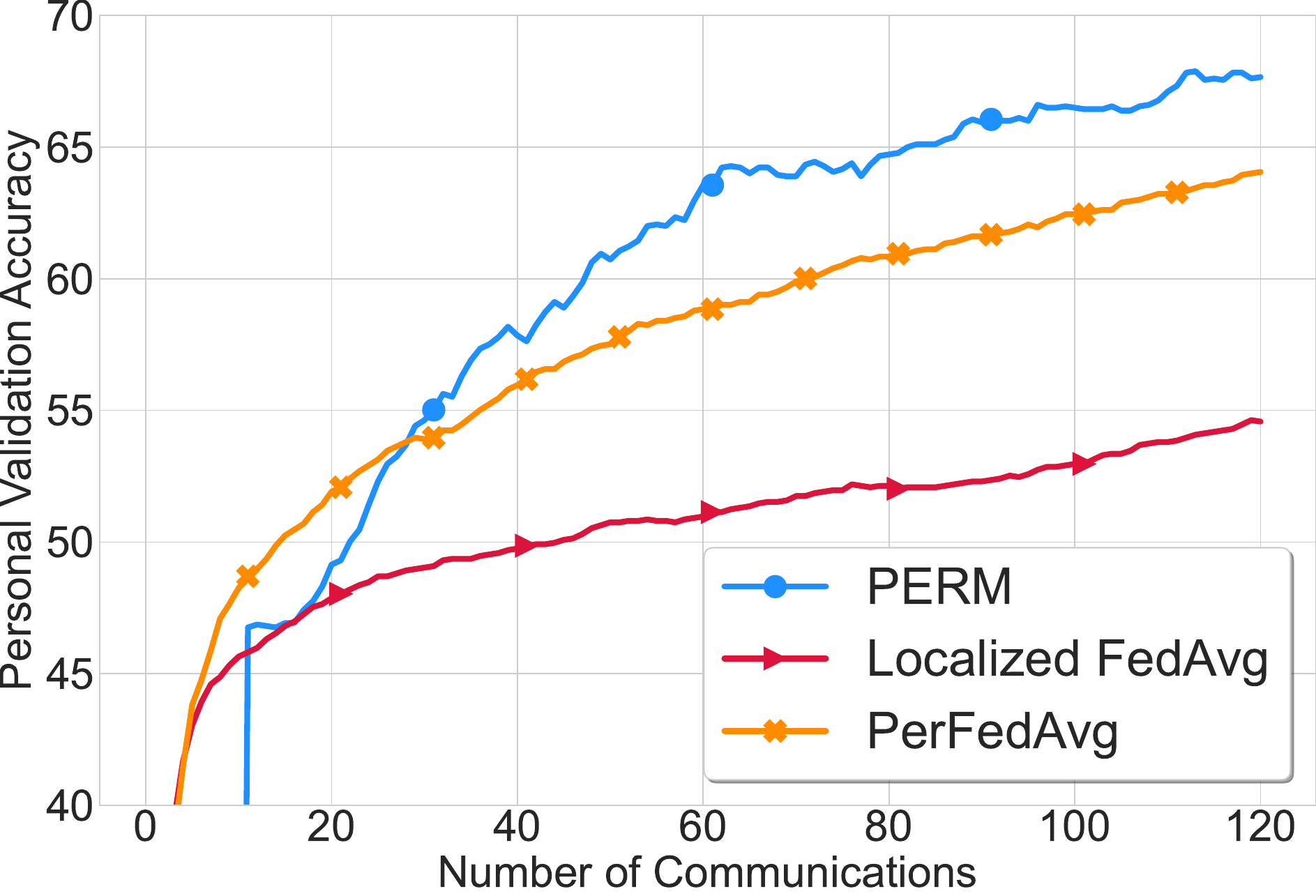}
    \caption{Personalized Accuracy}
    \label{fig:emnist-acc}
  \end{subfigure}
  \hspace{20pt}
  \begin{subfigure}[b]{0.4\textwidth}
    \centering
    \includegraphics[width=\textwidth]{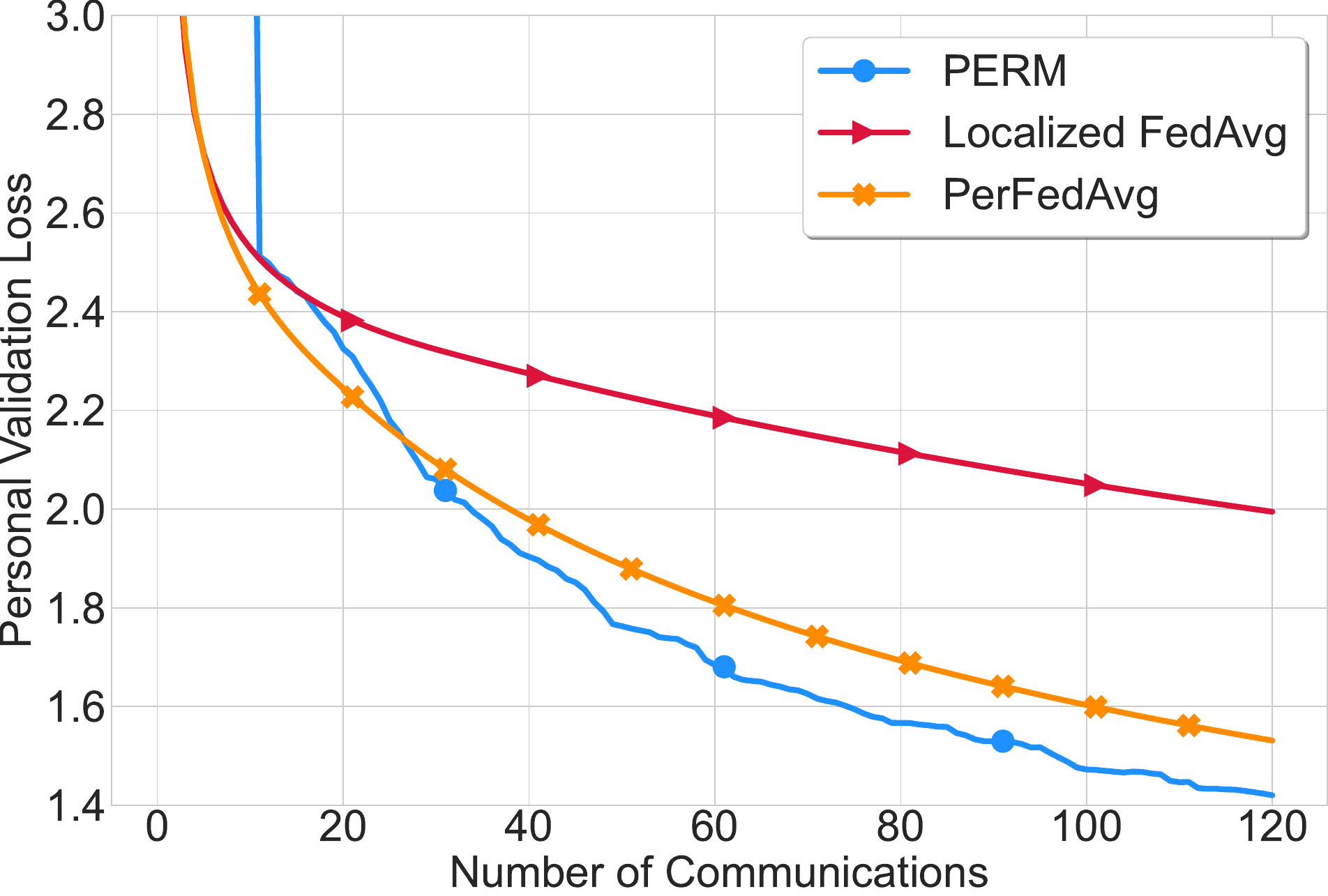}
    \caption{Personalized Loss}
    \label{fig:emnist-loss}
  \end{subfigure}
  \caption{Comparative Analysis of Personalization methods, including our single-loop PERM algorithm, localized FedAvg, and perFedAg, with EMNIST dataset. The disparity in personalized accuracy and loss highlights PERM's capability in leveraging relevant client correlations.}
  \label{fig:emnist}\vspace{-2mm}
\end{figure}
\begin{figure}[h!]
  \centering
  \begin{minipage}{0.5\textwidth}
    \includegraphics[width=0.75\textwidth]{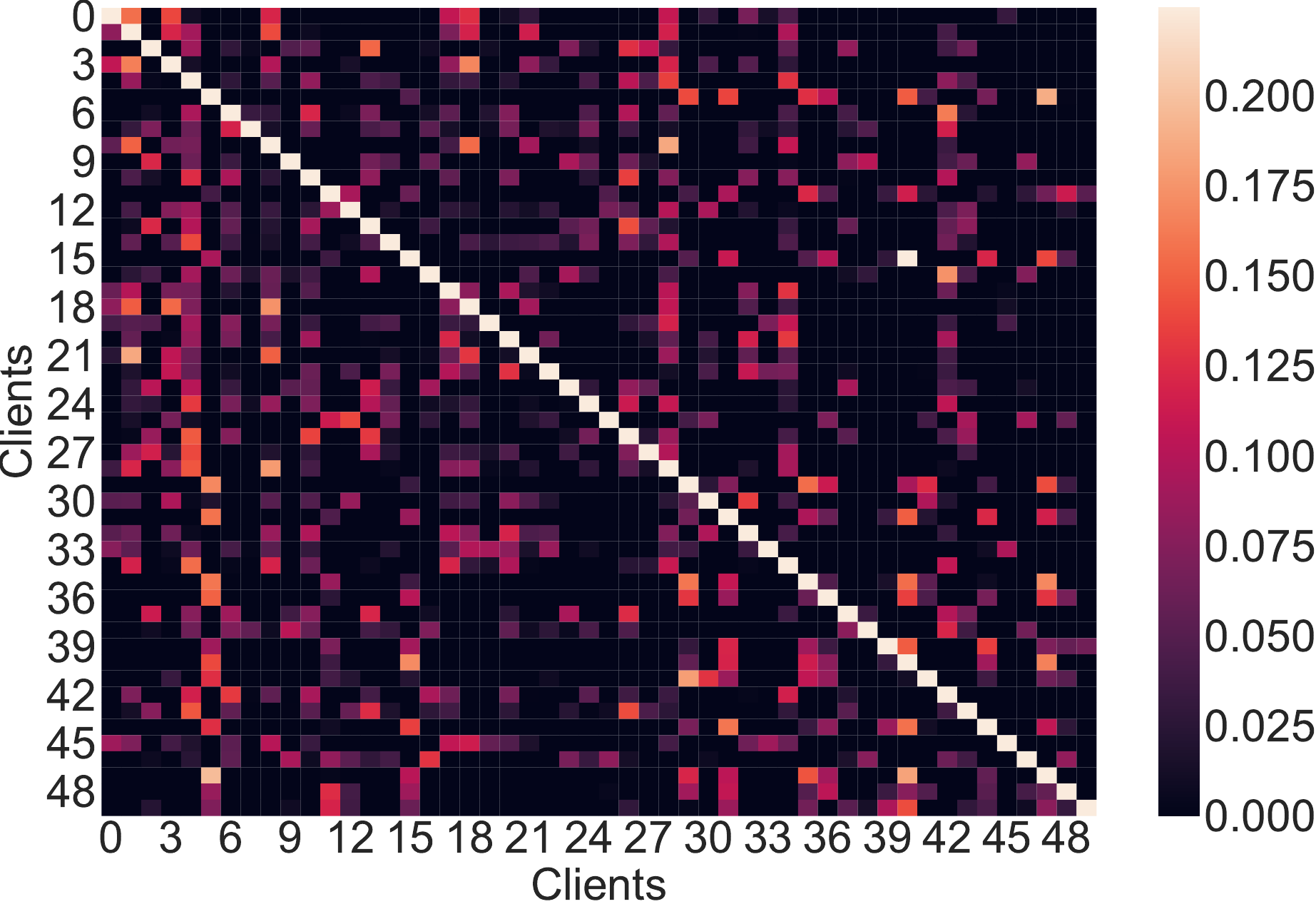}
  \end{minipage}%
  \begin{minipage}{0.4\textwidth}
    \captionof{figure}{The heat map of the learned $\bm{\alpha}$ values for the PERM algorithms on the EMNIST dataset with a 2-layer MLP model. The weights signify that clients mutually benefiting from one another's data, which also highlight that the distribution of data is not significantly heterogeneous in this dataset.}
    \label{app:fig:alpha}
  \end{minipage} \vspace{-5mm}
\end{figure}

\paragraph{The effectiveness of learned mixture weights}To show the effectiveness of the two-stage PERM algorithm, as well as the effects of heterogeneity on the distribution of data among clients on the learned weights $\balpha$ in the algorithm, we run this algorithm on MNIST dataset. We use 50 clients, and the model is an MLP, similar to the EMNIST experiment. In this case, we consider two cases: distributing the data randomly across clients (homogeneous) and only allocating 1 class per client (highly heterogeneous). As it can be seen from Figure~\ref{fig:mnist}, when the data distribution is homogeneous the learned values of $\balpha$ as diffused across clients. However, when the data is highly heterogeneous, the learned $\balpha$ values will be highly sparse, indicating that each client is mostly learning from its own data and some other clients with partial distribution similarity. Notably, the matrix predominantly exhibits sparsity, indicating that each client selectively leverages information solely from a subset of other clients. This discernible pattern reinforces the inherent confidence that each client is effectively learning from a limited but strategically chosen group of clients.

\begin{figure}[h!]
  \centering
  \begin{subfigure}[b]{0.4\textwidth}
    \centering
    \includegraphics[width=\textwidth]{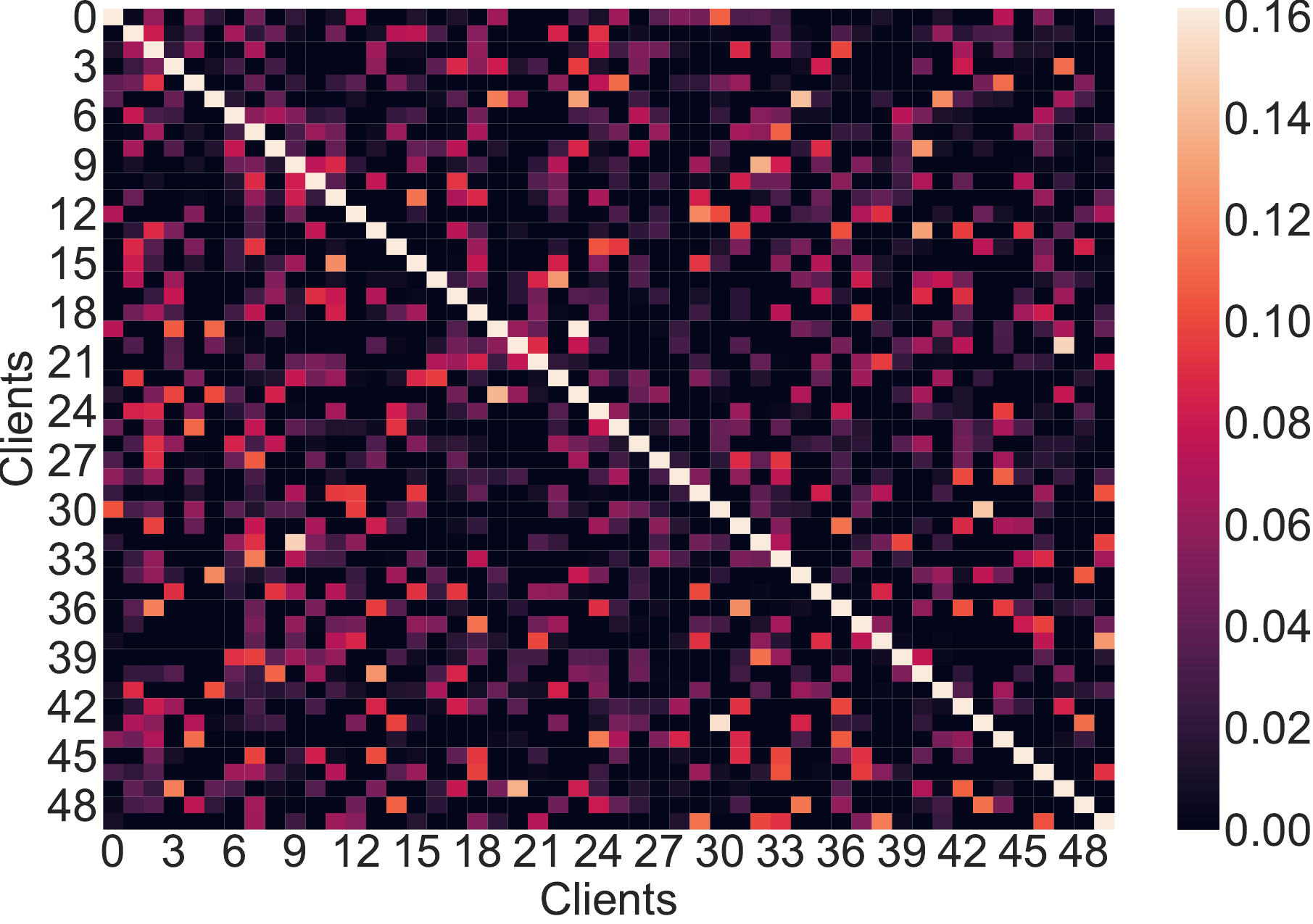}
    \caption{Homogeneous distribution}
    \label{fig:mnist_iid}
  \end{subfigure}
  \hspace{20pt}
  \begin{subfigure}[b]{0.4\textwidth}
    \centering
    \includegraphics[width=\textwidth]{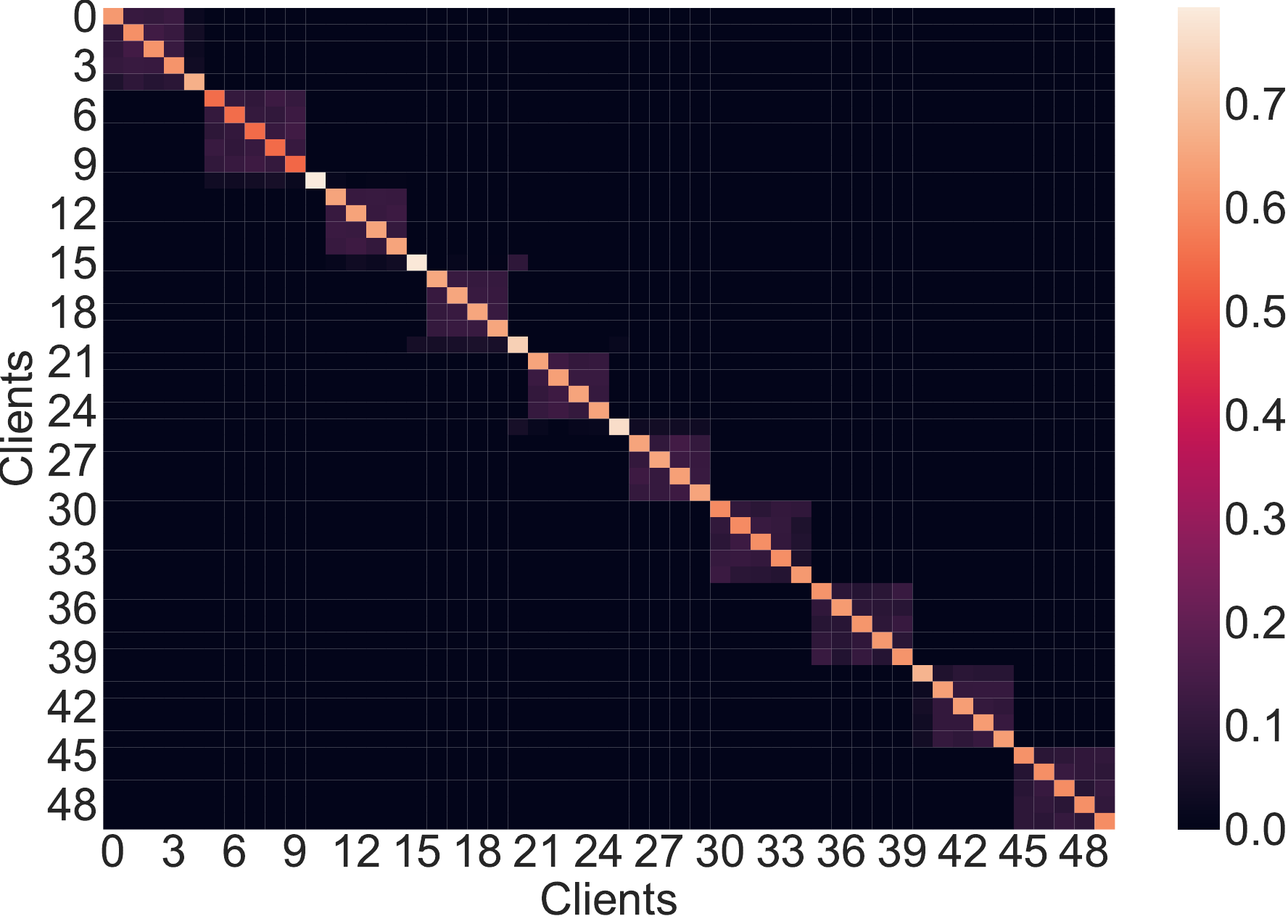}
    \caption{Highly heterogeneous distribution}
    \label{fig:mnist_noniid}
  \end{subfigure}
  \caption{Comparing the performance of two-stage PERM algorithm in learning $\alpha$ values on heterogeneous and homogeneous data distributions. We use MNIST dataset across 50 clients with homogeneous and heterogeneous distributions.}
  \label{fig:mnist}\vspace{-2mm}
\end{figure}

\section{Proof of Two Stages Algorithm}\label{sec:app:two-stages}

In this section we provide the proof of convergence of two-stage implementation of PERM (computing mixing parameters followed by learning personalized models via model shuffling using permutation-based variant of distributed SGD with periodic communication).






\subsection{Technical Lemmas}
\begin{lemma} \label{lem: lipschitz}
    Define $\bv^*(\balpha) := \arg\min_{\bv\in\cW} \Phi(\balpha, \bv) $, and assume $\Phi(\balpha,\cdot)$ is $\mu$-strongly convex and $\nabla_{\bv} \Phi(\balpha,\bv)$ is $L$ Lipschitz in $\balpha$. Then, $\bv^*(\cdot)$ is $\kappa$-Lipschitz where $\kappa = L/\mu$.
    \begin{proof}
        The proof is similar to Lin et al's result on minimax objective~\cite{lin2019gradient}. First, according to optimality conditions we have:
        \begin{align*}
            \langle \bv - \bv^*(\balpha), \nabla_{2} \Phi(\balpha, \bv^*(\balpha)) \rangle \geq 0,\\
            \langle \bv - \bv^*(\balpha'), \nabla_{2} \Phi(\balpha', \bv^*(\balpha')) \rangle \geq 0
        \end{align*}
        Substituting $\bv$ with $\bv^*(\balpha')$ and $\bv^*(\balpha)$ in the above first and second inequalities respectively yields:
        \begin{align*}
            \langle \bv^*(\balpha') - \bv^*(\balpha), \nabla_{2} \Phi(\balpha, \bv^*(\balpha)) \rangle \geq 0,\\
            \langle \bv^*(\balpha) - \bv^*(\balpha'), \nabla_{2} \Phi(\balpha', \bv^*(\balpha')) \rangle \geq 0
        \end{align*}
        Adding up the above two inequalities yields:
        \begin{align}
            \langle \bv^*(\balpha') - \bv^*(\balpha), \nabla_{2} \Phi(\balpha, \bv^*(\balpha))-\nabla_{2} \Phi(\balpha', \bv^*(\balpha'))  \rangle \geq 0,\label{eq: lipschitz 1} 
        \end{align}
        Since $\Phi(\balpha,\cdot)$ is $\mu$ strongly convex, we have:
        \begin{align}
            \langle \bv^*(\balpha') - \bv^*(\balpha), \nabla_2 \Phi(\balpha, \bv^*(\balpha')) - \nabla_2 \Phi(\balpha, \bv^*(\balpha)) \geq \mu \|\bv^*(\balpha') - \bv^*(\balpha)\|^2. \label{eq: lipschitz 2} 
        \end{align}
        Adding up (\ref{eq: lipschitz 1}) and (\ref{eq: lipschitz 2}) yields:
        \begin{align*}
            \langle \bv^*(\balpha') - \bv^*(\balpha), \nabla_2 \Phi(\balpha, \bv^*(\balpha')) - \nabla_2 \Phi(\balpha', \bv^*(\balpha')) \geq \mu \|\bv^*(\balpha') - \bv^*(\balpha)\|^2
        \end{align*}
        Finally, using $L$ smoothness of $\Phi$ will conclude the proof:
        \begin{align*}
            L\| \bv^*(\balpha') - \bv^*(\balpha)\| \| \balpha   -  \balpha' \| &\geq \mu \|\bv^*(\balpha') - \bv^*(\balpha)\|^2\\
            \Longleftrightarrow  \kappa \| \balpha   -  \balpha' \| &\geq  \|\bv^*(\balpha') - \bv^*(\balpha)\| 
        \end{align*}
    \end{proof}
\end{lemma}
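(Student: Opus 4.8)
The plan is to treat this as a standard parametric-sensitivity (argmin-Lipschitz) result and derive it directly from the first-order optimality conditions, mirroring the monotone-operator argument used for minimax problems. Because the minimization defining $\bv^*(\balpha)$ is constrained to the convex set $\cW$, the correct object to work with is the variational inequality rather than a stationarity equation: optimality of $\bv^*(\balpha)$ gives $\langle \bv - \bv^*(\balpha),\, \nabla_{\bv}\Phi(\balpha, \bv^*(\balpha))\rangle \ge 0$ for all $\bv \in \cW$. First I would write this inequality at the two parameter values $\balpha$ and $\balpha'$.

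The key algebraic step is a cross-substitution: plug $\bv = \bv^*(\balpha')$ into the condition at $\balpha$, plug $\bv = \bv^*(\balpha)$ into the condition at $\balpha'$, and add the two. This yields the monotonicity-type inequality $\langle \bv^*(\balpha') - \bv^*(\balpha),\, \nabla_{\bv}\Phi(\balpha, \bv^*(\balpha)) - \nabla_{\bv}\Phi(\balpha', \bv^*(\balpha'))\rangle \ge 0$, in which each gradient is evaluated at its own optimizer. To convert this into a quantitative bound I would then invoke $\mu$-strong convexity of $\Phi(\balpha, \cdot)$ in the monotone form $\langle \bv^*(\balpha') - \bv^*(\balpha),\, \nabla_{\bv}\Phi(\balpha, \bv^*(\balpha')) - \nabla_{\bv}\Phi(\balpha, \bv^*(\balpha))\rangle \ge \mu\|\bv^*(\balpha') - \bv^*(\balpha)\|^2$. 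Adding this to the monotonicity inequality makes the common term $\nabla_{\bv}\Phi(\balpha, \bv^*(\balpha))$ telescope away, leaving $\langle \bv^*(\balpha') - \bv^*(\balpha),\, \nabla_{\bv}\Phi(\balpha, \bv^*(\balpha')) - \nabla_{\bv}\Phi(\balpha', \bv^*(\balpha'))\rangle \ge \mu\|\bv^*(\balpha') - \bv^*(\balpha)\|^2$, where the two gradients on the left now differ only in their first argument, both being evaluated at the single point $\bv^*(\balpha')$.

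To finish, I would bound the left-hand side by Cauchy--Schwarz and then apply the hypothesis that $\nabla_{\bv}\Phi$ is $L$-Lipschitz in its first argument, giving $\|\nabla_{\bv}\Phi(\balpha, \bv^*(\balpha')) - \nabla_{\bv}\Phi(\balpha', \bv^*(\balpha'))\| \le L\|\balpha - \balpha'\|$. Combining the two yields $L\|\balpha - \balpha'\|\,\|\bv^*(\balpha') - \bv^*(\balpha)\| \ge \mu\|\bv^*(\balpha') - \bv^*(\balpha)\|^2$, and dividing through by $\|\bv^*(\balpha') - \bv^*(\balpha)\|$ (the case where this vanishes being trivial) produces $\|\bv^*(\balpha') - \bv^*(\balpha)\| \le \kappa\|\balpha - \balpha'\|$ with $\kappa = L/\mu$.

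I do not expect a genuine obstacle, as the result is elementary. The only place requiring care is the first step: since the domain is constrained, one must use the variational-inequality form of optimality and orient the cross-substitution so that all inner products point the same way, ensuring that the strong-convexity inequality \emph{adds} constructively to the monotonicity inequality (so the shared gradient term cancels) rather than destructively. Everything after that is Cauchy--Schwarz together with the Lipschitz-in-$\balpha$ hypothesis.
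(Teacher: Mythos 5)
Your proposal is correct and follows essentially the same route as the paper's proof: the variational-inequality optimality conditions at $\balpha$ and $\balpha'$, the cross-substitution and addition to obtain the monotonicity inequality, the addition of the strong-convexity inequality so that the shared term $\nabla_{\bv}\Phi(\balpha,\bv^*(\balpha))$ cancels, and the Cauchy--Schwarz plus Lipschitz-in-$\balpha$ finish. Your write-up is in fact slightly more careful than the paper's at the last step, making the Cauchy--Schwarz application and the degenerate case $\bv^*(\balpha')=\bv^*(\balpha)$ explicit.
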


\begin{lemma} [Optimality Gap]\label{lem:opt gap}
Let $\Phi(\balpha,\bv)$ be defined in (\ref{eq: phi definition}). Let $\hat{\bv} = \cP_{\cW}(\tilde{\bv} - \frac{1}{L} \nabla_{\bv} \Phi(\hat{\balpha},\tilde{\bv}))$. If we assume each $f_i$ is $L$-smooth, $\mu$-strongly convex and with gradient bounded by $G$, then the following statement holds true:
    \begin{align*}
      \Phi( {\balpha}^*,\hat{\bv}) -  \Phi( {\balpha}^*, \bv^* ) \leq   2L\|\tilde{\bv}  - {\bv}^*(\hat\balpha)\|^2+ \pare{2\kappa^2_{\Phi} L + \frac{4NG^2}{ L} }\| \hat\balpha  - \balpha^* \|^2 .
    \end{align*}
    where  $\kappa_\Phi = \frac{\sqrt{N}G}{\mu}$,  ${\bv}^* = \arg\min_{\bv \in \cW} \Phi( {\balpha}^*,  {\bv} )$.
    \end{lemma}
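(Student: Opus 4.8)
The plan is to bound the target gap by \emph{comparing} the two objectives $\Phi(\balpha^*,\cdot)$ and $\Phi(\hat\balpha,\cdot)$, rather than trying to control $\Phi(\balpha^*,\hat\bv)-\Phi(\balpha^*,\bv^*)$ directly through smoothness. The direct route fails because $\bv^*=\bv^*(\balpha^*)$ may lie on the boundary of $\cW$, so the naive estimate $\tfrac L2\|\hat\bv-\bv^*\|^2$ is invalid (the first-order term $\langle\nabla_\bv\Phi(\balpha^*,\bv^*),\hat\bv-\bv^*\rangle$ has the wrong sign at a constrained optimum). First I would record three ingredients. (i) Since each $f_j$ has gradient bounded by $G$ and $\sum_j|\alpha(j)-\alpha'(j)|\le\sqrt N\|\balpha-\balpha'\|$, the map $\balpha\mapsto\nabla_\bv\Phi(\balpha,\bv)$ is $\sqrt N G$-Lipschitz in $\balpha$; combined with $\mu$-strong convexity of $\Phi(\balpha,\cdot)$, Lemma~\ref{lem: lipschitz} (with its constant ``$L$'' instantiated as $\sqrt N G$) gives that $\bv^*(\cdot)$ is $\kappa_\Phi$-Lipschitz with $\kappa_\Phi=\sqrt N G/\mu$. (ii) For the same reason, $\Psi(\bv):=\Phi(\balpha^*,\bv)-\Phi(\hat\balpha,\bv)=\sum_j(\alpha^*(j)-\hat\alpha(j))f_j(\bv)$ has gradient of norm at most $\sqrt N G\|\hat\balpha-\balpha^*\|$, so $\Psi$ is $\sqrt N G\|\hat\balpha-\balpha^*\|$-Lipschitz in $\bv$. (iii) Because $\hat\bv=\cP_\cW(\tilde\bv-\tfrac1L\nabla_\bv\Phi(\hat\balpha,\tilde\bv))$ is a projected gradient step on the $L$-smooth convex map $\Phi(\hat\balpha,\cdot)$ with step $1/L$, the gradient-mapping one-step progress bound yields $\Phi(\hat\balpha,\hat\bv)-\Phi(\hat\balpha,\bv^*(\hat\balpha))\le\tfrac{L}{2}\|\tilde\bv-\bv^*(\hat\balpha)\|^2-\tfrac L2\|\hat\bv-\bv^*(\hat\balpha)\|^2$, which simultaneously gives the progress bound and the nonexpansiveness estimate $\|\hat\bv-\bv^*(\hat\balpha)\|\le\|\tilde\bv-\bv^*(\hat\balpha)\|$.

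Next I would write the decomposition
$$\Phi(\balpha^*,\hat\bv)-\Phi(\balpha^*,\bv^*) = \big[\Phi(\hat\balpha,\hat\bv)-\Phi(\hat\balpha,\bv^*)\big] + \big[\Psi(\hat\bv)-\Psi(\bv^*)\big].$$
For the first bracket I use that $\bv^*(\hat\balpha)$ minimizes $\Phi(\hat\balpha,\cdot)$, so $\Phi(\hat\balpha,\hat\bv)-\Phi(\hat\balpha,\bv^*)\le\Phi(\hat\balpha,\hat\bv)-\Phi(\hat\balpha,\bv^*(\hat\balpha))\le\tfrac{L}{2}\|\tilde\bv-\bv^*(\hat\balpha)\|^2$ by ingredient (iii). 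For the second bracket, ingredient (ii) gives $\Psi(\hat\bv)-\Psi(\bv^*)\le\sqrt N G\|\hat\balpha-\balpha^*\|\,\|\hat\bv-\bv^*\|$. The additive constants in $f_j$ cancel in $\Psi$, which is precisely why I compare the two objectives in this order instead of expanding each $f_j(\bv)$ separately (which would not be bounded by $G$).

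Finally I would control $\|\hat\bv-\bv^*\|$ by the triangle inequality together with (i) and nonexpansiveness: $\|\hat\bv-\bv^*\|\le\|\hat\bv-\bv^*(\hat\balpha)\|+\|\bv^*(\hat\balpha)-\bv^*(\balpha^*)\|\le\|\tilde\bv-\bv^*(\hat\balpha)\|+\kappa_\Phi\|\hat\balpha-\balpha^*\|$, so that $\|\hat\bv-\bv^*\|^2\le 2\|\tilde\bv-\bv^*(\hat\balpha)\|^2+2\kappa_\Phi^2\|\hat\balpha-\balpha^*\|^2$. I then split the cross term with Young's inequality, $\sqrt N G\|\hat\balpha-\balpha^*\|\,\|\hat\bv-\bv^*\|\le\tfrac{L}{2}\|\hat\bv-\bv^*\|^2+\tfrac{NG^2}{2L}\|\hat\balpha-\balpha^*\|^2$, and collect terms: the $\|\tilde\bv-\bv^*(\hat\balpha)\|^2$ coefficient becomes $\tfrac L2+L\le 2L$, and the $\|\hat\balpha-\balpha^*\|^2$ coefficient becomes $L\kappa_\Phi^2+\tfrac{NG^2}{2L}\le 2\kappa_\Phi^2 L+\tfrac{4NG^2}{L}$, which are exactly the claimed constants (up to the slack absorbed into the rounding of the coefficients). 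The main obstacle, as flagged above, is steering around the constrained minimizer: one must avoid a direct smoothness bound at $\bv^*$ and instead route all of the ``descent'' through the projected-gradient progress guarantee for $\Phi(\hat\balpha,\cdot)$, whose own minimizer $\bv^*(\hat\balpha)$ is the point at which the step actually decreases the objective; the $\balpha$-perturbation is then handled separately through the two Lipschitz estimates (i) and (ii).
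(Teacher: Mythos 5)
Your proof is correct and arrives at the stated bound (in fact with slightly sharper intermediate constants, $\tfrac{3L}{2}$ and $L\kappa_\Phi^2+\tfrac{NG^2}{2L}$, before rounding up), but by a genuinely different route than the paper. The paper works at the gradient level: it starts from the variational inequality of the projection, $0\le\inprod{\bv-\hat\bv}{L(\hat\bv-\tilde\bv)+\nabla_\bv\Phi(\hat\balpha,\tilde\bv)}$, splits the gradient into $\nabla_\bv\Phi(\balpha^*,\tilde\bv)$ plus the $\balpha$-perturbation (its terms $T_1,T_2$), bounds $T_1$ via smoothness and strong convexity of $\Phi(\balpha^*,\cdot)$ --- in effect re-deriving by hand, for the \emph{mismatched} objective $\Phi(\balpha^*,\cdot)$, the one-step projected-gradient progress lemma that you invoke as a black box in ingredient (iii) --- bounds $T_2$ by Cauchy--Schwarz/Young with the same $\sqrt{N}G$ Lipschitz-in-$\balpha$ estimate, sets $\bv=\bv^*(\balpha^*)$, and only at the end uses Lemma~\ref{lem: lipschitz} to split $\norm{\tilde\bv-\bv^*(\balpha^*)}^2\le 2\norm{\tilde\bv-\bv^*(\hat\balpha)}^2+2\kappa_\Phi^2\norm{\hat\balpha-\balpha^*}^2$. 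You instead work at the function-value level: the decomposition $\Phi(\balpha^*,\hat\bv)-\Phi(\balpha^*,\bv^*)=\bigl[\Phi(\hat\balpha,\hat\bv)-\Phi(\hat\balpha,\bv^*)\bigr]+\bigl[\Psi(\hat\bv)-\Psi(\bv^*)\bigr]$ lets you treat $\hat\bv$ as a bona fide PGD step on $\Phi(\hat\balpha,\cdot)$ (which it is) and apply the standard descent bound directly at $\bv^*(\hat\balpha)$, while the $\balpha$-perturbation is deferred to the Lipschitz difference function $\Psi$. What your route buys is modularity and slightly weaker hypotheses for the descent part (only convexity and $L$-smoothness are needed there; strong convexity enters solely through the $\kappa_\Phi$-Lipschitzness of $\bv^*(\cdot)$, exactly as in the paper), an explicit cancellation of additive constants inside $\Psi$, and cleaner bookkeeping than the paper's interleaved computation (which carries stray $\tfrac{1}{\eta}$ and $\tfrac{3L}{4}$ factors and silently drops the strong-convexity term). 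Both arguments ultimately rest on the same two Lipschitz facts and the same closing triangle-inequality/Young collection, and your opening remark --- that a naive smoothness bound $\Phi(\balpha^*,\hat\bv)-\Phi(\balpha^*,\bv^*)\le\tfrac{L}{2}\norm{\hat\bv-\bv^*}^2$ fails because the first-order term $\inprod{\nabla_\bv\Phi(\balpha^*,\bv^*)}{\hat\bv-\bv^*}$ is nonnegative (hence non-discardable) at a constrained minimizer --- is accurate and correctly motivates routing the descent through $\bv^*(\hat\balpha)$.
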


    \begin{proof}
    First we show that $\nabla_{\bv} \Phi(\balpha,\bv)$ is $ {\sqrt{N}G} $ Lipschitz in $\balpha$. To see this:
    \begin{align*}
        \norm{\nabla_{\bv} \Phi(\balpha,\bv) - \nabla_{\bv} \Phi(\balpha',\bv)} &= \norm{ \sum_{j=1}^N \alpha_i(j) \nabla f_j(\bv) - \sum_{j=1}^N \alpha'_i(j) \nabla f_j(\bv)  }\\
        &\leq \sqrt{N}G \norm{\balpha_i - \balpha'_i}.
    \end{align*}
    Hence due to Lemma~\ref{lem: lipschitz}, we know $\bv^*(\balpha)$ is $\kappa_{\Phi}:= \frac{\sqrt{N}G}{\mu}$ Lipschitz. According to property of projection, we have:
    \begin{align*}
        0 &\leq \inprod{\bv - \hat{\bv}}{L(\hat{\bv} - \tilde{\bv})+\nabla_{\bv} \Phi(\hat{\balpha},\tilde{\bv}) }\\
        &= \underbrace{\inprod{\bv - \hat{\bv}}{L(\hat{\bv} - \tilde{\bv})+\nabla_{\bv} \Phi( {\balpha}^*,\tilde{\bv}) } }_{T_1}+ \underbrace{\inprod{\bv - \hat{\bv}}{ \nabla_{\bv} \Phi(\hat{\balpha},\tilde{\bv}) -\nabla_{\bv} \Phi( {\balpha}^*,\tilde{\bv})}}_{T_2}.
    \end{align*}

For $T_1$, we notice:
\begin{align*}
   \inprod{\bv - \hat{\bv}}{L(\hat{\bv} - \tilde{\bv})+\nabla_{\bv} \Phi( {\balpha}^*,\tilde{\bv}) }  & =L \inprod{\bv - \tilde{\bv}} {\hat{\bv} - \tilde{\bv}} + \frac{1}{\eta} \inprod{ \tilde{\bv}-\hat{\bv}}{\hat{\bv} - \tilde{\bv}} +  \inprod{\bv - \hat{\bv}}{ \nabla_{\bv} \Phi( {\balpha}^*,\tilde{\bv}) }\\
   & =L \inprod{\bv - \tilde{\bv}} {\hat{\bv} - \tilde{\bv}} - L \norm{ \tilde{\bv}-\hat{\bv}}^2 +   \inprod{\bv - \hat{\bv}_i}{ \nabla_{\bv} \Phi( {\balpha}^*,\tilde{\bv}) } \\
   & \leq L ( \norm{\bv - \tilde{\bv}}^2 + \frac{1}{4}\norm{\hat{\bv} - \tilde{\bv}}^2 )- L \norm{ \tilde{\bv}-\hat{\bv}}^2 +  \underbrace{\inprod{\bv - \hat{\bv}}{ \nabla_{\bv} \Phi( {\balpha}^*,\tilde{\bv}) }}_{\spadesuit } 
\end{align*}
where at last step we used Young's inequality.
To bound $\spadesuit$,  we apply the $L$ smoothness and $\mu$ strongly convexity of $\Phi(\balpha,\cdot)$:
\begin{align*}
    \inprod{\bv - \hat{\bv}_i}{ \nabla_{\bv} \Phi( {\balpha}^*_i,\tilde{\bv}_i) } &= \inprod{\bv - \tilde{\bv}_i}{ \nabla_{\bv} \Phi( {\balpha}^*,\tilde{\bv}) } + \inprod{\tilde{\bv} - \hat{\bv}}{ \nabla_{\bv} \Phi( {\balpha}^*,\tilde{\bv}) }\\
    &\leq \Phi( {\balpha}^*, \bv ) - \Phi( {\balpha}^*,\tilde{\bv}) - \frac{\mu}{2}\norm{\tilde{\bv} -  {\bv} }^2+ \Phi( {\balpha}^*,\tilde{\bv}) - \Phi( {\balpha}^*,\hat{\bv}) + \frac{L}{2}\norm{\tilde{\bv} - \hat{\bv}}^2\\
    &\leq \Phi( {\balpha}^*, \bv ) - \Phi( {\balpha}^*,\hat{\bv})   - \frac{\mu}{2}\norm{\tilde{\bv} -  {\bv} }^2+  \frac{L}{2}\norm{\tilde{\bv} - \hat{\bv}}^2
\end{align*}
Putting above bound back yields:
\begin{align*}
     \inprod{\bv - \hat{\bv}}{\frac{1}{\eta}(\hat{\bv} - \tilde{\bv})+\nabla_{\bv} \Phi( {\balpha}^*,\tilde{\bv}) } \leq \Phi( {\balpha}^*, \bv ) - \Phi( {\balpha}^*,\hat{\bv})  + \frac{1}{2\eta}\norm{\tilde{\bv} -  {\bv} }^2 -\pare{\frac{3L}{4}- \frac{L}{2}}  \norm{\tilde{\bv} - \hat{\bv}}^2
\end{align*}
Now we switch to bounding $T_2$. Applying Cauchy-Schwartz yields:
\begin{align*}
    \inprod{\bv - \hat{\bv}_i}{ \nabla_{\bv} \Phi(\hat{\balpha}_i,\tilde{\bv}_i) -\nabla_{\bv} \Phi( {\balpha}^*_i,\tilde{\bv}_i)} &\leq \frac{L}{4}\norm{\bv - \tilde{\bv}_i}^2 + \frac{L}{4}\norm{\tilde{\bv} - \hat{\bv}_i}^2 + \frac{4}{L} \norm{\nabla_{\bv} \Phi(\hat{\balpha}_i,\tilde{\bv}_i) -\nabla_{\bv} \Phi( {\balpha}^*_i,\tilde{\bv}_i)}^2\\
    &\leq \frac{L}{4}\norm{\bv - \tilde{\bv}_i}^2 + \frac{L}{4}\norm{\tilde{\bv} - \hat{\bv}_i}^2 + \frac{4NG^2}{L} \norm{  \hat{\balpha}_i  -  {\balpha}^*_i }^2
\end{align*}
where at last step we apply $\sqrt{N}G$ smoothness of $\Phi(\cdot,\bv)$.
Putting pieces together yields:
\begin{align*}
    0 &\leq \Phi( {\balpha}^*_i, \bv ) - \Phi( {\balpha}^*_i,\hat{\bv}_i)  + \frac{L}{2}\norm{\tilde{\bv}_i -  {\bv} }^2    +\frac{L}{2}\norm{\bv - \tilde{\bv}_i}^2   + \frac{4NG^2}{  L} \norm{  \hat{\balpha}_i  -  {\balpha}^*_i }^2
\end{align*}
Re-arranging terms and setting $\bv = \bv^*(\balpha^*) = \arg\min_{\bv\in\cW} \Phi(\balpha^*,\bv)$ yields:
\begin{align*}
     \Phi( {\balpha}^*,\hat{\bv}) -  \Phi( {\balpha}^*, \bv^* ) \leq   L\norm{\tilde{\bv} -  {\bv}^* }^2     + \frac{4NG^2}{ L} \norm{  \hat{\balpha}  -  {\balpha}^* }^2.
\end{align*}

At last, due to the $\kappa_{\Phi}$-Lipschitzness  property of of $\bv^*(\cdot)$  as shown in Lemma~\ref{lem: lipschitz}, it follows that:
        \begin{align*}
            L\|\tilde{\bv}  - {\bv}^*(\balpha^*)\|^2 &\leq L\|\tilde{\bv} - {\bv}^*(\hat\balpha)\|^2+L\|{\bv}^*(\hat\balpha) - {\bv}^*(\balpha^*)\|^2 \\
            &\leq 2L\|\tilde{\bv}  - {\bv}^*(\hat\balpha)\|^2+2\kappa^2_{\Phi} L\| \hat\balpha  - \balpha^* \|^2,
        \end{align*}
        as desired.
     
    \end{proof}


\setcounter{algocf}{0}
\renewcommand{\thealgocf}{A\arabic{algocf}}
  \begin{algorithm2e}[t]
	\DontPrintSemicolon
    \caption{Discrepancy Estimation at Optimum}
	\label{algorithm: alpha}
 
    	\textbf{Input:} Number of clients $N$, number of local steps $K$ , number of communications rounds $R$
	\\

       \For{$r = 0,\ldots,R-1$} 
        {   
        \textbf{parallel} \For{\text{\em client} \ $i = 0,...,N-1$}	
        {
         Client $i$ initializes model $\bw^{r,0}_i = \bw^{r}_i$.\\
	   \For{$t = 0,...,K-1$}
	    {
        $ \bw^{r,t+1}_i = \bw^{r,t}_i - \gamma \nabla f_i(\bw^{r,t}_i;\xi_i^{r,t})$ where $\xi_i^{r,t}$ is a mini-batch  sampled from $\mathcal{S}_i$.\\
        } 
        Client $i$ sends $\bw^{r,K}_i$ to Server.\\
     }
     Server computes
     $\bw^{r+1} = \cP_{\cW}\left(\frac{1}{N}\sum_{i=1}^N \bw_i^{r,K}\right)$\\ 
     Server broadcasts  $\bw^{r+1}$ to all clients.
     
 } 
  Server computes $\hat{\balpha}_i, i=1, 2, \ldots, N$ by running $T_{\balpha}$ steps of GD on   $g_i(\bm{w}^R,\balpha)$. \\
     \textbf{Output:} $\hat{\balpha}_1, \ldots, \hat{\balpha}_{N}$ . 
\end{algorithm2e}

\subsection{Proof of Convergence of Theorem~\ref{thm: 2stage alpha} }

In this section we are going to prove the result in Theorem~\ref{thm: 2stage alpha}.   To this end, we need to show that mixing parameters we compute by first learning the global model and then solving the optimization problem in objective~\eqref{eq:relaxed-alpha-final} (as depicted in Algorithm~\ref{algorithm: alpha}) converges to optimal values. Notice that in Algorithm~\ref{algorithm: alpha} we do not solve $g_i(\bw^*,\balpha)$ directly, but optimize $g_i(\bw^R,\balpha)$ on $ \balpha$   for $T_{\balpha}$ iterations of GD. Hence,
firstly we need to show that optimizing the surrogate function will also guarantee the convergence of output of algorithm $\widehat{\balpha}$ to $\balpha^*$ by deriving a property of the objective in~(\ref{eq:relaxed-alpha-final}). Formally the property is captured by the following lemma.

\begin{lemma}\label{lem: lipschitz alpha}
Let $  g(\bw,\balpha):= \sum\nolimits_{j=1}^{N}{\alpha_j \norm{\nabla f_i(\bw) -\nabla f_j(\bw)}^2}+ \lambda \sum\nolimits_{j=1}^{N} \alpha_j^2/n_j $ and $\balpha^*_g(\bw) = \arg\min_{\balpha \in \Delta_N} g(\bw,\balpha)$. Let $\bw^R$ be the output of Algorithm~\ref{algorithm: alpha}. Then the following statement holds:
\begin{align*}
    \norm{\balpha^*_g(\bw^R) - \balpha^*_g(\bw^*)} 
    &\leq  \kappa_g^2\sum_{j=1}^N\pare{ 2\norm{\nabla f_i( \bw^*) -\nabla f_j(\bw^*)}^2 + 4L^2\norm{  \bw^R - \bw^*}^2} 4L\norm{ \bw^R-\bw^* }^2
\end{align*} 
where $\kappa_g := \frac{n_{\max}}{2\lambda }$.
\begin{proof}
 
Define function

\begin{align}
     W(\bz, \balpha) = \sum\nolimits_{j=1}^{N}{\alpha_j z_j}+ \lambda \sum\nolimits_{j=1}^{N} \alpha_j^2/n_j 
\end{align}

Apparently, $W(\bz, \balpha)$ is linear in $\bz$ and $2\frac{\lambda}{n_{\max}} $ strongly convex in $\bz$. Next we show that $\nabla_{\balpha} W(\bz,\balpha)$ is Lipschitz in $\bw$. To see this,
 \begin{align*}
        \norm{\nabla_{\balpha}  W(\bz, \balpha) - \nabla_{\balpha}  W(\bz', \balpha)} &= \norm{  [z_1,...,z_N]  -  [z'_1,...,z'_N]}\\
        &\leq \norm{\bz - \bz' }.
    \end{align*}

Then, according to Proposition~\ref{lem: lipschitz}, $\balpha^*_W(\bz) := \arg\min_{\balpha\in\Delta_N} W(\bz, \balpha)$ is $\kappa_g$ lipschitz in $\bz$ where $\kappa_g = \frac{n_{\max}}{2\lambda }$, i.e., $\norm{\balpha^*_W(\bz) - \balpha^*_W(\bz')} \leq \kappa_g\norm{ \bz - \bz'}$. Now, let us consider the objective ~(\ref{eq:relaxed-alpha-final}):
\begin{align*}
    g(\bw,\balpha):= \sum\nolimits_{j=1}^{N}{\alpha_j \norm{\nabla f_i(\bw) -\nabla f_j(\bw)}^2}+ \lambda \sum\nolimits_{j=1}^{N} \alpha_j^2/n_j 
\end{align*}
We define $\balpha^*_g(\bw) = \arg\min_{\balpha \in \Delta_N} g(\bw,\balpha)$.

We set 
\begin{align*}
 \bz^R &= \left[ \norm{\nabla f_i( \bw^R) -\nabla f_1(\bw^R) }^2,..., \norm{\nabla f_i( \bw^R) -\nabla f_N( \bw^R) }^2\right],\\
    \bz^* &= \left[ \norm{\nabla f_i(\bw^*) -\nabla f_1(\bw^*) }^2,..., \norm{\nabla f_i(\bw^*) -\nabla f_N(\bw^*) }^2\right].
\end{align*}
Then we know that 
\begin{align}
    \norm{\balpha^*_g(\bw^R) - \balpha^*_g(\bw^*)}^2 &= \norm{\balpha^*_W(\bz^R) - \balpha^*_W(\bz^*)}^2 \leq  \kappa_g^2 \norm{\bz^R - \bz^*}^2 \\
    &\leq  \kappa_g^2\sum_{j=1}^N \left|\norm{\nabla f_i( \bw^R) -\nabla f_j(\bw^R) }^2-\norm{\nabla f_i( \bw^*) -\nabla f_j(\bw^*) }^2  \right|^2  \label{eq:lipschitz alpha 1}\\
    &\leq \kappa_g^2 \sum_{j=1}^N \left|\pare{\nabla f_i( \bw^R) -\nabla f_j(\bw^R) +\nabla f_i( \bw^*) -\nabla f_j(\bw^*) } \right. \nonumber\\
    & \qquad \qquad\left. \times \pare{\nabla f_i( \bw^R) -\nabla f_j(\bw^R)-\nabla f_i( \bw^*) +\nabla f_j(\bw^*) }  \right|^2 \nonumber\\
    &\leq \kappa_g^2 \sum_{j=1}^N { \norm{\nabla f_i( \bw^R) -\nabla f_j(\bw^R) +\nabla f_i( \bw^*) -\nabla f_j(\bw^*) }^2 } 4L^2\norm{ \bw^R-\bw^* }^2\nonumber
\end{align} 
Since $\norm{\nabla f_i( \bw^R) -\nabla f_j(\bw^R)} \leq \norm{\nabla f_i( \bw^*) -\nabla f_j(\bw^*)} + 2L\norm{  \bw^R - \bw^*}$, we can conclude that
\begin{align*}
    \norm{\balpha^*_g(\bw^R) - \balpha^*_g(\bw^*)} 
    &\leq  \kappa_g^2\sum_{j=1}^N\pare{ 2\norm{\nabla f_i( \bw^*) -\nabla f_j(\bw^*)}^2 + 4L^2\norm{  \bw^R - \bw^*}^2} 4L\norm{ \bw^R-\bw^* }^2 .
\end{align*} 

   \end{proof}
\end{lemma}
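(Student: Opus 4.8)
The plan is to exploit the fact that $g(\bw, \balpha)$ depends on the model $\bw$ \emph{only} through the vector of squared pairwise gradient dissimilarities. Concretely, I would introduce the auxiliary function
\begin{equation*}
W(\bz, \balpha) = \sum_{j=1}^N \alpha_j z_j + \lambda \sum_{j=1}^N \alpha_j^2/n_j,
\end{equation*}
and set $\bz(\bw) = \pare{\norm{\nabla f_i(\bw) - \nabla f_j(\bw)}^2}_{j=1}^N$, so that $g(\bw, \balpha) = W(\bz(\bw), \balpha)$ and hence $\balpha^*_g(\bw) = \balpha^*_W(\bz(\bw))$, where $\balpha^*_W(\bz) = \arg\min_{\balpha \in \Delta_N} W(\bz, \balpha)$. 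This cleanly decomposes the claim into (i) a sensitivity bound for the solution map $\balpha^*_W$ with respect to its parameter $\bz$, and (ii) a perturbation bound on $\bz(\bw)$ itself as $\bw$ moves from $\bw^*$ to $\bw^R$.

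For step (i), I would verify that $W(\bz, \cdot)$ is $\tfrac{2\lambda}{n_{\max}}$-strongly convex in $\balpha$ (its $\balpha$-Hessian is $\diag(2\lambda/n_j)$) and that the $\bz$-dependent part of $\nabla_\balpha W(\bz, \balpha)$ is additive and equals $\bz$ componentwise, so $\nabla_\balpha W(\bz,\balpha)$ is $1$-Lipschitz in $\bz$. With these two facts, Lemma~\ref{lem: lipschitz} (with $\bz$ in the role of the parameter and $\balpha$ the minimization variable) immediately yields that $\balpha^*_W(\cdot)$ is $\kappa_g$-Lipschitz in $\bz$ with $\kappa_g = n_{\max}/(2\lambda)$, giving $\norm{\balpha^*_g(\bw^R) - \balpha^*_g(\bw^*)} = \norm{\balpha^*_W(\bz^R) - \balpha^*_W(\bz^*)} \leq \kappa_g \norm{\bz^R - \bz^*}$.

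For step (ii), I would bound $\norm{\bz^R - \bz^*}$ entrywise. Writing $a_j = \nabla f_i(\bw^R) - \nabla f_j(\bw^R)$ and $b_j = \nabla f_i(\bw^*) - \nabla f_j(\bw^*)$, each coordinate satisfies $|z_j^R - z_j^*| = |\norm{a_j}^2 - \norm{b_j}^2| = |\inprod{a_j + b_j}{a_j - b_j}| \leq \norm{a_j + b_j}\,\norm{a_j - b_j}$. The $L$-smoothness of each $f_k$ (Assumption~\ref{assump: smooth}) controls the difference factor, since $a_j - b_j = \pare{\nabla f_i(\bw^R) - \nabla f_i(\bw^*)} - \pare{\nabla f_j(\bw^R) - \nabla f_j(\bw^*)}$ gives $\norm{a_j - b_j} \leq 2L \norm{\bw^R - \bw^*}$, while the sum factor is handled by the triangle inequality together with $\norm{a_j} \leq \norm{b_j} + 2L\norm{\bw^R - \bw^*}$. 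Squaring, summing over $j$, and folding the result into the Lipschitz estimate from step (i) produces exactly the claimed bound.

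The only genuinely delicate point is step (ii): the target bound is stated in terms of the gradient dissimilarity \emph{at the optimum}, $\norm{\nabla f_i(\bw^*) - \nabla f_j(\bw^*)}^2$, plus a higher-order correction $4L^2\norm{\bw^R - \bw^*}^2$, so I must keep the $a_j + b_j$ factor expressed through $b_j$ (the optimum quantity) rather than collapsing everything to a crude $\order(G^2)$ bound. Preserving this structure is precisely what makes the effective Lipschitz constant vanish as $\bw^R \to \bw^*$, which is the qualitative message of the lemma (``$\balpha^*_g$ becomes more Lipschitz near $\bw^*$''). Everything else is routine: invoking the already-established Lemma~\ref{lem: lipschitz} and applying smoothness term by term.
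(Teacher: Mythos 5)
Your proposal is correct and follows essentially the same route as the paper's own proof: the identical auxiliary function $W(\bz,\balpha)$ with the composition $\balpha^*_g(\bw)=\balpha^*_W(\bz(\bw))$, the same invocation of Lemma~\ref{lem: lipschitz} (with strong convexity $2\lambda/n_{\max}$ in $\balpha$ and $1$-Lipschitzness of $\nabla_\balpha W$ in $\bz$) to get the $\kappa_g$-Lipschitz solution map, and the same difference-of-squares plus $L$-smoothness argument, keeping the $\norm{\nabla f_i(\bw^*)-\nabla f_j(\bw^*)}$ factor so the effective Lipschitz constant degrades gracefully near $\bw^*$. If anything, your bookkeeping is slightly cleaner than the paper's final display, whose constants contain minor typos (e.g., $4L$ where $4L^2$ is produced by the derivation), but the substance is the same.
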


With above lemma, to show the convergence of $\hat \balpha$ to $\balpha^*$, we do the following decomposition
\begin{align*}
    \norm{\hat{\balpha} -  {\balpha^*}}^2 &\leq 2   \norm{\hat{\balpha} -  {\balpha^*_g(\bw^R)}}^2 + 2   \norm{\balpha^*_g(\bw^R) - \balpha^*_g(\bw^*) }^2 \\
    &\leq 2 (1-\mu\eta_{\balpha} )^K + 2 \kappa_g^2\sum_{j=1}^N\pare{ 2\norm{\nabla f_i( \bw^*) -\nabla f_j(\bw^*)}^2 + 4L^2\norm{  \bw^R - \bw^*}^2} 4L\norm{ \bw^R-\bw^* }^2. 
\end{align*}
Now it remains to show the convergence of Local SGD {\em last iterate} $\bw^R$ to optimal solution $\bw^*$. By convention, we use $\bw^t = \frac{1}{N}\sum_{i=1}^N \bw^t_i$ to denote the virtual average iterates.

\begin{lemma}[One iteration analysis of Local SGD] \label{lem: one iteration local SGD}
Under the condition of Theorem~\ref{thm: 2stage alpha}, the following statement holds true for any $t \in [T]$:
\begin{align*}
        \E\norm{\bw^{r,t+1} - \bw^*}^2  
        &\leq (1-\mu\gamma)\E\norm{\bw^{r,t} - \bw^*}^2 -(2\gamma-4\gamma^2 L)\E\pare{F(\bw^*) - F(\bw^{r,t})}  \\
        &\quad  +(\gamma {L}+2\gamma^2L^2)  \frac{1}{N}\sum_{i=1}^N\norm{\bw^{r,t}_i - \bw^{r,t}}^2+ \gamma^2 \frac{\delta^2}{N}.
    \end{align*}
\begin{proof}
According to updating rule in Algorithm~\ref{algorithm: alpha}, we have the following identity:    
    \begin{align}
        \E\norm{\bw^{r,t+1} - \bw^*}^2 &= \E\norm{\bw^{r,t} - \bw^*}^2 - 2\gamma  \E\inprod{\frac{1}{N}\sum_{i=1}^N \nabla f_i (\bw^{r,t}_i;z_i^{r,t})}{\bw^{r,t} - \bw^*} + \gamma^2 \E \norm{\frac{1}{N}\sum_{i=1}^N \nabla f_i (\bw^{r,t}_i)}^2  \\
        &= \E\norm{\bw^{t} - \bw^*}^2 \underbrace{- 2\gamma   \inprod{\frac{1}{N}\sum_{i=1}^N \nabla f_i (\bw^{r,t}_i )}{\bw^{r,t} - \bw^*}}_{T_1} \nonumber \\
        &\quad + \gamma^2 \underbrace{\E \norm{\frac{1}{N}\sum_{i=1}^N \nabla f_i (\bw^{r,t}_i;z_i^{r,t})}^2}_{T_2} + \gamma^2 \frac{\delta^2}{N}. \label{eq:one iteration fedavg}
    \end{align}
    For $T_1$, since each $f_j$ is $L$ smooth and $\mu$ strongly convex, we have:
    \begin{align*}
        -2\gamma   \inprod{\frac{1}{N}\sum_{i=1}^N \nabla f_i (\bw^{r,t}_i )}{\bw^{t} - \bw^*} &= -2\gamma   \inprod{\frac{1}{N}\sum_{i=1}^N \nabla f_i (\bw^{t}_i )}{\bw^{t} - \bw^{t}_i+ \bw^{r,t}_i-\bw^*}\\
        & \leq -2\gamma   \inprod{\frac{1}{N}\sum_{i=1}^N \nabla f_i (\bw^{t}_i )}{\bw^{r,t} - \bw^{r,t}_i+ \bw^{r,t}_i-\bw^*} \\
        & \leq 2 \gamma\frac{1}{N}\sum_{i=1}^N\pare{f_i(\bw^*) - f_i(\bw^{r,t})-\frac{\mu}{2}\norm{\bw^{r,t}_i - \bw^*}^2 + \frac{L}{2}\norm{\bw^{r,t}_i - \bw^{r,t}}^2}.
    \end{align*}
    Due to Jensen's inequality we know: $-\frac{1}{N}\sum_{i=1}^N\frac{\mu}{2}\norm{\bw^t_i - \bw^*}^2 \leq - \frac{\mu}{2}\norm{\bw^t  - \bw^*}^2$. Hence we know:
    \begin{align*}
        -2\gamma   \inprod{\frac{1}{N}\sum_{i=1}^N \nabla f_i (\bw^{r,t}_i )}{\bw^{r,t} - \bw^*}
        & \leq 2\gamma\pare{F(\bw^*) - F(\bw^{r,t})-\frac{\mu}{2}\norm{\bw^{r,t}  - \bw^*}^2 + \frac{L}{2} \frac{1}{N}\sum_{i=1}^N\norm{\bw^{r,t}_i - \bw^{r,t}}^2}.
    \end{align*}
    For $T_2$, we have:
    \begin{align*}
        \E \norm{\frac{1}{N}\sum_{i=1}^N \nabla f_i (\bw^{r,t}_i)}^2 = 2\E \norm{\frac{1}{N}\sum_{i=1}^N \nabla f_i (\bw^{r,t}_i) -\nabla F( \bw^{r,t} ) }^2+2\E \norm{  \nabla F( \bw^{r,t} ) }^2\\
        \leq 2L^2\frac{1}{N}\sum_{i=1}^N\E \norm{  \bw^{r,t}_i- \bw^{r,t}  }^2 + 4L \pare{F( \bw^{r,t} ) - F( \bw^{*} )}.
    \end{align*}
    Now, plugging $T_1$ and $T_2$ back to~(\ref{eq:one iteration fedavg}) yields:
    \begin{align*}
        \E\norm{\bw^{r,t+1} - \bw^*}^2  
        &\leq (1-\mu\gamma)\E\norm{\bw^{r,t} - \bw^*}^2 -(2\gamma-4\gamma^2 L)\E\pare{F(\bw^*) - F(\bw^{r,t})}  \\
        &\quad  +(\gamma {L}+2\gamma^2L^2)  \frac{1}{N}\sum_{i=1}^N\norm{\bw^{r,t}_i - \bw^{r,t}}^2+ \gamma^2 \frac{\delta^2}{N}.
    \end{align*}
    \end{proof}

\end{lemma}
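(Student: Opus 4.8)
The plan is to run a standard one-step (descent) analysis for the virtual average iterate $\bw^{r,t} = \frac{1}{N}\sum_{i=1}^N \bw^{r,t}_i$, using that averaging the per-client local updates gives the effective step $\bw^{r,t+1} = \bw^{r,t} - \gamma\,\frac{1}{N}\sum_{i=1}^N \nabla f_i(\bw^{r,t}_i;\xi_i^{r,t})$. First I would expand $\norm{\bw^{r,t+1} - \bw^*}^2$ and take expectation. Since the stochastic gradients are unbiased, the cross term becomes an inner product against the averaged \emph{true} gradient $\frac{1}{N}\sum_i \nabla f_i(\bw^{r,t}_i)$; and since the samples are drawn independently across clients, the second-moment term splits into the squared averaged true gradient plus a variance contribution bounded by $\gamma^2\delta^2/N$, where independence is exactly what divides the variance by $N$. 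This produces a decomposition into an inner-product term $T_1$, a gradient-norm term $T_2$, and the residual $\gamma^2\delta^2/N$.

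Next I would control $T_1 = -2\gamma\,\inprod{\frac{1}{N}\sum_i \nabla f_i(\bw^{r,t}_i)}{\bw^{r,t} - \bw^*}$ by inserting $\pm\bw^{r,t}_i$ so that $\bw^{r,t} - \bw^* = (\bw^{r,t} - \bw^{r,t}_i) + (\bw^{r,t}_i - \bw^*)$. For the $\bw^{r,t}_i - \bw^*$ piece I apply $\mu$-strong convexity of $f_i$, extracting $2\gamma(f_i(\bw^*) - f_i(\bw^{r,t}_i)) - \gamma\mu\norm{\bw^{r,t}_i - \bw^*}^2$; for the drift piece $\bw^{r,t} - \bw^{r,t}_i$ I use convexity (padded by the $L$-smoothness quadratic to match the stated form), trading the inner product for $2\gamma(f_i(\bw^{r,t}_i) - f_i(\bw^{r,t}))$ plus a consensus term $\gamma L\norm{\bw^{r,t}_i - \bw^{r,t}}^2$. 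When I sum over clients the local values $f_i(\bw^{r,t}_i)$ cancel, leaving $2\gamma(F(\bw^*) - F(\bw^{r,t}))$, and Jensen's inequality applied to the concave map $-\norm{\cdot}^2$ turns $-\frac{\gamma\mu}{N}\sum_i\norm{\bw^{r,t}_i - \bw^*}^2$ into $-\gamma\mu\norm{\bw^{r,t} - \bw^*}^2$, which supplies the contraction factor $1-\mu\gamma$.

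For $T_2 = \E\norm{\frac{1}{N}\sum_i \nabla f_i(\bw^{r,t}_i;\xi_i^{r,t})}^2$, now working with the true gradients after the variance has been peeled off, I would write $\frac{1}{N}\sum_i \nabla f_i(\bw^{r,t}_i) = \nabla F(\bw^{r,t}) + \frac{1}{N}\sum_i\pare{\nabla f_i(\bw^{r,t}_i) - \nabla f_i(\bw^{r,t})}$ and apply $\norm{a+b}^2 \le 2\norm{a}^2 + 2\norm{b}^2$. The consensus part is bounded using $L$-smoothness and Jensen by $2L^2\,\frac{1}{N}\sum_i\norm{\bw^{r,t}_i - \bw^{r,t}}^2$, while $2\norm{\nabla F(\bw^{r,t})}^2$ is bounded by $4L(F(\bw^{r,t}) - F(\bw^*))$ via the standard smooth-convex inequality $\norm{\nabla F(\bw)}^2 \le 2L(F(\bw) - F(\bw^*))$ together with $\nabla F(\bw^*) = 0$. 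Collecting all pieces, the function-value contributions combine into the coefficient $2\gamma - 4\gamma^2 L$ on the optimality gap $F(\bw^*) - F(\bw^{r,t})$, and the drift contributions combine into $\gamma L + 2\gamma^2 L^2$ times the average consensus error, which is exactly the claimed inequality.

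The individual steps are routine; the points demanding care are (i) the $1/N$ variance reduction, which relies on independence of the stochastic samples across clients, and (ii) bookkeeping every client-drift term $\norm{\bw^{r,t}_i - \bw^{r,t}}^2$ so that they all coalesce into the single coefficient $\gamma L + 2\gamma^2 L^2$. The one easy-to-overlook ingredient is the Jensen step that converts the per-client strong-convexity terms into a contraction on the averaged iterate, since $\mu$-strong convexity is only available for each $f_i$ individually rather than directly for $\norm{\bw^{r,t} - \bw^*}^2$. These drift terms are also precisely what a subsequent bounded-drift lemma would have to control in order to close the recursion over the $K$ local steps.
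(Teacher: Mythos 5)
Your proposal is correct and follows essentially the same route as the paper's proof: the identical expansion of $\E\norm{\bw^{r,t+1}-\bw^*}^2$ with the unbiasedness/cross-client-independence split that peels off $\gamma^2\delta^2/N$, the same $\pm\bw^{r,t}_i$ insertion in $T_1$ combining $\mu$-strong convexity on the $\bw^{r,t}_i-\bw^*$ piece with the smoothness quadratic on the drift piece followed by the Jensen step that yields the $1-\mu\gamma$ contraction, and the same $\pm\nabla F(\bw^{r,t})$ decomposition of $T_2$ bounded via $L$-smoothness and $\norm{\nabla F(\bw)}^2 \leq 2L(F(\bw)-F(\bw^*))$. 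All resulting coefficients ($2\gamma-4\gamma^2L$ on the optimality gap and $\gamma L + 2\gamma^2L^2$ on the consensus error) match the paper's derivation exactly, so no further changes are needed.
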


\begin{lemma}{\cite[Lemma~8]{woodworth2020minibatch}}\label{lem: local deviation}
For the iterates $\{\bw^{r,t}_i\}$ generated in Algorithm~\ref{algorithm: alpha}, the following statement holds true:
    \begin{align*}
        \frac{1}{N}\sum_{i=1}^N\norm{\bw^{r,t}_i - \bw^{r,t}}^2 \leq 3K \gamma^2 \delta^2 + 6K^2 \gamma^2 \zeta^2.
    \end{align*}
\end{lemma}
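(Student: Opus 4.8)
The final statement to prove is Lemma~\ref{lem: local deviation}, which bounds the local deviation $\frac{1}{N}\sum_{i=1}^N\norm{\bw^{r,t}_i - \bw^{r,t}}^2$ arising in the Local SGD iterates of Algorithm~\ref{algorithm: alpha}.

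\textbf{Overall approach.} The plan is to control how far each client's local iterate $\bw^{r,t}_i$ can drift from the virtual average $\bw^{r,t}$ over the course of a single synchronization window of length $K$. Since all clients start each round from a common synchronized point (i.e.\ $\bw^{r,0}_i = \bw^{r,0}$ is identical across $i$), the deviation accumulates only from the local steps taken since the last communication. The standard technique is to expand the recursion for the deviation, split each local stochastic gradient into its mean part and its zero-mean noise part, and track these two sources of drift separately: the noise contributes the variance term $\delta^2$ (controlled by Assumption~\ref{assumption: bounded var}), while the disagreement of the deterministic gradients across clients contributes the gradient-dissimilarity term $\zeta^2$ (controlled by Definition~\ref{def: dissimilarity}).

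\textbf{Key steps.} First I would let $t_0 \le t$ denote the index of the most recent synchronization within the current round, so that $\bw^{t_0}_i = \bw^{t_0}$ for all $i$ and the number of steps since synchronization is at most $K$. I would write $\bw^{r,t}_i - \bw^{r,t} = -\gamma \sum_{s=t_0}^{t-1}\pare{\nabla f_i(\bw^{r,s}_i;\xi_i^{r,s}) - \frac{1}{N}\sum_{k}\nabla f_k(\bw^{r,s}_k;\xi_k^{r,s})}$ and then bound its squared norm. Second, I would add and subtract the full gradients $\nabla f_i(\bw^{r,s}_i)$ to separate the stochastic noise from the deterministic dissimilarity; the noise terms are independent across steps and mean-zero, so their contribution telescopes to a term of order $K\gamma^2\delta^2$. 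Third, for the deterministic part I would decompose the per-client full gradient around the average gradient $\frac{1}{N}\sum_k \nabla f_k$, invoking the uniform dissimilarity bound $\zeta$ from Definition~\ref{def: dissimilarity}, which yields the $K^2\gamma^2\zeta^2$ term after applying Cauchy--Schwarz (or Jensen) over the up-to-$K$ summands. Since this is cited directly from \cite[Lemma~8]{woodworth2020minibatch}, I would primarily verify that the assumptions of that lemma are met by Algorithm~\ref{algorithm: alpha} --- namely that the stepsize $\gamma$ is chosen small enough (consistent with $\gamma = \Theta(\log(RK)/(\mu RK))$ in Theorem~\ref{thm: 2stage alpha}) and that the variance and dissimilarity assumptions align with the cited notation.

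\textbf{Main obstacle.} The delicate point is handling the recursive coupling: the deviation at step $s$ feeds into the gradient evaluated at $\bw^{r,s}_i$, so a naive expansion gives a self-referential bound. The cited lemma resolves this by requiring $\gamma$ to be small relative to $1/(KL)$, which lets one absorb the recursive $L^2\norm{\bw^{r,s}_i-\bw^{r,s}}^2$ terms into the left-hand side and close the recursion. I expect the main work to be confirming that the smoothness condition in Assumption~\ref{assump: smooth} and the stepsize schedule together satisfy the hypotheses under which $3K\gamma^2\delta^2 + 6K^2\gamma^2\zeta^2$ is valid, rather than rederiving the bound from scratch; the quadratic-in-$K$ scaling on the $\zeta^2$ term (versus linear in $K$ on $\delta^2$) is the signature of bias accumulating across all $K$ steps while noise accumulates only additively.
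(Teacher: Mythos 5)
Your proposal is correct and takes essentially the same route as the paper: the paper supplies no proof of this lemma at all, citing it verbatim from \cite[Lemma~8]{woodworth2020minibatch}, and your sketch faithfully reconstructs that cited lemma's standard argument (martingale noise contribution linear in $K$, deterministic heterogeneity contribution quadratic in $K$ via the uniform bound $\zeta$ of Definition~\ref{def: dissimilarity}, with the self-referential smoothness terms absorbed by taking $\gamma$ small relative to $1/(LK)$). You also correctly flag the two conditions the paper's statement leaves implicit --- the expectation on the left-hand side and the small-stepsize hypothesis --- both of which hold under the choice $\gamma = \Theta\pare{\frac{\log(RK)}{\mu RK}}$ in Theorem~\ref{thm: 2stage alpha}.
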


\begin{lemma} [Last iterate convergence of Local SGD] \label{lem:last iterate} 
Under the conditions of Theorem~\ref{thm: 2stage alpha}, the following statement holds true for the iterates in Algorithm~\ref{algorithm: alpha}:
    \begin{align*}
        \E\norm{\bw^R-\bw^*}^2 \leq (1-\mu\gamma)^{RK}\E\norm{\bw^{0} - \bw^*}^2  + \frac{1}{\mu\gamma}(\gamma {L}+2\gamma^2L^2) \pare{3K \gamma^2 \delta^2 + 6K^2 \gamma^2 \zeta^2}  +   \frac{\gamma \delta^2}{\mu N}
    \end{align*}
    \begin{proof}
    We first unroll the recursion in Lemma~\ref{lem: one iteration local SGD} from $t=K$ to $0$, within one communication round:
     \begin{align*}
        \E\norm{\bw^{r,K} - \bw^*}^2  
        &= (1-\mu\gamma)^K\E\norm{\bw^{r,0} - \bw^*}^2 -\sum_{t=0}^{K-1}(1-\mu\gamma)^{K-t} (2\gamma-4\gamma^2 L)\E\pare{F(\bw^*) - F(\bw^{r,t})}  \\
        &\quad  +\sum_{t=0}^{K-1}(1-\mu\gamma)^{K-t} (\gamma {L}+2\gamma^2L^2)  \frac{1}{N}\sum_{i=1}^N\norm{\bw^{r,t}_i - \bw^{r,t}}^2+ \sum_{t=0}^{K-1}(1-\mu\gamma)^{K-t} \gamma^2 \frac{\delta^2}{N}
    \end{align*}
    Since we choose $\gamma \leq \frac{1}{2L}$, we know $\sum_{t=0}^{K-1}(1-\mu\gamma)^{K-t} (2\gamma-4\gamma^2 L)\E\pare{F(\bw^*) - F(\bw^{r,t})}  \geq 0$. Plugging in Lemma~\ref{lem: local deviation} yields:
    \begin{align*}
        \E\norm{\bw^{R} - \bw^*}^2  
        &= (1-\mu\gamma)^{RK}\E\norm{\bw^{0} - \bw^*}^2  + \frac{1}{\mu\gamma}(\gamma {L}+2\gamma^2L^2) \pare{3K \gamma^2 \delta^2 + 6K^2 \gamma^2 \zeta^2}  +   \frac{\gamma \delta^2}{\mu N},
    \end{align*}
    Plugging in  $\gamma = \frac{\log(RK)}{\mu RK}$ gives the convergence rate:
    \begin{align*}
          \E\norm{\bw^{R} - \bw^*}^2  \leq \tilde{O}\pare{\frac{\E\norm{\bw^{0} - \bw^*}^2 }{RK} + \kappa \pare{ \frac{ \delta^2}{\mu^2 R^2K }   +  \frac{  \zeta^2}{\mu^2 R^2  }}  +   \frac{  \delta^2 }{\mu^2 N RK} } ,
    \end{align*}
    which concludes the proof.
    \end{proof}
\end{lemma}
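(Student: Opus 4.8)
The plan is to establish the bound by converting the per-local-step recursion of Lemma~\ref{lem: one iteration local SGD} into a clean geometric contraction and unrolling it over all $RK$ local steps of Algorithm~\ref{algorithm: alpha}. Throughout I work with the \emph{virtual average} iterate $\bw^{r,t} = \frac{1}{N}\sum_{i=1}^N \bw^{r,t}_i$, which obeys the pseudo-gradient update $\bw^{r,t+1} = \bw^{r,t} - \gamma\frac{1}{N}\sum_{i=1}^N \nabla f_i(\bw^{r,t}_i;\xi_i^{r,t})$ and is precisely the object that the one-step lemma controls.

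First I would invoke Lemma~\ref{lem: one iteration local SGD}, which bounds $\E\norm{\bw^{r,t+1} - \bw^*}^2$ by $(1-\mu\gamma)\E\norm{\bw^{r,t} - \bw^*}^2$ plus a (signed) function-value gap term, plus $(\gamma L + 2\gamma^2 L^2)\frac{1}{N}\sum_{i=1}^N\norm{\bw^{r,t}_i - \bw^{r,t}}^2$ and the noise term $\gamma^2\frac{\delta^2}{N}$. Under the stepsize restriction $\gamma \le \frac{1}{2L}$ (implied by the choice $\gamma = \Theta(\log(RK)/(\mu RK))$ in Theorem~\ref{thm: 2stage alpha}), the prefactor $2\gamma - 4\gamma^2 L$ is nonnegative; combined with $F(\bw^{r,t}) \ge F(\bw^*)$, the function-value contribution to the recursion is nonpositive and can be discarded while preserving the upper bound. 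I then substitute the consensus/drift bound of Lemma~\ref{lem: local deviation}, namely $\frac{1}{N}\sum_{i=1}^N\norm{\bw^{r,t}_i - \bw^{r,t}}^2 \le 3K\gamma^2\delta^2 + 6K^2\gamma^2\zeta^2$, which is uniform in $t$ and $r$. This collapses the recursion to the scalar form $a_{t+1} \le (1-\mu\gamma)a_t + C$, with $a_t := \E\norm{\bw^{r,t} - \bw^*}^2$ and the $t$-independent constant $C := (\gamma L + 2\gamma^2 L^2)(3K\gamma^2\delta^2 + 6K^2\gamma^2\zeta^2) + \gamma^2\frac{\delta^2}{N}$.

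The step I expect to require the most care is chaining this within-round contraction across the $R$ communication rounds, since the one-step lemma only runs from local step $t=0$ to $t=K$, whereas the algorithm resets via $\bw^{r+1} = \cP_{\cW}(\bw^{r,K})$ at each synchronization. Here I would use that $\bw^* \in \cW$ and that Euclidean projection onto the convex set $\cW$ is nonexpansive, so $\norm{\bw^{r+1} - \bw^*} = \norm{\cP_{\cW}(\bw^{r,K}) - \bw^*} \le \norm{\bw^{r,K} - \bw^*}$; the averaging/projection step therefore never increases the distance to $\bw^*$, and the scalar recursion composes seamlessly across round boundaries. Consequently the contraction $a_{s+1} \le (1-\mu\gamma)a_s + C$ holds over all $RK$ consecutive local steps.

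Finally I would unroll this over $RK$ steps to get $a_{RK} \le (1-\mu\gamma)^{RK}a_0 + C\sum_{s=0}^{RK-1}(1-\mu\gamma)^s \le (1-\mu\gamma)^{RK}\E\norm{\bw^0 - \bw^*}^2 + \frac{C}{\mu\gamma}$, using the geometric-series bound $\sum_{s\ge 0}(1-\mu\gamma)^s = \frac{1}{\mu\gamma}$. Substituting $C$ and noting $\frac{1}{\mu\gamma}\cdot\gamma^2\frac{\delta^2}{N} = \frac{\gamma\delta^2}{\mu N}$ reproduces exactly the claimed bound. The subsequent $\tilde{O}(\cdot)$ simplification is then a routine substitution of $\gamma = \Theta(\log(RK)/(\mu RK))$ into each term, which also retroactively justifies $\gamma \le \frac{1}{2L}$ for large $RK$.
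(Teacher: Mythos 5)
Your proposal is correct and follows essentially the same route as the paper's proof: invoke the one-step recursion of Lemma~\ref{lem: one iteration local SGD}, discard the nonpositive function-value term using $\gamma \le \frac{1}{2L}$, substitute the uniform consensus bound of Lemma~\ref{lem: local deviation}, unroll the resulting geometric contraction over all $RK$ local steps with $\sum_{s\ge 0}(1-\mu\gamma)^s \le \frac{1}{\mu\gamma}$, and finally plug in $\gamma = \Theta\pare{\frac{\log(RK)}{\mu RK}}$. The only (minor, and welcome) difference is that you make explicit the nonexpansiveness of the projection $\cP_{\cW}$ at round boundaries, a step the paper's proof uses implicitly when chaining the recursion across communication rounds.
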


Equipped with above results, we are now ready to provide the convergence of main theorem.

\begin{proof}[Proof of Theorem~\ref{thm: 2stage alpha}]
The proof simply follows  from Lemma~\ref{lem: lipschitz alpha}:
\begin{align*}
    \norm{\hat{\balpha} -  {\balpha^*}}^2 &\leq 2   \norm{\hat{\balpha} -  {\balpha^*_g(\bw^R)}}^2 + 2   \norm{\balpha^*_g(\bw^R) - \balpha^*_g(\bw^*) }^2 \\
    &\leq 2 (1-\mu\eta_{\balpha} )^{T_{\balpha}}  + 8L \kappa_g^2\sum_{j=1}^N\pare{ 2\norm{\nabla f_i( \bw^*) -\nabla f_j(\bw^*)}^2 + 4L^2\norm{  \bw^R - \bw^*}^2} \norm{ \bw^R-\bw^* }^2 \\
    &\leq 2 (1-\mu\eta_{\balpha} )^{T_{\balpha}}  + 8L\kappa_g^2\pare{ 2\bar{\zeta}_i(\bw^*) +  4N L^2\norm{  \bw^R - \bw^*}^2}  \norm{ \bw^R-\bw^* }^2 
\end{align*}
    Plugging in the convergence of $\norm{\bw^R - \bw^*}^2$ from Lemma~\ref{lem:last iterate}, and the stepsize $\eta_{\balpha} = \frac{1}{L_g}$ for $\balpha$ yields:
    \begin{align*}
       \E  \|\balpha^R_i -  \balpha^*_i\|^2 \leq \tilde{O}\pare{ \exp(-\frac{T_{\balpha}}{\kappa_g})+  \kappa_g^2 \bar{\zeta}_i(\bw^*)   L^2\pare{\frac{D^2 }{RK} + \kappa \pare{ \frac{ \delta^2}{\mu^2 R^2K }   +  \frac{  \zeta^2}{\mu^2 R^2  }}  +   \frac{  \delta^2 }{\mu^2 N RK}  }}.
    \end{align*} 
\end{proof}

\subsection{Proof of Convergence of Shuffling Local SGD}
In this section, we are going to prove the convergence of proposed shuffled variant of Local SGD (Theorem~\ref{thm:2stagec perm}). The whole proof framework follows the analysis of vanilla shuffling SGD, but notice that there are two differences. First, in vanilla shuffling SGD, in each epoch, algorithm only updates on each component function $f_j$ once, while here we have to take $K$ steps of SGD update on each component function. Second, we are considering a weighted sum objective in contrary to  averaged objective in~\cite{cho2023convergence}, which means we need to rescale the objective when we apply without-replacement concentration inequality.  Even though our algorithm solves models for $N$ clients, for the sake of simplicity, throughout the proof we only show the convergence of one client's model. The algorithm from one client point of view is described in Algorithm~\ref{algorithm: KSGD one client}, where we drop the client index for notational convenience.
 \begin{algorithm2e}[t]
	\DontPrintSemicolon
    \caption{{Shuffling Local SGD (\texttt{One Client)}}}
	\label{algorithm: KSGD one client}
 
    	\textbf{Input:} Clients $0,...,N-1$, Number of Local Steps $K$ , Number of Epoch $R$, Mixing parameter $\hat{\balpha}$ 
	\\ 
       \textbf{Epoch} \For{$r = 0,...,R-1$} 
        {  
        
        Server generates permutation $\sigma_r: [N] \mapsto [N]$.\\
      
 {
        Client sets initial model $\bv^{r,0}  = \bv^{r} $.\\
	   \For{$j = 0,...,N-1$}
	    { 
        Server sends $\bv^{r,j}$ to Client $\sigma_r(j)$.  \\
        $ \bv^{r,j+1}  = \texttt{SGD-Update}(\bv^{r,j} , \eta, \sigma_r(j),K,\hat{\balpha} )$.\\
        
        } 
        Client $i$ does projection: $\bv^{r+1}  = \cP_{\cW}(\bv^{r,N} )$.\\  
        
     }
     
 } 
 \textbf{Output:} $\hat{\bv}  = \bv^{R}$. 
 
 \setcounter{AlgoLine}{0}
  \SetKwProg{myproc}{\texttt{SGD-Update}($\bv,\eta,j,K, {\balpha}$)}{}{}
  \myproc{}{
  {
   Initialize $\bv^0 = \bv$\\
   \For{$t = 0,...,K-1 $}
   {$\bv^t = \bv^{t-1} - \eta  {\alpha}(j) N {\nabla} f_j(\bv^{t-1};\xi^{t-1})$
   }
   
   } 
  \textbf{Output} $\bv^K $ \;}
\end{algorithm2e} 
\begin{proposition}\label{prop:updating rule}
    
    Assume a sequence $\{\bw^t\}_{t=1}^K$ is obtained by 
    $$\bw^{t} = \bw^{t-1} - \eta \alpha N \nabla f(\bw^{t-1};\xi^{t-1}), \ t = 1,\ldots,K,$$
 then we have
    \begin{align*}
        \bw^{t+1} =\bw^0 &-\left( \sum_{\tau=0}^{t} \prod_{t'=t}^{\tau+1} \left(\mI - \alpha N \eta \mH_{t'}\right) \right) \eta  \alpha N \nabla f(\bw^0) -  \sum_{\tau=0}^{t} \prod_{t'=t}^{\tau+1} \left(\mI - \alpha N \eta \mH_{t'}\right)   \eta \alpha N \bdelta^t , \quad \forall 0\leq t \leq K-1,
    \end{align*}
   where $\bdelta^t := \nabla f(\bw^t;\xi^t) - \nabla f(\bw^t)$, and
   by convention, we define $\prod_{j=a}^b \mA_j = \mI $ if $a < b$.
    \begin{proof}
        According to updating rule, we have:
        \begin{align*}
            \bw^{t+1} - \bw^0 &= \bw^t - \bw^0 - \eta \alpha N\nabla f(\bw^t;\xi^t) \\
            &= \bw^t - \bw^0 - \eta \alpha N\nabla f(\bw^t ) -  \eta \alpha N \bdelta^t\\
            &=\bw^t - \bw^0 - \eta \alpha N\nabla f(\bw^0) - \eta  \alpha N(\nabla f(\bw^t)-\nabla f(\bw^0) )-  \eta \alpha N \bdelta^t.
        \end{align*}
        Since $f$ is $L$ smooth, and according to Mean Value Theorem, there is a matrix $\mH_t$ satisfying $\mu \mI \preceq \mH_t \preceq L\mI$, such that $\nabla f(\bw^t)-\nabla f(\bw^0) = \mH_t (\bw^t - \bw^0)$. Hence we have:
        \begin{align*}
            \bw^{t+1} - \bw^0 &= \left(\mI - \eta \alpha N \mH_{t}\right)(\bw^t - \bw^0) - \eta \alpha N\nabla f(\bw^0)-  \eta \alpha N \bdelta^t.
        \end{align*}
        Unrolling the recursion from $t$ to $0$ will conclude the proof.
    \end{proof}
\end{proposition}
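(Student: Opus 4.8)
The plan is to recognize that the claimed identity is nothing more than the closed-form solution of a time-varying affine (linear) recursion, so the whole argument reduces to two ingredients: (i) linearizing the deterministic part of the update around the fixed anchor point $\bw^0$, and (ii) unrolling the resulting recursion by induction on $t$. No probabilistic or convergence content is needed here; this is a purely algebraic rewriting that will later be fed into the convergence analysis of the shuffling scheme.

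First I would rewrite a single step of the iteration relative to $\bw^0$. Subtracting $\bw^0$ from both sides of the update and splitting the stochastic gradient into its mean and noise via $\nabla f(\bw^t;\xi^t) = \nabla f(\bw^t) + \bdelta^t$ with $\bdelta^t := \nabla f(\bw^t;\xi^t) - \nabla f(\bw^t)$, gives
$$\bw^{t+1} - \bw^0 = (\bw^t - \bw^0) - \eta\alpha N\,\nabla f(\bw^t) - \eta\alpha N\,\bdelta^t.$$
The key step is to express the deterministic gradient $\nabla f(\bw^t)$ through the anchor gradient $\nabla f(\bw^0)$. Since $f$ is $L$-smooth and $\mu$-strongly convex, the mean value theorem for the gradient map (equivalently the integral form $\int_0^1 \nabla^2 f(\bw^0 + s(\bw^t-\bw^0))\,ds$ when $f$ is twice differentiable) furnishes a symmetric matrix $\mH_t$ with $\mu\mI \preceq \mH_t \preceq L\mI$ such that $\nabla f(\bw^t) - \nabla f(\bw^0) = \mH_t(\bw^t - \bw^0)$. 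Substituting this yields the affine recursion
$$\bw^{t+1} - \bw^0 = (\mI - \eta\alpha N\,\mH_t)(\bw^t - \bw^0) - \eta\alpha N\,\nabla f(\bw^0) - \eta\alpha N\,\bdelta^t.$$

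Setting $\bu^t := \bw^t - \bw^0$, this is exactly $\bu^{t+1} = \mA_t \bu^t + \bb_t$ with transition matrix $\mA_t = \mI - \eta\alpha N\,\mH_t$, forcing term $\bb_t = -\eta\alpha N\,\nabla f(\bw^0) - \eta\alpha N\,\bdelta^t$, and initial condition $\bu^0 = \mathbf{0}$. I would then unroll this recursion by a straightforward induction on $t$, obtaining $\bu^{t+1} = \sum_{\tau=0}^{t}\pare{\prod_{t'=t}^{\tau+1}\mA_{t'}}\bb_\tau$; separating the constant $\nabla f(\bw^0)$ contribution from the noise contribution reproduces the two sums in the statement, and the stated convention $\prod_{j=a}^b \mA_j = \mI$ for $a<b$ is precisely what absorbs the $\tau = t$ term as an empty product.

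The only point requiring genuine care — and the main obstacle, such as it is — is the bookkeeping of the non-commuting matrix products: because the $\mH_{t'}$ need not commute, the order of the factors $\mI - \eta\alpha N\,\mH_{t'}$ matters, and one must verify that the induction accumulates them in the decreasing order $t'=t, t-1, \ldots, \tau+1$ written as $\prod_{t'=t}^{\tau+1}$ in the statement (the newest factor on the left). Everything else is mechanical; notably, the spectral bounds $\mu\mI \preceq \mH_t \preceq L\mI$ are not needed for the identity itself and are recorded only because the control on $\mH_t$ will be invoked downstream when this closed form is used to bound $\norm{\bw^{t+1}-\bw^0}$ in the remaining convergence analysis.
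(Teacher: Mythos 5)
Your proposal is correct and follows essentially the same route as the paper: subtract the anchor $\bw^0$, split the stochastic gradient into mean plus noise $\bdelta^t$, invoke the mean value theorem (integral form) to get $\nabla f(\bw^t)-\nabla f(\bw^0)=\mH_t(\bw^t-\bw^0)$ with $\mu\mI \preceq \mH_t \preceq L\mI$, and unroll the resulting affine recursion by induction, with the empty-product convention handling the $\tau=t$ term. Your added care about the ordering of non-commuting factors and your correct indexing of the noise term as $\bdelta^\tau$ in the unrolled sum are welcome refinements of the paper's (slightly terser) argument, but they do not constitute a different approach.
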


The following lemma establishes the updating rule of models between epochs $r$ and $r+1$. For notational convenience, whenever there is no confusion, we drop the superscript $r$ in $\sigma^r$.
\begin{lemma}[One epoch updating rule]\label{lem:epoch update}
Let $\bv^r$ and $\bv^{r+1}$ be two iterates generated by Shuffling Local SGD (Algorithm~\ref{algorithm: KSGD one client}), then the following updating rule holds:
    \begin{align*}
         \bv^{r+1} &= \bv^r - \sum_{j=1}^{N} \prod_{j'=N}^{j+1}(\mI - \mQ_{j'}\mH_{j'}) (\mQ_{j}\nabla f_{\sigma(j)}(\bv^r)  - \bdelta_j),
    \end{align*}
    where
    \begin{align*}
        \mQ_j &:= \left( \sum_{\tau=0}^{K-1} \prod_{t'=K-1}^{\tau+1} \left(\mI - \eta  \hat{\alpha}(\sigma(j)) N  \mH_{t'}\right) \right) \eta  \hat{\alpha}(\sigma(j)) N  ,\\
        \bdelta_j &: = \sum_{\tau=0}^{K-1} \prod_{t'=t}^{\tau+1} \left(\mI - \hat{\alpha}(\sigma(j)) N \eta \mH_{t'}\right)   \eta  \hat{\alpha}(\sigma(j)) N \bdelta^t_{\sigma(j)},
    \end{align*}
    by convention, we define $\prod_{j=a}^b \mA_j = \mI $ if $a < b$.

\begin{proof}
    According to Proposition~\ref{prop:updating rule}, we have
    \begin{align*}
        \bv^{r,j+1} = \bv^{r,j}& -  \left( \sum_{\tau=0}^{K-1} \prod_{t'=t}^{\tau+1} \left(\mI - \hat{\alpha}(\sigma(j)) N \eta \mH_{t'}\right) \right) \eta  \hat{\alpha}(\sigma(j)) N \nabla f_{ \sigma(j)}(\bv^{r,j})\\
         &-  \sum_{\tau=0}^{K-1} \prod_{t'=t}^{\tau+1} \left(\mI - \hat{\alpha}(\sigma(j)) N \eta \mH_{t'}\right)   \eta  \hat{\alpha}(\sigma(j)) N \bdelta^t_{\sigma(j)}.
    \end{align*}
    Plugging our definition of $\mQ_j$ and $\bdelta_j$ yields:
     \begin{align*}
        \bv^{r,j+1} - \bv^r = \bv^{r,j} - \bv^r & -  \mQ_j  \nabla f_{\sigma(j)}(\bv^{r,j}) -  \bdelta_j.
    \end{align*}
    Following the same reasoning in the proof of Proposition~\ref{prop:updating rule} will conclude the proof.
\end{proof}
    
\end{lemma}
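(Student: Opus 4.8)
The plan is to build the one-epoch update in two nested stages, mirroring the structure already established for a single block of $K$ inner steps. First I would isolate the dynamics of a single communication round $j$ within epoch $r$. By construction, round $j$ runs \texttt{SGD-Update} for $K$ steps on the component $f_{\sigma(j)}$ with effective step size $\eta\,\hat{\alpha}(\sigma(j))\,N$, started from $\bv^{r,j}$ and producing $\bv^{r,j+1}$. This is exactly the setting of Proposition~\ref{prop:updating rule} with $\bw^0 = \bv^{r,j}$ and $f = f_{\sigma(j)}$; applying it at $t=K-1$ and collapsing the inner sum-of-products into the operator $\mQ_j$ and the aggregated noise $\bdelta_j$ as defined in the statement yields the compact per-round recursion
\begin{align*}
\bv^{r,j+1} = \bv^{r,j} - \mQ_j\,\nabla f_{\sigma(j)}(\bv^{r,j}) - \bdelta_j .
\end{align*}
This is precisely where the inner Hessians $\mH_{t'}$ get absorbed into $\mQ_j$, so that after this step only the component gradient $\nabla f_{\sigma(j)}(\bv^{r,j})$ remains to be handled.

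Next I would linearize that gradient about the \emph{epoch anchor} $\bv^r$ rather than the round anchor $\bv^{r,j}$. Since each $f_{\sigma(j)}$ is $L$-smooth and $\mu$-strongly convex (Assumption~\ref{assump: smooth}), the mean value theorem supplies a symmetric matrix $\mH_j$ with $\mu\mI \preceq \mH_j \preceq L\mI$ satisfying $\nabla f_{\sigma(j)}(\bv^{r,j}) = \nabla f_{\sigma(j)}(\bv^r) + \mH_j(\bv^{r,j}-\bv^r)$; this is a \emph{second}, outer-level invocation of the mean value theorem, distinct from the inner matrices $\mH_{t'}$ hidden inside $\mQ_j$. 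Writing $\be_j := \bv^{r,j}-\bv^r$ and substituting turns the per-round recursion into the affine linear recursion
\begin{align*}
\be_{j+1} = (\mI - \mQ_j\mH_j)\,\be_j - \mQ_j\,\nabla f_{\sigma(j)}(\bv^r) - \bdelta_j ,
\end{align*}
with initial condition $\be_0 = 0$ because $\bv^{r,0}=\bv^r$.

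Finally I would unroll this recursion across the $N$ rounds by exactly the argument used inside Proposition~\ref{prop:updating rule}, taking ordered products of the transition operators $(\mI-\mQ_{j'}\mH_{j'})$. Because $\be_0=0$ the homogeneous part vanishes and only the accumulated affine terms survive, giving $\bv^{r+1}-\bv^r = \be_N = -\sum_{j}\big(\prod_{j'}(\mI-\mQ_{j'}\mH_{j'})\big)\big(\mQ_j\nabla f_{\sigma(j)}(\bv^r)-\bdelta_j\big)$ with the product indexed so that later rounds multiply on the left, which is precisely the claimed formula (up to the sign convention folded into $\bdelta_j$, and treating $\bv^{r+1}$ as the pre-projection iterate $\bv^{r,N}$ as in the statement).

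The main obstacle I anticipate is purely bookkeeping rather than conceptual: keeping the two levels of noncommuting matrix products straight. The factors $\mI-\mQ_{j'}\mH_{j'}$ do not commute, so the ordering of $\prod_{j'=N}^{j+1}$ (decreasing index, later rounds on the left) must be tracked faithfully through the unrolling, and one must resist conflating the inner mean-value Hessians $\mH_{t'}$ living inside $\mQ_j$ with the outer ones $\mH_{j'}$ appearing in the product. A secondary point to verify is that $\bdelta_j$ is genuinely affine, i.e.\ independent of $\be_j$: its dependence on the inner stochastic deviations $\bdelta^t_{\sigma(j)}$ is evaluated along the round-$j$ trajectory and may therefore be treated as exogenous noise when unrolling the outer recursion, which is exactly what makes the linearization clean.
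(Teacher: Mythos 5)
Your proposal is correct and follows essentially the same route as the paper: apply Proposition~\ref{prop:updating rule} within each round to obtain the per-round recursion $\bv^{r,j+1} = \bv^{r,j} - \mQ_j \nabla f_{\sigma(j)}(\bv^{r,j}) - \bdelta_j$, then invoke the mean value theorem a second time at the epoch anchor $\bv^r$ and unroll the resulting affine recursion with ordered products of $(\mI - \mQ_{j'}\mH_{j'})$ --- exactly the step the paper compresses into ``following the same reasoning in the proof of Proposition~\ref{prop:updating rule}.'' Your explicit treatment of the two levels of noncommuting products, the sign convention on $\bdelta_j$, and the pre-projection interpretation of $\bv^{r+1}$ only makes the paper's terse argument more careful, not different.
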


\begin{lemma}[Summation by parts]\label{lem: summation by parts} Let $\mA_j$ and $\mB_j$ be complex valued matrices. Then the following fact holds:
    \begin{align*}
        \sum_{j=1}^{N} \mA_j \mB_j = \mA_{N} \sum_{j=1}^{N}\mB_j - \sum_{n=1}^{N-1} (\mA_{n+1} - \mA_{n}) \sum_{j=1}^n \mB_j.
    \end{align*}

\end{lemma}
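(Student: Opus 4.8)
The plan is to prove this by the classical Abel summation (summation-by-parts) manipulation, taking care to respect the non-commutativity of matrix multiplication by never moving an $\mA$-factor past anything. First I would introduce the partial sums $\mS_n := \sum_{j=1}^n \mB_j$ for $n = 1, \ldots, N$, together with the convention $\mS_0 := \mZero$, so that each block can be written as the difference $\mB_j = \mS_j - \mS_{j-1}$. This is the discrete analogue of writing $\mB\, dj = d\mS$, and it is the only structural idea in the proof.

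Substituting $\mB_j = \mS_j - \mS_{j-1}$ into the left-hand side gives $\sum_{j=1}^{N} \mA_j(\mS_j - \mS_{j-1})$, which I would split as $\sum_{j=1}^{N}\mA_j\mS_j - \sum_{j=1}^{N}\mA_j\mS_{j-1}$. In the second sum I reindex by $k = j-1$, so it becomes $\sum_{k=0}^{N-1}\mA_{k+1}\mS_k$; because $\mS_0 = \mZero$ the $k=0$ term drops out, leaving $\sum_{k=1}^{N-1}\mA_{k+1}\mS_k$. Peeling the top term $\mA_N\mS_N$ off the first sum and combining the remaining $j = 1, \ldots, N-1$ terms (which now align index-for-index) yields $\mA_N\mS_N + \sum_{n=1}^{N-1}(\mA_n - \mA_{n+1})\mS_n$. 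Rewriting $\mA_n - \mA_{n+1} = -(\mA_{n+1}-\mA_n)$ and substituting back $\mS_N = \sum_{j=1}^{N}\mB_j$ and $\mS_n = \sum_{j=1}^{n}\mB_j$ gives exactly the claimed identity.

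There is essentially no obstacle here; this is a purely algebraic telescoping identity and the calculation above is complete up to routine bookkeeping. The one point that deserves an explicit remark is non-commutativity: every step of the manipulation acts only on the right factors $\mB_j$ and the partial sums $\mS_n$ assembled from them, while each left factor $\mA_j$ stays put, so the formula holds verbatim for non-commuting (in particular matrix- or complex-matrix-valued) entries with all products kept in the stated left-to-right order. It is worth noting that this is precisely the reorganization needed downstream, where the $\mA_j$ will be instantiated as the telescoping contraction products $\prod_{j'=N}^{j+1}(\mI - \mQ_{j'}\mH_{j'})$ and the $\mB_j$ as the per-client update blocks from the one-epoch updating rule, so keeping the $\mA$-factors on the left is exactly what makes the lemma usable in that setting.
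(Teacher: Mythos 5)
Your proof is correct: introducing the partial sums $\mS_n = \sum_{j=1}^n \mB_j$ with $\mS_0 = \mZero$, substituting $\mB_j = \mS_j - \mS_{j-1}$, reindexing, and telescoping yields the stated identity exactly, and your remark that every $\mA_j$ stays on the left while only the right factors are manipulated correctly disposes of the non-commutativity concern. The paper states this lemma without any proof, treating it as the standard Abel summation identity, and your argument is precisely the canonical derivation it implicitly relies on.
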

\begin{proposition}[Spectral bound of polynomial expansion]\label{prop:binomial}
Given a collection of matrices $\{\mA_t\}$ and $\{\mB_t\}$, such that $ \mA_t \preceq L \mI$ and $\mB_t \preceq L \mI$, the following bound hold:
    \begin{align*}
        &\left\|\prod_{t=l}^{h} \left(\mI -    a\mA_{t}\right)  - \mI\right\| \leq \sum_{m=1}^{h-l}  \left( \frac{e(h-l)}{m} \right)^m (aL)^m,\\
        &\left\|\prod_{t=l}^{h} \left(\mI -    a\mA_{t}\right)  - \prod_{t=l}^{h} \left(\mI -    b\mB_{t}\right)\right\| \leq \sum_{m=1}^{h-l}  \left( \frac{e(h-l)}{m} \right)^m (aL)^m + \sum_{m=1}^{h-l}  \left( \frac{e(h-l)}{m} \right)^m (bL)^m.
    \end{align*} 
    \begin{proof}
     We start with proving the first statement. Expanding the product yields:
     \begin{align*}
         \prod_{t=l}^{h} \left(\mI -    a\mA_{t}\right) = \mI + \sum_{m=1}^{h-l}(-1)^m a^m \sum_{|S| = m, |S|\subseteq \{l,...,h\}} \prod_{m'\in S} \mA_{m'}.
     \end{align*}
     Hence we have:
     \begin{align*}
          \left\|\prod_{t=l}^{h} \left(\mI -    a\mA_{t}\right)  - \mI\right\| =  \left\|\sum_{m=1}^{h-l}(-1)^m a^m \sum_{|S| = m, |S|\subseteq \{l,...,h\}} \prod_{m'\in S} \mA_{m'}\right\| \leq\sum_{m=1}^{h-l} {h-l \choose m} (aL)^m
     \end{align*}
     According to the upper bound for binomial coefficients: ${h-l \choose m} \leq \left( \frac{e(h-l)}{m} \right)^m$, we have:
     \begin{align*}
         \sum_{m=1}^{h-l} {h-l \choose m} (aL)^m \leq \sum_{m=1}^{h-l}  \left( \frac{e(h-l)}{m} \right)^m (aL)^m.
     \end{align*}

     Then we switch to the second one. Using the same expanding product yields:
     \begin{align*}
         &\left\|\prod_{t=l}^{h} \left(\mI -    a\mA_{t}\right)  - \prod_{t=l}^{h} \left(\mI -    b\mB_{t}\right)\right\| \\
         = &\left\|\sum_{m=1}^{h-l}(-1)^m a^m \sum_{|S| = m, |S|\subseteq \{l,...,h\}} \prod_{m'\in S} \mA_{m'} - \sum_{m=1}^{h-l}(-1)^m b^m \sum_{|S| = m, |S|\subseteq \{l,...,h\}} \prod_{m'\in S} \mB_{m'} \right\| \\
         \leq&\sum_{m=1}^{h-l} {h-l \choose m} (aL)^m+ \sum_{m=1}^{h-l} {h-l \choose m} (bL)^m\\
         \leq & \sum_{m=1}^{h-l}  \left( \frac{e(h-l)}{m} \right)^m (aL)^m + \sum_{m=1}^{h-l}  \left( \frac{e(h-l)}{m} \right)^m (bL)^m.
     \end{align*}
    \end{proof}
\end{proposition}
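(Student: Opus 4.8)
The plan is to expand each non-commutative matrix product into a sum of ordered monomials indexed by subsets of the range $\{l,\dots,h\}$, peel off the degree-zero (identity) term, and then bound the remaining tail term by term using submultiplicativity of the operator norm together with a binomial counting estimate. Concretely, I would first record the expansion
\[
\prod_{t=l}^{h}\pare{\mI - a\mA_t} \;=\; \mI \;+\; \sum_{m=1}^{h-l+1} (-a)^m \sum_{\substack{S \subseteq \{l,\dots,h\},\ |S|=m}} \Big(\textstyle\prod_{t\in S}\mA_t\Big),
\]
where each inner product is taken with its indices in increasing order; the order matters for the matrices themselves but, crucially, is immaterial for the norm bounds below. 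Subtracting $\mI$ exactly cancels the $m=0$ contribution, so by the triangle inequality $\norm{\prod_{t=l}^{h}(\mI-a\mA_t)-\mI}$ is at most $\sum_{m\ge 1} a^m$ times the sum of operator norms of the degree-$m$ monomials.

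Second, I would bound a single monomial by submultiplicativity of the spectral norm: $\norm{\prod_{t\in S}\mA_t} \le \prod_{t\in S}\norm{\mA_t} \le L^{|S|}$, where $\norm{\mA_t}\le L$ follows from the spectral bound $0 \preceq \mA_t \preceq L\mI$. There are $\binom{h-l+1}{m}$ subsets of size $m$, so applying the standard estimate $\binom{n}{m}\le (en/m)^m$ gives
\[
\Norm{\prod_{t=l}^{h}\pare{\mI - a\mA_t} - \mI} \le \sum_{m\ge 1}\binom{h-l+1}{m}(aL)^m \le \sum_{m=1}^{h-l}\pare{\tfrac{e(h-l)}{m}}^m (aL)^m,
\]
which is the claimed first inequality (modulo the paper's $h-l$ versus $h-l+1$ bookkeeping in the summation range).

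For the difference of the two products I would expand both via the same identity; the degree-zero terms are both $\mI$ and cancel, and then the triangle inequality separates the estimate into two independent tails — one in the $\mA_t$'s scaled by $a$, the other in the $\mB_t$'s scaled by $b$ — each controlled exactly as above, yielding the sum of the two claimed bounds. I expect no genuine obstacle here: the argument is a routine non-commutative polynomial expansion. The only two points requiring care are (i) justifying $\norm{\mA_t}\le L$, which needs the matrices to be positive semidefinite rather than merely $\preceq L\mI$ — this holds in the application since the relevant matrices are the Hessians $\mH_t$ of Lemma~\ref{lem:epoch update} with $\mu\mI \preceq \mH_t \preceq L\mI$ — and (ii) tracking the ordering of factors in the expansion, which affects the matrix products but washes out after passing to operator norms via submultiplicativity.
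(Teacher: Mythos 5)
Your proposal follows essentially the same route as the paper's own proof: expand each product into subset-indexed monomials, cancel the identity term, bound each degree-$m$ monomial by $(aL)^m$ via submultiplicativity, count subsets with a binomial coefficient, and apply $\binom{n}{m}\le (en/m)^m$, with the difference bound obtained by expanding both products and splitting via the triangle inequality. Your two caveats are also well taken and in fact slightly sharper than the paper's write-up, which silently treats the product over $t=l,\dots,h$ as having $h-l$ factors and implicitly uses $\|\mA_t\|\le L$ (valid in the application since $\mu\mI\preceq\mH_t\preceq L\mI$, but not a consequence of $\mA_t\preceq L\mI$ alone).
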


The following concentration result is the key to bound variance during shuffling updating. The original result holds for the average of gradients, and we will later on generalize it to an arbitrary weighted sum of gradients.

\begin{lemma}[{\cite[Theorem~2]{schneider2016probability}}]  
\label{lem:thm1-sub2}
Suppose $n \geq 2$. Let $\bg_1, \bg_2, \dots, \bg_n \in \R^d$ satisfy $\norm{\bg_j}\leq G$ for all $j$. Let $\bar\bg = \frac{1}{n} \sum_{j=1}^n \bg_j$. Let $\sigma \in  \mS_n$ be a uniform random permutation of $n$ elements. Then, for $i \leq n$, with probability at least $1-p$, we have
\begin{equation*}
    \norm{\frac{1}{i} \sum_{j=1}^i \bg_{\sigma(j)} - \bar \bg} \leq 
    G \sqrt{\frac{8(1-\frac{i-1}{n}) \log \frac{2}{p}}{i}}.
\end{equation*}
\end{lemma}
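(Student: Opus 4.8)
The plan is to reduce the vector-valued deviation to a scalar concentration for sampling without replacement and then lift it back in a dimension-free manner. First I would center the population by setting $\tilde{\bg}_j := \bg_j - \bar\bg$, so that $\sum_{j=1}^n \tilde{\bg}_j = 0$, $\norm{\tilde{\bg}_j}\le 2G$, and the target quantity becomes $\norm{\tfrac{1}{i}\sum_{j=1}^i \tilde{\bg}_{\sigma(j)}}$, i.e. the normalized sum of a uniform size-$i$ sample drawn \emph{without replacement} from $\{\tilde{\bg}_1,\dots,\tilde{\bg}_n\}$. The object to control is therefore the partial sum $S_i = \sum_{j=1}^i \tilde{\bg}_{\sigma(j)}$ of an exchangeable, zero-mean, bounded sequence.

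The scalar heart of the argument is Serfling's finite-population Hoeffding inequality. For any fixed unit vector $u$, the projected values $\langle u,\bg_j\rangle$ lie in an interval of length at most $2G$, and the sample mean $\tfrac1i\sum_{j\le i}\langle u,\bg_{\sigma(j)}\rangle$ concentrates around $\langle u,\bar\bg\rangle$ with a sub-Gaussian tail whose variance proxy carries the finite-population correction factor $(1-\tfrac{i-1}{n})$. This is precisely the factor appearing in the claim, and it improves on the with-replacement rate because sampling without replacement is negatively associated. I would establish this scalar statement through the reverse martingale obtained by revealing $\sigma(1),\dots,\sigma(i)$ one draw at a time, tracking the conditional ranges that shrink as the remaining population is exhausted.

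To lift the scalar bound to the norm without paying a dimension factor, I would avoid a union bound over a net of the sphere (which costs $d$) and instead work directly with the Hilbert-space martingale structure. Consider the Doob martingale $M_k := \E[S_i \mid \sigma(1),\dots,\sigma(k)]$. Using that the unrevealed indices remain a uniform sample of the remainder and that $\sum_j \tilde{\bg}_j = 0$, one obtains the closed form $M_k = \tfrac{n-i}{n-k}\sum_{j\le k}\tilde{\bg}_{\sigma(j)}$, so $M_0 = 0$ and $M_i = S_i$. Applying a dimension-free Azuma--Hoeffding inequality for Hilbert-space martingales (Pinelis) to $\{M_k\}$, and bounding the squared increments through the exchangeable structure so their sum is $O\!\big(G^2 i (1-\tfrac{i-1}{n})\big)$, yields $\Pr(\norm{S_i}\ge s)\le 2\exp\!\big(-c\, s^2/(G^2 i (1-\tfrac{i-1}{n}))\big)$. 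Dividing by $i$, setting the right-hand side equal to $p$, and solving for the threshold reproduces the stated bound; the constant $8$ is deliberately loose and absorbs the slack coming from the range-$2G$ increment bounds and the martingale constant.

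The step I expect to be the main obstacle is exactly this dimension-free lifting with the \emph{correct} correction factor. The scalar Serfling bound is classical, but the Hilbert-space martingale increments $\norm{M_k-M_{k-1}}$ involve the random partial sum $S_{k-1}$ through the factor $\tfrac{n-i}{n-k}$, so a worst-case bound on each increment is too lossy to telescope to $(1-\tfrac{i-1}{n})/i$ rather than the weaker $1/i$. Getting the sharp correction requires exploiting the cancellation produced by the centering together with a variance-aware (Freedman-type) martingale inequality, or equivalently tracking conditional second moments of the draws, instead of plain bounded-difference Azuma; marrying that refinement to a genuinely dimension-free vector concentration is the delicate part of the proof.
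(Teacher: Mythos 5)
The paper contains no proof of this lemma for you to be compared against: it is imported verbatim, with citation, as Theorem~2 of \cite{schneider2016probability}. Judged against the actual proof of that cited result (which is a Hilbert-space lift, via Pinelis-type inequalities for $2$-smooth spaces, of the scalar Hoeffding--Serfling martingale argument), your plan is essentially the same route rather than a new one: Serfling's classical martingale is $S_k/(n-k)$ for the centered partial sums $S_k=\sum_{j\le k}\tilde\bg_{\sigma(j)}$, and your Doob martingale $M_k=\frac{n-i}{n-k}S_k$ is exactly this object rescaled by the constant $n-i$; your closed form for $M_k$ is correct, as is the reduction by centering.

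Two corrections to your assessment of the difficulty. First, the step you single out as the main obstacle is not one: since $-S_{k-1}$ is the sum of the $n-k+1$ unrevealed population points, the conditional-mean term obeys $\norm{S_{k-1}/(n-k+1)}\le 2G$ \emph{deterministically}, so $\norm{M_k-M_{k-1}}\le \frac{n-i}{n-k}\cdot 4G$, and $\sum_{k=1}^{i}\frac{(n-i)^2}{(n-k)^2}$ telescopes (using $\frac{1}{(n-k)^2}\le\frac{1}{n-k-1}-\frac{1}{n-k}$, with the edge cases $i\in\{n-1,n\}$ handled directly) to $O\pare{i\pare{1-\frac{i-1}{n}}}$. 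No Freedman-type, variance-aware refinement is needed to capture the finite-population correction factor; plain dimension-free Azuma--Hoeffding in Hilbert space already does. Second, where you are too casual is precisely where you claim slack: with these worst-case range-$4G$ increments, Pinelis's bound $\Pr(\norm{M_i}\ge s)\le 2\exp\pare{-s^2/(2\sum_k c_k^2)}$ delivers the stated inequality only with a constant of roughly $32$ in place of $8$, so ``the constant $8$ is deliberately loose'' is backwards --- $8$ is what the sharper per-step conditional Hoeffding mgf bookkeeping in the cited proof achieves, and your argument as written proves a weakened version of the lemma. Since every downstream use in this paper (e.g., Lemma~\ref{lem:concentration of grad} and Lemma~\ref{lem:gradient bound}) is insensitive to that constant, this is a cosmetic rather than substantive gap, but it should be stated honestly: either carry out the conditional-range computation or record the lemma with your larger constant.
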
 

\begin{lemma}[Concentration of partial sum of gradients]\label{lem:concentration of grad}
Given a uniformly randomly generated permutation $\sigma$, and simplex vector $\balpha$, if we assume each $\sup_{\bv\in\cW}\|\nabla f_{j}(\bv)\| \leq G$, then the following statement holds true: 
    \begin{align*}
        \left\|\sum_{j=0}^n \hat{\alpha}(\sigma(j)) \nabla f_{\sigma(j)}(\bv^r)\right\| \leq G\sqrt{8n\log(1/p)} + \frac{n}{N} \left\|\   \nabla \Phi(\hat{\balpha}, \bv^r)\right\|.
    \end{align*}
    \begin{proof}
    The proof works by re-writing weighted sum of vectors to average of the these vectors:
        \begin{align*}
            \left\|\sum_{j=0}^n \hat{\alpha}(\sigma(j)) \nabla f_{\sigma(j)}(\bv^r)\right\| &= \frac{1}{N}\left\|\sum_{j=0}^n \hat{\alpha}(\sigma(j)) N \nabla f_{\sigma(j)}(\bv^r)\right\|\\
            &=\frac{1}{N}\left(\left\|\sum_{j=0}^n \hat{\alpha}(\sigma(j)) N \nabla f_{\sigma(j)}(\bv^r) - n \nabla \Phi(\hat{\balpha}, \bv^r)\right\|+n\left\|\   \nabla \Phi(\hat{\balpha}, \bv^r)\right\| \right)\\
            & \leq G\sqrt{8n\log(1/p)} + \frac{n}{N} \left\|\   \nabla \Phi(\hat{\balpha}, \bv^r)\right\|.
        \end{align*}
    \end{proof}
\end{lemma}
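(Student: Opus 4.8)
The idea is to recast the weighted partial sum as a genuine \emph{empirical average} of a rescaled family of vectors, at which point the without-replacement concentration inequality of Lemma~\ref{lem:thm1-sub2} applies off the shelf. First I would freeze the point $\bv^r$ and introduce, for $j\in[N]$, the vectors $\bg_j := \hat{\alpha}(j)\,\nabla f_j(\bv^r)$. Because $\hat{\balpha}\in\Delta_N$ we have $0\le \hat\alpha(j)\le 1$, so the standing gradient bound $\sup_{\bv\in\cW}\norm{\nabla f_j(\bv)}\le G$ furnishes the \emph{uniform} bound $\norm{\bg_j}\le G$ demanded by Lemma~\ref{lem:thm1-sub2}. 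The empirical mean of this family is exactly the scaled objective gradient: recalling the definition of $\Phi$ in~\eqref{eq: phi definition},
\begin{align*}
\bar{\bg} \;=\; \frac{1}{N}\sum_{j=1}^N \bg_j \;=\; \frac{1}{N}\sum_{j=1}^N \hat\alpha(j)\,\nabla f_j(\bv^r) \;=\; \nabla\Phi(\hat{\balpha},\bv^r).
\end{align*}
Thus the target quantity is precisely $\norm{\sum_{j}\bg_{\sigma(j)}}$, a partial sum of the $\bg$'s read off along the random permutation $\sigma$.

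Next I would add and subtract the scaled mean and split by the triangle inequality. Writing $n$ for the number of summands,
\begin{align*}
\norm{\sum_{j}\bg_{\sigma(j)}} \;\le\; \underbrace{\norm{\sum_{j}\bg_{\sigma(j)} - n\,\bar{\bg}}}_{\text{fluctuation}} \;+\; \underbrace{n\,\norm{\bar{\bg}}}_{\text{bias}},
\end{align*}
where the bias term is immediately $n\,\norm{\nabla\Phi(\hat{\balpha},\bv^r)}$. For the fluctuation term I would invoke Lemma~\ref{lem:thm1-sub2} with the total family $\{\bg_j\}_{j=1}^N$ and partial index equal to $n$: with probability at least $1-p$ this yields $\norm{\tfrac1n\sum_j \bg_{\sigma(j)} - \bar{\bg}} \le G\sqrt{8(1-\tfrac{n-1}{N})\log(2/p)/n}$, and multiplying through by $n$ and discarding the factor $1-\tfrac{n-1}{N}\le 1$ gives $\norm{\sum_j \bg_{\sigma(j)} - n\bar{\bg}}\le G\sqrt{8n\log(2/p)}$. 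Combining the two pieces delivers the advertised $G\sqrt{8n\log(\cdot)}$ concentration term together with the $\norm{\nabla\Phi}$ mean term.

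The content of the argument is entirely in the first reduction; what remains is purely bookkeeping, and that is where the only real care is needed. The delicate point is the \emph{normalization}: one may equivalently rescale by $\bg_j = \hat\alpha(j)N\,\nabla f_j(\bv^r)$, in which case an explicit prefactor $1/N$ appears out front and the empirical mean becomes $N\nabla\Phi$; keeping the powers of $N$ consistent between the sum, the mean, and the $G$-bound is exactly what fixes the coefficient multiplying $\norm{\nabla\Phi(\hat{\balpha},\bv^r)}$, and an errant factor of $N$ here is the easiest slip to make. Two further cosmetic points are harmless: the summation range $j=0,\dots,n$ contributes an off-by-one in the count that I would absorb into the constant, and Lemma~\ref{lem:thm1-sub2} carries $\log(2/p)$ whereas the statement writes $\log(1/p)$, a discrepancy I would likewise fold into the stated constants. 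In short, the crux is the ``weighted-sum-to-average'' rescaling that unlocks the permutation concentration bound, and the main (minor) obstacle is tracking the $N$-normalization so that the fluctuation term is $N$-free while the mean term carries its correct power of $N$.
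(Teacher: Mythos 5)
Your proposal is correct and is essentially the paper's own proof: the identical decomposition (add and subtract the scaled mean, then triangle inequality) combined with the same key tool, the without-replacement concentration bound of Lemma~\ref{lem:thm1-sub2}, with the $\bigl(1-\tfrac{n-1}{N}\bigr)$ factor and the $\log(2/p)$ absorbed exactly as you describe. The normalization hazard you flag is genuine but resolves in your favor: the paper's coefficient $\tfrac{n}{N}$ on $\norm{\nabla \Phi(\hat{\balpha}, \bv^r)}$ tacitly uses the appendix-wide convention $\nabla \Phi(\hat{\balpha},\bv)=\sum_{j=1}^{N}\hat{\alpha}(j)\nabla f_j(\bv)$ (the $1/N$ of \eqref{eq: phi definition} is dropped throughout these proofs), under which your bias term $n\norm{\bar{\bg}}$ coincides with the stated bound.
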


\begin{proposition}[Spectral norm bound of $\mQ$]\label{prop:spectral bound Q}
Let $\mQ_j$ be defined in (\ref{eq: Q def}). Then the following bound for the spectral norm of $\mQ_j$ holds true for all $j\in[N]$:
    \begin{align*}
        \left\|\mQ_j\right\| \leq \eta \hat{\alpha}(\sigma(j)) N K(1+\eta N L)^K
    \end{align*}
    \begin{proof}
    The proof can be completed by writing down the definitin of $\mQ_j$ and applying Cauchy-Schwartz inequality:
    \begin{align*}
          \norm{\mQ_j} &=  \norm{ \left( \sum_{\tau=0}^{K-1} \prod_{t'=K-1}^{\tau+1} \left(\mI - \eta  \hat{\alpha}(\sigma(j)) N'  \mH_{t'}\right) \right) \eta  \hat{\alpha}(\sigma(j)) N }\\
          &\leq \eta  \hat{\alpha}(\sigma(j)) N  \sum_{\tau=0}^{K-1} \prod_{t'=K-1}^{\tau+1}\norm{   \left(\mI - \eta  \hat{\alpha}(\sigma(j)) N'  \mH_{t'}\right)  }\\
          &\leq \eta  \hat{\alpha}(\sigma(j)) N  \sum_{\tau=0}^{K-1} \prod_{t'=K-1}^{\tau+1}  ( 1 + \eta  \hat{\alpha}(\sigma(j)) N'  L)  \\
          &\leq \eta \hat{\alpha}(\sigma(j)) N K(1+\eta N L)^K.
    \end{align*}
    The last step is due to we choose $\eta$ such that $\eta N L \leq \frac{1}{K}$.

    \end{proof}
\end{proposition}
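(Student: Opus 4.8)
The plan is to bound $\norm{\mQ_j}$ directly from its definition in Lemma~\ref{lem:epoch update} by peeling off the nonnegative scalar prefactor and then controlling the remaining matrix sum-of-products term by term. Recall that
\[
\mQ_j = \pare{ \sum_{\tau=0}^{K-1} \prod_{t'=K-1}^{\tau+1} \pare{\mI - \eta\, \hat{\alpha}(\sigma(j))\, N\, \mH_{t'}} } \eta\, \hat{\alpha}(\sigma(j))\, N,
\]
where, by the Mean Value Theorem construction in Proposition~\ref{prop:updating rule}, each $\mH_{t'}$ satisfies $\mu\mI \preceq \mH_{t'} \preceq L\mI$. Since $\eta\,\hat{\alpha}(\sigma(j))\,N > 0$ is a scalar, I would first factor it out of the norm, reducing the task to bounding the spectral norm of the bracketed sum of matrix products.

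Next I apply the triangle inequality to the outer sum (exactly $K$ summands indexed by $\tau=0,\ldots,K-1$) together with submultiplicativity of the spectral norm on each inner product. The single-factor estimate is the only quantitative input: each factor obeys $\norm{\mI - \eta\,\hat{\alpha}(\sigma(j))\,N\,\mH_{t'}} \le 1 + \eta\,\hat{\alpha}(\sigma(j))\,N\,\norm{\mH_{t'}} \le 1 + \eta\,\hat{\alpha}(\sigma(j))\,N L \le 1 + \eta N L$, where the last step uses $\hat{\alpha}(\sigma(j)) \le 1$ (since $\hat{\balpha}\in\Delta_N$) and $\norm{\mH_{t'}} \le L$ from $L$-smoothness. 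Each inner product $\prod_{t'=K-1}^{\tau+1}(\cdot)$ contains at most $K$ factors, so it is bounded by $(1+\eta N L)^K$; summing $K$ such terms yields $K(1+\eta N L)^K$, and reinstating the prefactor gives the claimed bound.

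There is essentially no genuine obstacle here: the statement is a routine consequence of submultiplicativity and the triangle inequality, and the eigenvalue envelope $\mu\mI \preceq \mH_{t'} \preceq L\mI$ inherited from smoothness is all that is needed. The only points requiring minor care are the bookkeeping of the number of telescoping factors (at most $K$, uniformly over $\tau$) and the number of summands (exactly $K$), and the decision to use the crude bound $1+\eta NL$ rather than the sharper $\max\{\abs{1-\eta N\mu},\abs{1-\eta NL}\}$ — the cruder choice keeps the argument clean while still delivering the stated constant. Note the stepsize condition $\eta N L \le 1/K$ is not actually required for the displayed inequality; it only serves to guarantee $(1+\eta NL)^K \le e$, so that $\norm{\mQ_j}$ is effectively $\order\pare{\eta\,\hat{\alpha}(\sigma(j))\,N K}$ when this bound is invoked in the downstream convergence analysis.
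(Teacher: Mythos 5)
Your proof is correct and follows essentially the same route as the paper's: factor out the scalar $\eta\,\hat{\alpha}(\sigma(j))\,N$, apply the triangle inequality over the $K$ summands and submultiplicativity within each product, bound each factor by $1+\eta N L$ using $\norm{\mH_{t'}}\leq L$ and $\hat{\alpha}(\sigma(j))\leq 1$. Your side remark is also apt — the displayed inequality indeed does not need $\eta N L \leq 1/K$, which (despite the paper's closing sentence) only serves downstream to make $(1+\eta N L)^K \leq e$.
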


The following lemma establishes the bound regarding cumulative update between two epochs, namely, $\bv^{r+1} -\bv^{r}$. In particular, Lemma~\ref{lem:gradient bound} below shows that: (a)  in shuffling Local SGD, our update from $\bv^r$ to $\bv^{r+1}$ approximates performing $NK$ times of gradient descent with  $\hat{\alpha}(j) N \nabla f_{\sigma(j)}(\bv^r)$, namely, the bias is controlled, and (b) the update itself is bounded, and can be related to the norm of full gradient.

\begin{lemma}\label{lem:gradient bound}
During the dynamic of Algorithm~\ref{algorithm: KSGD one client}, the following statements hold true with probability at least $1-p$:

(a) 
\begin{align*}
      \left \| \sum_{j=1}^{N} \mQ_j \nabla f_{\sigma(j)}(\bv^r) - \eta NK\sum_{j=1}^{N} \hat{\alpha}(j) \nabla f_{\sigma(j)}(\bv^r) \right\|^2 &\leq 10\eta^2N^2 K^2 \left(\frac{e}{4R-e} \right)^2
        \left\|\nabla \Phi(\hat{\balpha},\bv^r) \right\|^2\\
        &\quad +   128\eta^2N^3 K^2 \left(\frac{e}{4R-e} \right)^2 G^2 \log(1/p).
    \end{align*}
 
(b) for any $N'$ such that $0 \leq N' < N$
    \begin{align*}
      \left \| \sum_{j=1}^{N'-1} \mQ_j \nabla f_{\sigma(j)}(\bv^r)  \right\| &\leq 3e\eta NK \pare{ \norm{\nabla \Phi(\hat{\balpha},\bv^r)} +  G\sqrt{8N\log(1/p)} },
    \end{align*}
    where
    \begin{align}
        \mQ_j :=   \left( \sum_{\tau=0}^{K-1} \prod_{t'=K-1}^{\tau+1} \left(\mI - \eta  \hat{\alpha}(\sigma(j)) N'  \mH_{t'}\right) \right) \eta  \hat{\alpha}(\sigma(j)) N. \label{eq: Q def}
    \end{align} 

\begin{proof}
We start with proving statement (a).
    Let $\mA_j = \frac{\mQ_j}{\hat{\alpha}(\sigma(j))}$ and $\mB_j = \hat{\alpha}(\sigma(j)) \nabla f_{\sigma(j)}(\bv^r)$, applying the identity of summation by parts yields:
    \begin{align*}
        \sum_{j=1}^{N} \mQ_j \nabla f_{\sigma(j)}(\bv^r) = \frac{\mQ_{N-1}}{\hat{\alpha}(\sigma(N-1))} \sum_{j=1}^{N} \hat{\alpha}(\sigma(j))\nabla f_{\sigma(j)}(\bv^r) - \sum_{n=1}^{N-1} \left( \frac{\mQ_{n+1}}{\hat{\alpha}(\sigma(n+1))}-\frac{\mQ_{n}}{\hat{\alpha}(\sigma(n))}\right) \sum_{j=1}^n \hat{\alpha}(\sigma(j)) \nabla f_{\sigma(j)}(\bv^r) 
    \end{align*}
     {\small   \begin{align*}
      &\left \| \sum_{j=1}^{N} \mQ_j \nabla f_{\sigma(j)}(\bv) - \eta NK\sum_{j=1}^{N} \hat{\alpha}(j) \nabla f_j(\bv) \right\|^2 \\
      &\leq   2\underbrace{\left\| \left(\frac{\mQ_{N-1}}{\hat{\alpha}(\sigma(N-1))} -\eta NK\mI\right)\sum_{j=1}^{N} \hat{\alpha}(\sigma(j))\nabla f_{\sigma(j)}(\bv^r)  \right\|^2}_{T_1} + 2\underbrace{\left\| \sum_{n=1}^{N-1} \left( \frac{\mQ_{n+1}}{\hat{\alpha}(\sigma(n+1))}-\frac{\mQ_{n}}{\hat{\alpha}(\sigma(n))}\right) \sum_{j=1}^n \hat{\alpha}(\sigma(j)) \nabla f_{\sigma(j)}(\bv^r)  \right\|^2}_{T_2}.
    \end{align*} }
    According to Proposition~\ref{prop:binomial}, we have:
    \begin{align*}
        \left\|\prod_{t'=K-1}^{\tau+1} \left(\mI - \eta  \hat{\alpha}(\sigma(j)) N  \mH_{t'}\right)  - \mI  \right\| \leq \sum_{m=1}^{K-2-\tau}  \left( \frac{e(K-2-\tau)}{m} \eta  \hat{\alpha}(\sigma(j)) N L\right)^m . 
    \end{align*}
    Since we choose $\eta \leq \frac{1}{4NKRL}$, we have:
     \begin{align}
        \left\|\prod_{t'=K-1}^{\tau+1} \left(\mI - \eta  \hat{\alpha}(\sigma(j)) N \mH_{t'}\right)  - \mI  \right\| \leq \sum_{m=1}^{K-2-\tau}  \left( \frac{e }{4Rm}  \right)^m  \leq \frac{e}{4R-e},\label{eq:grad bound 1}
    \end{align}
    where we use the fact that $\sum_{m=1}^{K-2-\tau}  \left( \frac{e }{4Rm}  \right)^m \leq \sum_{m=1}^{K-2-\tau}  \left( \frac{e }{4R}  \right)^m \leq  \frac{e}{4R}\frac{1}{1-e/4R}$. 
    Hence we know:
    \begin{align*}
        T_1 &\leq \left\| \left(\frac{\mQ_{N-1}}{\hat{\alpha}(\sigma(N-1))} -\eta NK\mI\right)\right\|^2
        \left\|\sum_{j=1}^{N} \hat{\alpha}(\sigma(j))\nabla f_{\sigma(j)}(\bv^r)  \right\|^2\\
        &\leq \left\| \left( \sum_{\tau=0}^{K-1} \prod_{t'=K-1}^{\tau+1} \left(\mI - \eta  \hat{\alpha}(\sigma(N-1)) N  \mH_{t'}\right) \right) \eta    N-\eta N K \mI\right\|^2
        \left\|\sum_{j=1}^{N} \hat{\alpha}(\sigma(j))\nabla f_{\sigma(j)}(\bv^r)  \right\|^2\\
        &\leq \eta^2N^2 K\sum_{\tau=0}^{K-1}\left\|  \prod_{t'=K-1}^{\tau+1} \left(\mI - \eta  \hat{\alpha}(\sigma(N-1)) N  \mH_{t'}\right)   -   \mI\right\|^2
        \left\|\sum_{j=1}^{N} \hat{\alpha}(\sigma(j))\nabla f_{\sigma(j)}(\bv^r)  \right\|^2\\
        &\leq \eta^2N^2 K^2 \left(\frac{e}{4R-e} \right)^2
        \left\|\sum_{j=1}^{N} \hat{\alpha}(\sigma(j))\nabla f_{\sigma(j)}(\bv^r)  \right\|^2.
    \end{align*}
    Thus we have:
    \begin{align*}
        T_1 \leq \eta^2N^2 K^2 \left(\frac{e}{4R-e} \right)^2
        \left\|\nabla \Phi(\hat{\balpha},\bv^r) \right\|^2.
    \end{align*}
  
    For $T_2$, we first examine the bound of $\frac{\mQ_{n+1}}{\hat{\alpha}(\sigma(n+1))}-\frac{\mQ_{n}}{\hat{\alpha}(\sigma(n))}$:
    {\small \begin{align*}
       &\left\| \frac{\mQ_{n+1}}{\hat{\alpha}(\sigma(n+1))}-\frac{\mQ_{n}}{\hat{\alpha}(\sigma(n))}\right\| = \left\| \left( \sum_{\tau=0}^{K-1} \prod_{t'=K-1}^{\tau+1} \left(\mI - \eta  \hat{\alpha}(\sigma(n+1)) N  \mH_{t'}\right) \right) \eta   N -  \left( \sum_{\tau=0}^{K-1} \prod_{t'=K-1}^{\tau+1} \left(\mI - \eta  \hat{\alpha}(\sigma(n)) N  \mH_{t'}\right) \right) \eta   N \right\|\\
       & \qquad \qquad \qquad \qquad \quad = \eta N \left\| \left( \sum_{\tau=0}^{K-1} \prod_{t'=K-1}^{\tau+1} \left(\mI - \eta  \hat{\alpha}(\sigma(n+1)) N  \mH_{t'}\right) \right)   -  \left( \sum_{\tau=0}^{K-1} \prod_{t'=K-1}^{\tau+1} \left(\mI - \eta  \hat{\alpha}(\sigma(n)) N  \mH_{t'}\right) \right)   \right\|\\
        & \qquad \qquad \qquad \qquad \quad = \eta N \left\| \left( \sum_{\tau=0}^{K-1} \prod_{t'=K-1}^{\tau+1} \left(\mI - \eta  \hat{\alpha}(\sigma(n+1)) N  \mH_{t'}\right)  \right)   -  \left( \sum_{\tau=0}^{K-1} \prod_{t'=K-1}^{\tau+1} \left(\mI - \eta  \hat{\alpha}(\sigma(n)) N  \mH_{t'}\right)  \right)   \right\|\\
       &\qquad \qquad \qquad \qquad \quad\leq  \eta N  \sum_{\tau=0}^{K-1} \left( \sum_{m=1}^{K-2-\tau} \left(\frac{e(K-2-\tau)}{m} \eta \hat{\alpha}(\sigma(n))NL\right)^m  + \sum_{m=1}^{K-2-\tau} \left(\frac{e(K-2-\tau)}{m} \eta \hat{\alpha}(\sigma(n+1))NL\right)^m 
       \right).
    \end{align*}}
    where we evoke Proposition~\ref{prop:binomial} at last step.
     Given that $\eta \leq \frac{1}{4NKRL}$ we have:
    \begin{align*}
         \left\| \frac{\mQ_{n+1}}{\hat{\alpha}(\sigma(n+1))}-\frac{\mQ_{n}}{\hat{\alpha}(\sigma(n))}\right\|  
       &\leq  \eta N  \sum_{\tau=0}^{K-1}\left( \sum_{m=1}^{K-2-\tau} \left(\frac{e}{4Rm} \hat{\alpha}(\sigma(n)) \right)^m  + \sum_{m=1}^{K-2-\tau} \left(\frac{e}{4Rm}\hat{\alpha}(\sigma(n+1)) L\right)^m 
       \right)\\
       & \leq \eta N K  \left(    \frac{\hat{\alpha}(\sigma(n))e}{4R-e}     +   \frac{\hat{\alpha}(\sigma(n+1))e}{4R-e}    
       \right).
    \end{align*}
    where we use the reasoining in (\ref{eq:grad bound 1}).
    Hence for $\sqrt{T_2}$:
    \begin{align*}
        \sqrt{T_2} &\leq \eta N K   \sum_{n=1}^{N-1} \left(  \frac{\hat{\alpha}(\sigma(n))e}{4R-e}     +   \frac{\hat{\alpha}(\sigma(n+1))e}{4R-e}    
       \right)\left\|\sum_{j=1}^n \hat{\alpha}(\sigma(j))  \nabla f_{\sigma(j)}(\bv^r)  \right\|\\
        & \leq \eta N K  \frac{ e}{4R-e} \sum_{n=1}^{N-1}\ \left(   \hat{\alpha}(\sigma(n))     +    \hat{\alpha}(\sigma(n+1))  
       \right)\left(G\sqrt{8n\log(1/p)} + \frac{n}{N} \left\|\   \nabla \Phi(\hat{\balpha}, \bv^r)\right\|\right)\\
        & \leq \eta N K  \frac{ 2e}{4R-e}  \left( G\sqrt{8N\log(1/p)} +  \left\|\nabla \Phi(\hat{\balpha}, \bv^r)\right\| \right).
    \end{align*}
    where at last step we evoke Lemma~\ref{lem:concentration of grad}.
So we can conclude $T_2\leq 2\eta^2 N^2 K^2 \left( \frac{ 2e}{4R-e} \right)^2 \left( G^2 {8N\log(1/p)} + \left\|\nabla \Phi(\hat{\balpha}, \bv^r)\right\|^2 \right)$. Putting the bounds of $T_1$ and $T_2$ together will conclude the proof for (a). 

Now we switch to proving (b). Once again by the summation of parts identity we have:
 {\small\begin{align*}
        \sum_{j=1}^{N' } \mQ_j \nabla f_{\sigma(j)}(\bv^r) = \frac{\mQ_{N' }}{\hat{\alpha}(\sigma(N' ))} \sum_{j=1}^{N' } \hat{\alpha}(\sigma(j))\nabla f_{\sigma(j)}(\bv^r) - \sum_{n=1}^{N'-1} \left( \frac{\mQ_{n+1}}{\hat{\alpha}(\sigma(n+1))}-\frac{\mQ_{n}}{\hat{\alpha}(\sigma(n))}\right) \sum_{j=1}^n \hat{\alpha}(\sigma(j)) \nabla f_{\sigma(j)}(\bv^r) .
    \end{align*}}
Taking the norm of both side yields:
\begin{align*}
    \left \|  \sum_{j=1}^{N' } \mQ_j \nabla f_{\sigma(j)}(\bv^r)\right\| &= \underbrace{\norm{\frac{\mQ_{N' }}{\hat{\alpha}(\sigma(N' ))} \sum_{j=1}^{N'} \hat{\alpha}(\sigma(j))\nabla f_{\sigma(j)}(\bv^r) }}_{B} \\
    & \quad + \underbrace{\norm{\sum_{n=1}^{N'-1} \left( \frac{\mQ_{n+1}}{\hat{\alpha}(\sigma(n+1))}-\frac{\mQ_{n}}{\hat{\alpha}(\sigma(n))}\right) \sum_{j=1}^n \hat{\alpha}(\sigma(j)) \nabla f_{\sigma(j)}(\bv^r)}}_{C}.
\end{align*}

Plugging our developed bound for $\norm{ {\mQ_{N' }} }$ and $\norm{\sum_{n=1}^{N'-1} \left( \frac{\mQ_{n+1}}{\hat{\alpha}(\sigma(n+1))}-\frac{\mQ_{n}}{\hat{\alpha}(\sigma(n))}\right) }$ yields:
\begin{align*}
    B &\leq \norm{\frac{\mQ_{N' }}{\hat{\alpha}(\sigma(N' ))}  }
    \norm{\sum_{j=1}^{N' } \hat{\alpha}(\sigma(j))\nabla f_{\sigma(j)}(\bv^r)}\\
    &\leq \eta   NK (1+\eta NL)^K \pare{G\sqrt{8N'\log(1/p)} + \frac{N'}{N}\norm{\nabla \Phi(\hat{\balpha},\bv^r)}}.
\end{align*}
where at last step we evoke Lemma~\ref{lem:concentration of grad}.
And for C, we use the similar reasoning:
\begin{align*}
    C &\leq\sum_{n=1}^{N'-1} \norm{ \left( \frac{\mQ_{n+1}}{\hat{\alpha}(\sigma(n+1))}-\frac{\mQ_{n}}{\hat{\alpha}(\sigma(n))}\right)}\norm{ \sum_{j=1}^n \hat{\alpha}(\sigma(j)) \nabla f_{\sigma(j)}(\bv^r)}\\
    &\leq\sum_{n=1}^{N'-1} \eta NK \frac{e}{4R-e}\pare{ \hat{\alpha}(\sigma(n+1))+\hat{\alpha}(\sigma(n))} \pare{G\sqrt{8n\log(1/p)} + \frac{n}{N}\norm{\nabla \Phi(\hat{\balpha},\bv^r)}}\\
    & \leq 2\eta NK \frac{e}{4R-e} \pare{G\sqrt{8N\log(1/p)} +  \norm{\nabla \Phi(\hat{\balpha},\bv^r)}}.
\end{align*}
 Putting these pieces together yields:
 \begin{align*}
      \left \|  \sum_{j=1}^{N'} \mQ_j \nabla f_{\sigma(j)}(\bv^r)\right\| \leq 3e\eta NK \pare{ \norm{\nabla \Phi(\hat{\balpha},\bv^r)} +  G\sqrt{8N\log(1/p)} }.
 \end{align*}
\end{proof}
\end{lemma}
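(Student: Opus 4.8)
The plan is to prove both parts via the summation-by-parts identity (Lemma~\ref{lem: summation by parts}), exploiting the fact that each $\mQ_j$ factors as $\hat{\alpha}(\sigma(j))$ times a matrix that depends on $j$ only weakly through the step-size. For part (a), I would set $\mA_j = \mQ_j/\hat{\alpha}(\sigma(j))$ and $\mB_j = \hat{\alpha}(\sigma(j))\nabla f_{\sigma(j)}(\bv^r)$, so that Lemma~\ref{lem: summation by parts} produces a leading boundary term $T_1 = \|(\mA_{N-1}-\eta NK\mI)\sum_j \hat{\alpha}(\sigma(j))\nabla f_{\sigma(j)}(\bv^r)\|^2$ together with a telescoping difference term $T_2$. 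The crucial observation is that each $\mA_j$ is close to $\eta NK\,\mI$: by Proposition~\ref{prop:binomial}, every factor $\prod_{t'}(\mI - \eta\hat{\alpha}(\sigma(j))N\mH_{t'})$ differs from $\mI$ by at most $e/(4R-e)$ once $\eta \leq \frac{1}{4NKRL}$. Summing over $\tau$ gives $\|\mA_{N-1}-\eta NK\mI\| \lesssim \eta NK\cdot\frac{e}{4R-e}$, and recognizing $\sum_j \hat{\alpha}(\sigma(j))\nabla f_{\sigma(j)}(\bv^r) = \nabla\Phi(\hat{\balpha},\bv^r)$ controls $T_1$.

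For the telescoping term $T_2$, each consecutive difference $\mA_{n+1}-\mA_n$ involves two matrix products that both differ from $\mI$ by $O(e/(4R-e))$, again via Proposition~\ref{prop:binomial}. The main technical step is bounding the partial gradient sums $\|\sum_{j=1}^n \hat{\alpha}(\sigma(j))\nabla f_{\sigma(j)}(\bv^r)\|$ using Lemma~\ref{lem:concentration of grad}, which splits into a without-replacement concentration contribution $G\sqrt{n\log(1/p)}$ plus a fraction $\frac{n}{N}\|\nabla\Phi(\hat{\balpha},\bv^r)\|$. Combining these, $T_2$ is bounded by $\eta^2 N^2 K^2 (\tfrac{e}{4R-e})^2(G^2 N\log(1/p) + \|\nabla\Phi\|^2)$ up to absolute constants, and assembling $2T_1 + 2T_2$ yields the stated bound with the $10$ and $128$ prefactors.

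Part (b) follows the same template applied to the truncated sum up to $N'$. Summation by parts splits it into a boundary term controlled by the spectral bound $\|\mQ_{N'}\| \leq \eta\hat{\alpha}(\sigma(N'))NK(1+\eta NL)^K$ from Proposition~\ref{prop:spectral bound Q} (with $(1+\eta NL)^K = O(1)$ under the step-size), and a telescoping term handled exactly as in part (a). Invoking Lemma~\ref{lem:concentration of grad} on the partial sums, both terms are bounded by $\eta NK$ times $\pare{\|\nabla\Phi(\hat{\balpha},\bv^r)\| + G\sqrt{8N\log(1/p)}}$, and tracking the constants gives the $3e\eta NK$ prefactor.

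The main obstacle I anticipate is the careful bookkeeping inside the polynomial-expansion bounds of Proposition~\ref{prop:binomial}: ensuring the geometric series $\sum_m (\tfrac{e}{4Rm})^m$ telescopes cleanly to $\tfrac{e}{4R-e}$ requires the precise step-size condition $\eta \leq \frac{1}{4NKRL}$, and threading the $\hat{\alpha}(\sigma(j))$ weights through the summation-by-parts manipulation — so they recombine into a simplex-weighted sum amenable to the concentration lemma — is where most of the delicacy lies.
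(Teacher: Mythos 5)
Your proposal follows essentially the same route as the paper's own proof: the identical summation-by-parts decomposition with $\mA_j = \mQ_j/\hat{\alpha}(\sigma(j))$ and $\mB_j = \hat{\alpha}(\sigma(j))\nabla f_{\sigma(j)}(\bv^r)$, the same boundary/telescoping split into $T_1$ and $T_2$ controlled by Proposition~\ref{prop:binomial} under the step-size $\eta \leq \frac{1}{4NKRL}$, and the same use of Lemma~\ref{lem:concentration of grad} on the partial sums (plus Proposition~\ref{prop:spectral bound Q} for the boundary term in part (b)). This matches the paper's argument step for step, including where the $e/(4R-e)$ factor and the final prefactors arise, so the plan is correct as stated.
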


\begin{lemma}\label{lem: gradient bound 2}
During the dynamic of Algorithm~\ref{algorithm: KSGD one client}, the following statements hold true with probability at least $1-p$:
    \begin{align*}
        &\left\| \sum_{n=1}^{N-1} \left(\prod_{j'=N}^{n+2}(\mI - \mQ_{j'}\mH_{j'}) \right) \mQ_{n+1}\mH_{n+1}\sum_{j=1}^n \mQ_j \nabla f_{\sigma(j)}(\bv^r) \right\|^2  \\
        &\leq  18e^6 \eta^4 N^4 K^4 L^4 \pare{ \norm{\nabla \Phi(\hat{\balpha},\bv^r)}^2 +  8G^2 {N\log(1/p)} }
    \end{align*}
    \begin{proof}
    We first apply Cauchy-Schwartz inequality:
        \begin{align*}
            &\left\| \sum_{n=1}^{N-1} \left(\prod_{j'=N}^{n+2}(\mI - \mQ_{j'}\mH_{j'}) \right) \mQ_{n+1}\mH_{n+1}\sum_{j=1}^n \mQ_j \nabla f_{\sigma(j)}(\bv^r) \right\| \\
            \leq & \sum_{n=1}^{N-1}\left\|  \left(\prod_{j'=N}^{n+2}(\mI - \mQ_{j'}\mH_{j'}) \right)\right\| \left\| \mQ_{n+1}\mH_{n+1}\right\| \left\| \sum_{j=1}^n \mQ_j \nabla f_{\sigma(j)}(\bv^r) \right\| \\
            \leq &    \left (1+\eta NK +\eta^2 N^2 K L\right)^{2N}  \eta  NLK(1+\eta N L)^K L\sum_{n=1}^{N-1} \hat{\alpha}(\sigma(n+1)) \left\| \sum_{j=1}^n \mQ_j \nabla f_{\sigma(j)}(\bv^r) \right\|\\ 
            \leq & e^2 \eta NKL^2 \sum_{n=1}^{N-1} \hat{\alpha}(\sigma(n+1)) \left\| \sum_{j=1}^n \mQ_j \nabla f_{\sigma(j)}(\bv^r) \right\|.
        \end{align*}
        We proceed by applying the bound from Lemma~\ref{lem:gradient bound} (b):
        \begin{align*}
            \left\| \sum_{j=1}^n \mQ_j \nabla f_{\sigma(j)}(\bv^r) \right\| &\leq 3e\eta NK \pare{ \norm{\nabla \Phi(\hat{\balpha},\bv^r)} +  G\sqrt{8N\log(1/p)} }.
        \end{align*}
        Therefore, it follows that:
        \begin{align*}
             &\left\| \sum_{n=1}^{N-1} \left(\prod_{j'=N}^{n+2}(\mI - \mQ_{j'}\mH_{j'}) \right) \mQ_{n+1}\mH_{n+1}\sum_{j=1}^n \mQ_j \nabla f_{\sigma(j)}(\bv^r) \right\| \\
             &\leq e^2 \eta NKL^2 \sum_{n=1}^{N-1} \hat{\alpha}(\sigma(n+1)) \cdot 3e\eta NK \pare{ \norm{\nabla \Phi(\hat{\balpha},\bv^r)} +  G\sqrt{8N\log(1/p)} }\\
             & \leq 3e^3 \eta^2 N^2 K^2 L^2 \pare{ \norm{\nabla \Phi(\hat{\balpha},\bv^r)} +  G\sqrt{8N\log(1/p)} }
        \end{align*}
 
    \end{proof}
\end{lemma}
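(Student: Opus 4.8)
The plan is to bound this second-order bias term summand by summand. Applying the triangle inequality and then submultiplicativity of the spectral norm gives
\begin{align*}
&\norm{ \sum_{n=1}^{N-1} \pare{\prod_{j'=N}^{n+2}(\mI - \mQ_{j'}\mH_{j'})} \mQ_{n+1}\mH_{n+1}\sum_{j=1}^n \mQ_j \nabla f_{\sigma(j)}(\bv^r) } \\
&\qquad \leq \sum_{n=1}^{N-1} \norm{\prod_{j'=N}^{n+2}(\mI - \mQ_{j'}\mH_{j'})}\, \norm{\mQ_{n+1}\mH_{n+1}}\, \norm{\sum_{j=1}^n \mQ_j \nabla f_{\sigma(j)}(\bv^r)},
\end{align*}
so the task reduces to controlling three factors in each summand: a product of transition matrices, the single middle factor $\mQ_{n+1}\mH_{n+1}$, and a partial gradient sum.

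For the first two factors I would invoke the spectral-norm estimate $\norm{\mQ_j}\le \eta\hat{\balpha}(\sigma(j))NK(1+\eta NL)^K$ from Proposition~\ref{prop:spectral bound Q} together with $\norm{\mH_{j'}}\le L$. This immediately gives $\norm{\mQ_{n+1}\mH_{n+1}}\le e\,\eta\hat{\balpha}(\sigma(n+1))NKL$ under the stepsize restriction $\eta NL\le 1/K$ that makes $(1+\eta NL)^K\le e$. For the product, each factor obeys $\norm{\mI-\mQ_{j'}\mH_{j'}}\le 1+e\,\eta\hat{\balpha}(\sigma(j'))NKL$; crucially I would keep the simplex weight $\hat{\balpha}(\sigma(j'))$ rather than bounding it by one, so that the product over the (at most $N$) indices telescopes as $\prod_{j'}(1+e\,\eta\hat{\balpha}(\sigma(j'))NKL)\le \exp(e\,\eta NKL\sum_{j'}\hat{\balpha}(\sigma(j')))\le \exp(e\,\eta NKL)$, and the further restriction $\eta\le 1/(4NKRL)$ keeps this bounded by an absolute constant that I absorb into the overall $e^2$.

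The third factor is exactly the object bounded in Lemma~\ref{lem:gradient bound}(b): via summation by parts (Lemma~\ref{lem: summation by parts}) and the without-replacement concentration of partial gradient sums (Lemma~\ref{lem:concentration of grad}, resting on \cite{schneider2016probability}) one has, uniformly in $n$ and with probability $1-p$, $\norm{\sum_{j=1}^n \mQ_j \nabla f_{\sigma(j)}(\bv^r)}\le 3e\,\eta NK\pare{\norm{\nabla \Phi(\hat{\balpha},\bv^r)}+G\sqrt{8N\log(1/p)}}$. Substituting the three bounds leaves only $\sum_{n=1}^{N-1}\hat{\balpha}(\sigma(n+1))\le 1$, which collapses without an extra factor of $N$, producing the linear estimate $3e^3\eta^2N^2K^2L^2\pare{\norm{\nabla \Phi(\hat{\balpha},\bv^r)}+G\sqrt{8N\log(1/p)}}$ on the norm. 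Squaring and applying $(a+b)^2\le 2a^2+2b^2$ turns $(3e^3)^2=9e^6$ and the factor two into the claimed $18e^6$, while $\pare{G\sqrt{8N\log(1/p)}}^2=8G^2N\log(1/p)$, giving the stated bound.

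I expect the main obstacle to be ensuring that neither the length-$N$ matrix product nor the partial gradient sum contributes a stray polynomial factor of $N$ that would spoil the eventual $1/R^2$ epoch rate. Both potential blowups are averted by exploiting the simplex structure of $\hat{\balpha}$: carrying the weights through the exponential bound keeps the transition product $O(1)$ independent of $N$, and the summation-by-parts plus concentration argument replaces the trivial $O(n)$ growth of the partial sum by the benign $O(\sqrt{N})$ fluctuation term. The delicate bookkeeping is verifying that the stepsize conditions $\eta NL\le 1/K$ and $\eta\le 1/(4NKRL)$ are simultaneously strong enough to reduce every geometric-series remainder $(1+\eta NL)^K$ and every telescoped product to absolute constants.
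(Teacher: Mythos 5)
Your proposal is correct and follows essentially the same route as the paper's proof: the same triangle-inequality/submultiplicativity decomposition into three factors per summand, the same spectral bound $\norm{\mQ_j}\le \eta\hat{\alpha}(\sigma(j))NK(1+\eta NL)^K$ for the middle factor, the same invocation of Lemma~\ref{lem:gradient bound}(b) for the partial gradient sum, and the same collapse of $\sum_{n}\hat{\alpha}(\sigma(n+1))\le 1$ followed by squaring with $(a+b)^2\le 2a^2+2b^2$ to reach $18e^6$. Your handling of the transition-matrix product, carrying the simplex weights through $\prod_{j'}\pare{1+e\,\eta\hat{\alpha}(\sigma(j'))NKL}\le \exp\pare{e\,\eta NKL}$, is in fact slightly cleaner than the paper's displayed bound $\pare{1+\eta NK+\eta^2N^2KL}^{2N}$, but it serves the identical purpose of showing the product is an absolute constant under the stepsize condition.
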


\begin{lemma}[Noise bound] During the dynamic of Algorithm~\ref{algorithm: KSGD one client}, the following statement for gradient noises holds true with probability at least $1-p$:
\begin{align*}
     \E\left \|\sum_{j=1}^{N} \prod_{j'=N}^{j+1}(\mI - \mQ_{j'}\mH_{j'} ) \bdelta_j\right\| \leq \eta    N   K e^2 \delta, 
\end{align*}
where
\begin{align*}
     \bdelta_j &: = \sum_{\tau=0}^{K-1} \prod_{t'=t}^{\tau+1} \left(\mI - \hat{\alpha}(\sigma(j)) N \eta \mH_{t'}\right)   \eta  \hat{\alpha}(\sigma(j)) N \bdelta^t_{\sigma(j)}.
\end{align*}
\begin{proof}
    According to triangle  and Cauchy-Schwartz inequalities we have:
    \begin{align*}
       \left \|\sum_{j=1}^{N} \prod_{j'=N}^{j+1}(\mI - \mQ_{j'}\mH_{j'} ) \bdelta_j\right\| &\leq \sum_{j=1}^{N} \prod_{j'=N}^{j+1}\left \|(\mI - \mQ_{j'}\mH_{j'} ) \right\|\left \|\bdelta_j\right\| \\
       & \leq \sum_{j=1}^{N}  \left (1+ (\eta \hat{\alpha}(\sigma(j))N K(1+\eta NL)^K)L\right)^N  \left \|\bdelta_j\right\|\\
       & \leq  \sum_{j=1}^{N} \underbrace{ \left (1+ (\eta \hat{\alpha}(\sigma(j))N K(1+\eta NL)^K)L\right)^N}_{\leq e}  \cdot \eta\hat{\alpha}(\sigma(j)) N K\underbrace{ (1+\eta NL)^K}_{\leq e} \delta\\
       & \leq   \eta    N   K e^2 \delta. 
    \end{align*}
    \end{proof}
\end{lemma}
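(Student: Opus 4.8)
The plan is to reduce the whole quantity to two deterministic spectral estimates and one first-moment bound on the per-step stochastic noise, and then to exploit the simplex constraint $\sum_{j=1}^N \hat{\alpha}(\sigma(j)) = 1$ to collapse the accumulated multiplicative factors down to a constant. First I would apply the triangle inequality over $j$ together with submultiplicativity of the operator norm, giving $\norm{\sum_{j=1}^{N} \prod_{j'=N}^{j+1}(\mI - \mQ_{j'}\mH_{j'})\bdelta_j} \le \sum_{j=1}^N \pare{\prod_{j'=j+1}^{N}\norm{\mI - \mQ_{j'}\mH_{j'}}}\norm{\bdelta_j}$, which decouples the propagation operator from the injected noise block $\bdelta_j$ so the two sources of growth can be treated separately. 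Since every spectral bound below depends only on $\eta, N, K, L$ and the fixed weights (not on the realized noise), these norm inequalities hold pointwise, and I only pass to expectation at the very end; no independence among the $\bdelta^{\tau}$ is needed.

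For the propagation factor, Proposition~\ref{prop:spectral bound Q} gives $\norm{\mQ_{j'}} \le \eta\hat{\alpha}(\sigma(j'))NK(1+\eta NL)^K$, and $L$-smoothness gives $\norm{\mH_{j'}} \le L$, so $\norm{\mI - \mQ_{j'}\mH_{j'}} \le 1 + \eta\hat{\alpha}(\sigma(j'))NK(1+\eta NL)^K L$. Using $1+x \le e^x$ I would bound $\prod_{j'=j+1}^{N}\norm{\mI - \mQ_{j'}\mH_{j'}} \le \exp\pare{\eta NK(1+\eta NL)^K L \sum_{j'=j+1}^N \hat{\alpha}(\sigma(j'))}$, then use $\sum_{j'} \hat{\alpha}(\sigma(j')) \le 1$ together with the stepsize choice $\eta \le \tfrac{1}{4NKRL}$ (which also makes $(1+\eta NL)^K \le e$) to conclude the product is at most $e$. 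It is precisely the simplex normalization that prevents this $N$-fold product of matrices exceeding the identity from blowing up.

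For the inner block I would unroll $\bdelta_j = \sum_{\tau=0}^{K-1}\prod_{t'}(\mI - \hat{\alpha}(\sigma(j))N\eta\mH_{t'})\,\eta\hat{\alpha}(\sigma(j))N\,\bdelta^{\tau}_{\sigma(j)}$, bound each inner product of $K$ near-identity factors by $(1+\eta NL)^K \le e$, apply the triangle inequality over the $K$ summands, and then pass to expectation via $\E\norm{\bdelta^{\tau}_{\sigma(j)}} \le (\E\norm{\bdelta^{\tau}_{\sigma(j)}}^2)^{1/2} \le \delta$ (Jensen and Assumption~\ref{assumption: bounded var}); this yields $\E\norm{\bdelta_j} \le \eta\hat{\alpha}(\sigma(j))NK e\,\delta$. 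Combining the two pieces gives $\E\norm{\sum_{j} \cdots} \le \sum_{j=1}^N e\cdot \eta\hat{\alpha}(\sigma(j))NK e\,\delta = \eta NK e^2 \delta \sum_{j=1}^N \hat{\alpha}(\sigma(j)) = \eta NK e^2\delta$, the final collapse again using $\sum\hat{\alpha}=1$. The $1-p$ qualifier is vacuous here, as the estimate is deterministic once the spectral bounds hold; the only genuine obstacle is the bookkeeping of the exponential factors so that the two independent appearances of the simplex constraint (over the product index $j'$ and over the outer index $j$) each cap the constant at $e$ rather than $e^{N}$.
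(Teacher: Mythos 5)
Your proposal is correct and follows essentially the same route as the paper's proof: triangle inequality with submultiplicativity of the operator norm, the spectral bound $\norm{\mQ_{j'}} \leq \eta\,\hat{\alpha}(\sigma(j'))NK(1+\eta NL)^K$ from Proposition~\ref{prop:spectral bound Q}, a pointwise bound giving $\E\norm{\bdelta_j} \leq \eta\,\hat{\alpha}(\sigma(j))NKe\,\delta$, and the simplex identity $\sum_{j=1}^N \hat{\alpha}(\sigma(j)) = 1$ to collapse the outer sum to $\eta NKe^2\delta$. If anything, your handling of the propagation product via $1+x \leq e^x$ and $\sum_{j' > j}\hat{\alpha}(\sigma(j')) \leq 1$ is slightly more careful than the paper's step, which bounds the product by $\bigl(1+\eta\,\hat{\alpha}(\sigma(j))NK(1+\eta NL)^K L\bigr)^N \leq e$ --- an estimate that, as written, implicitly relies on exactly the simplex normalization you make explicit.
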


\subsection{Proof of Theorem~\ref{thm:2stagec perm}} 
\begin{proof}
For notational convenience, let us define
    \begin{align*} 
        \bg^r &:= \sum_{j=1}^{N} \prod_{j'=N}^{j+1}(\mI - \mQ_{j'}\mH_{j'}) \mQ_{j}{\nabla} f_{\sigma(j)}(\bv^r),\\
        \bdelta^r &:=\sum_{j=1}^{N} \prod_{j'=N}^{j+1}(\mI - \mQ_{j'}\mH_{j'} ) \bdelta_j.
    \end{align*}
    Then we recall the updating rule of $\bv$ (Lemma~\ref{lem:epoch update}):
    \begin{align*}
        \bv^{r+1} =\cP_{\cW} \left(\bv^r - \bg^r - \bdelta^r\right)
    \end{align*}
    Hence we have:
    \begin{align*}
        \E \norm{\bv^{r+1} - \bv^*(\hat{\balpha})}^2 &= \E \norm{\cP_{\cW} \left(\bv^r - \bg^r - \bdelta^r- \bv^*(\hat{\balpha})\right)}^2\\
        &\leq \E \norm{  \bv^r - \bg^r - \bdelta^r - \bv^*(\hat{\balpha})}^2\\
        &\leq \E \norm{  \bv^r - \bv^*(\hat{\balpha})   }^2 - 2\E \langle \bg^r,  \bv^r - \bv^*(\hat{\balpha})\rangle + \E\norm{\bg^r}^2 + \E\norm{\bdelta^r}^2\\
        &\leq \E \norm{  \bv^r - \bv^*(\hat{\balpha})   }^2 - 2\E \langle \eta NK \nabla \Phi(\hat{\balpha}, \bv^r),  \bv^r - \bv^*(\hat{\balpha})\rangle- 2\E \langle \bg^r-\eta NK\nabla \Phi(\hat{\balpha}, \bv^r),  \bv^r - \bv^*(\hat{\balpha})\rangle \\
        & \quad + \E\norm{\bg^r}^2 + \E\norm{\bdelta^r}^2.
    \end{align*}
Now, applying strongly convexity of $\Phi(\hat{\balpha},\cdot)$ and Cauchy-Schwartz inequality yields:
\begin{align*}
    \E \norm{\bv^{r+1} - \bv^*(\hat{\balpha})}^2  
        &\leq  (1-\mu\eta NK)\E \norm{  \bv^r - \bv^*(\hat{\balpha})   }^2 - \eta NK\E  [  \Phi(\hat{\balpha}, \bv^r)  -  \Phi(\hat{\balpha},\bv^*(\hat{\balpha}))]\\
        & \quad +\frac{1}{2}\pare{\frac{1}{\mu \eta NK}\E \| \bg^r-\eta NK\nabla \Phi(\hat{\balpha}, \bv^r)\|^2 +  {\mu \eta NK} \E\| \bv^r - \bv^*(\hat{\balpha})\|^2} \\
        & \quad + \E\norm{\bg^r}^2 + \E\norm{\bdelta^r}^2\\
        &\leq  (1-\frac{1}{2}\mu\eta NK)\E \norm{  \bv^r - \bv^*(\hat{\balpha})   }^2 - \eta NK\E  [  \Phi(\hat{\balpha}, \bv^r)  -  \Phi(\hat{\balpha},\bv^*(\hat{\balpha}))]\\
        & \quad +  \frac{1}{2\mu \eta NK}\E \| \bg^r-\eta NK\nabla \Phi(\hat{\balpha}, \bv^r)\|^2 \\
        & \quad + 2\E\norm{\bg^r - \eta NK\nabla \Phi(\hat{\balpha}, \bv^r)}^2 + 2\E\norm{ \eta NK\nabla \Phi(\hat{\balpha}, \bv^r)}^2  + \E\norm{\bdelta^r}^2.
\end{align*}
Since $\Phi(\hat{\balpha},\cdot)$ is $L$ smooth, we have: $\E\norm{  \nabla \Phi(\hat{\balpha}, \bv^r)}^2 \leq 2L\E  [  \Phi(\hat{\balpha}, \bv^r)  -  \Phi(\hat{\balpha},\bv^*(\hat{\balpha}))] $. Therefore, we have:
\begin{align}
    \E \norm{\bv^{r+1} - \bv^*(\hat{\balpha})}^2   
        &\leq  (1-\frac{1}{2}\mu\eta NK)\E \norm{  \bv^r - \bv^*(\hat{\balpha})   }^2 - (\eta NK - 4\eta^2N^2K^2L)\E  [  \Phi(\hat{\balpha}, \bv^r)  -  \Phi(\hat{\balpha},\bv^*(\hat{\balpha}))]\\
        & \quad + \pare{ \frac{1}{2\mu \eta NK}+ 2}\E \| \bg^r-\eta NK\nabla \Phi(\hat{\balpha}, \bv^r)\|^2    + \E\norm{\bdelta^r}^2. \label{eq:one iteration}
\end{align} 
Now, we examine the term $\| \bg^r-\eta NK\nabla \Phi(\hat{\balpha}, \bv^r)\|^2$. First according to summation by part (Lemma~\ref{lem: summation by parts}) by letting $\mA_j := \prod_{j'=N}^{j+1}(\mI - \mQ_{j'}\mH_{j'})$ and $\mB_j = \mQ_j \nabla f_{\sigma(j)}(\bv^r)$,  we have:
    \begin{align*}
    \bg^r&=\sum_{j=1}^{N} \prod_{j'=N}^{j+1}(\mI - \mQ_{j'}\mH_{j'}) \mQ_{j}\nabla f_{\sigma(j)}(\bv^r)\\
    & =  \sum_{j=1}^{N} \mA_j \mB_j  = \sum_{j=1}^{N} \mQ_j \nabla f_{\sigma(j)}(\bv^r) - \sum_{n=1}^{N-1} \left(\prod_{j'=N}^{n+2}(\mI - \mQ_{j'}\mH_{j'}) - \prod_{j'=N}^{n+1}(\mI - \mQ_{j'}\mH_{j'})\right) \sum_{j=1}^n \mQ_j \nabla f_{\sigma(j)}(\bv^r)\\
        &=\sum_{j=1}^{N} \mQ_j \nabla f_{\sigma(j)}(\bv^r) - \sum_{n=1}^{N-1} \left(\prod_{j'=N}^{n+2}(\mI - \mQ_{j'}\mH_{j'}) \right) \mQ_{n+1}\mH_{n+1}\sum_{j=1}^n \mQ_j \nabla f_{\sigma(j)}(\bv^r).
    \end{align*} 
    Hence we have:
{\small\begin{align*} 
&\| \bg^r-\eta NK\nabla \Phi(\hat{\balpha}, \bv^r)\|^2\\
        =&\left \| \eta NK\sum_{j=1}^{N} \hat{\alpha}(\sigma(j))  \nabla f_{\sigma(j)}(\bv^r) -  \left(\sum_{j=1}^{N} \mQ_j \nabla f_{\sigma(j)}(\bv^r) - \sum_{n=1}^{N-1} \left(\prod_{j'=N}^{n+2}(\mI - \mQ_{j'}\mH_{j'}) \right) \mQ_{n+1}\mH_{n+1}\sum_{j=1}^n \mQ_j \nabla f_{\sigma(j)}(\bv^r)\right) \right\|^2\\
         \stackrel{(1)}{=}&2\left \|\left(\eta NK \sum_{j=1}^{N} \hat{\alpha}(\sigma(j))  \nabla f_{\sigma(j)}(\bv^r) -  \sum_{j=1}^{N} \mQ_j \nabla f_{\sigma(j)}(\bv^r)\right) \right\|^2  +2\left\| \sum_{n=1}^{N-1} \left(\prod_{j'=N}^{n+2}(\mI - \mQ_{j'}\mH_{j'}) \right) \mQ_{n+1}\mH_{n+1}\sum_{j=1}^n \mQ_j \nabla f_{\sigma(j)}(\bv^r) \right\|^2\\
        \stackrel{(2)}{\leq} & \pare{20\eta^2N^2 K^2 \left(\frac{e}{4R-e} \right)^2+36e^6 \eta^4 N^4 K^4 L^4}
        \left\|\nabla \Phi(\hat{\balpha},\bv^r) \right\|^2 +   256\eta^2N^3 K^2 \left(\frac{e}{4R-e} \right)^2 G^2 \log(1/p) \\
        &   + 244e^6 \eta^4 N^4 K^4 L^4  G^2 {N\log(1/p)} \\
         \stackrel{(3)}{\leq}  &\pare{20\eta^2N^2 K^2 \left(\frac{e}{4R-e} \right)^2+36e^6 \eta^4 N^4 K^4 L^4}
        2L\pare{\Phi(\hat{\balpha}, \bv^r)  -  \Phi(\hat{\balpha},\bv^*(\hat{\balpha}))}     \\
        &   + \pare{244e^6 \eta^4 N^4 K^4 L^4+256\eta^2N^3 K^2 \left(\frac{e}{4R-e} \right)^2} G^2 {N\log(1/p)},
    \end{align*}}
where in (1) we apply Jensen's inequality, in (2) we plug in Lemma~\ref{lem:gradient bound} (a), and Lemma~\ref{lem: gradient bound 2}, and in (3) we use the $L$-smoothness of $\Phi$. 
    Plugging above bound back in (\ref{eq:one iteration}) yields:
{\small\begin{align*}
    &\E \norm{\bv^{r+1} - \bv^*(\hat{\balpha})}^2\\  
        &\leq  (1-\frac{1}{2}\mu\eta NK)\E \norm{  \bv^r - \bv^*(\hat{\balpha})   }^2+  \eta^2    N^2   K^2 e^4 \delta^2 \\
        &  - \underbrace{\pare{\eta NK - 4\eta^2N^2K^2L- \pare{ \frac{1}{2\mu \eta NK}+ 2}\pare{20\eta^2N^2 K^2 \left(\frac{e}{4R-e} \right)^2-36e^6 \eta^4 N^4 K^4 L^4}}}_{T_1}\E  [  \Phi(\hat{\balpha}, \bv^r)  -  \Phi(\hat{\balpha},\bv^*(\hat{\balpha}))] \\
        &   + \pare{ \frac{1}{2\mu \eta NK}+ 2}\pare{244e^6 \eta^4 N^4 K^4 L^4+256\eta^2N^3 K^2 \left(\frac{e}{4R-e} \right)^2} G^2 {N\log(1/p)}.
\end{align*} }
    Since we choose $\eta = \frac{4\log(\sqrt{NK}R)}{\mu NKR}$, and large enough epoch number:
    \begin{align*}
        R \geq \max \left\{ \pare{\frac{40}{\mu}+1}e, 16 \log(\sqrt{NK}R),64\kappa \log(\sqrt{NK}R) \right\},
    \end{align*}
    we know that $T_1 \leq 0$. We thus have:
   \begin{align*}
    &\E \norm{\bv^{r+1} - \bv^*(\hat{\balpha})}^2\\  
        &\leq  (1-\frac{1}{2}\mu\eta NK)\E \norm{  \bv^r - \bv^*(\hat{\balpha})   }^2+  \eta^2    N^2   K^2 e^4 \delta^2 \\
        &\quad + \pare{ \frac{1}{2\mu \eta NK}+ 2}\pare{244e^6 \eta^4 N^4 K^4 L^4+256\eta^2N^3 K^2 \left(\frac{e}{4R-e} \right)^2} G^2 {N\log(1/p)}
\end{align*} 
  Unrolling the recursion from $r=R$ to $0$:
    \begin{align*}
    &\E \norm{\bv^{R} - \bv^*(\hat{\balpha})}^2\\  
        &\leq  (1-\frac{1}{2}\mu\eta NK)^R\E \norm{  \bv^0 - \bv^*(\hat{\balpha})   }^2+  \frac{2}{\mu}\eta  N    K  e^4 \delta^2  \\
        &   \quad + \frac{1}{\mu}\pare{ \frac{1}{2\mu \eta NK}+ 2}\pare{488e^6 \eta^3 N^3 K^3 L^4+512\eta N^2 K  \left(\frac{e}{4R-e} \right)^2} G^2 {N\log(1/p)}.
\end{align*}  
    Plugging in our choice of $\eta$ will conclude the proof:
    \begin{align*}
         &\E \norm{\bv^{R} - \bv^*(\hat{\balpha})}^2 \leq  \tilde{O}\pare{ \frac{ \E \norm{  \bv^0 - \bv^*(\hat{\balpha})   }^2}{NKR^2}+  \frac{\delta^2}{\mu^2  R}   +   \pare{ \frac{L^4+N}{\mu^4R^2}  } G^2 {N\log(1/p)}}.
    \end{align*}
    Finally, according to Lemma~\ref{lem:opt gap} we can complete the proof: 
      \begin{align*}
        \Phi( {\balpha}^*_i, \hat{\bv}_i) - \Phi( {\balpha}^*_i,  {\bv}^*_i) 
        &\leq 2L\norm{{\bv}^R_i -   {\bv}^*(\hat{\balpha}_i)}^2 +  \pare{2\kappa^2_{\Phi} L + \frac{4NG^2}{ L} } \| \hat{\balpha}_i - \balpha^* \|^2\\ 
         & \leq  \tilde{O}\pare{ \frac{L D^2}{NKR^2}+  \frac{L\delta^2}{\mu^2  R}   +   \pare{ \frac{L^4+N}{\mu^4R^2}  } LG^2 {N\log(1/p)}} \\
         & \quad +  \kappa^2_\Phi L\tilde{O}\pare{ \exp\pare{-\frac{T_{\balpha}}{\kappa_g}}+  \kappa_g^2 \bar{\zeta}_i(\bw^*)   L^2\pare{\frac{D^2 }{RK} +     \frac{\kappa  \zeta^2}{\mu^2 R^2  } +   \frac{  \delta^2 }{\mu^2 N RK}  }},
    \end{align*}
    where we plug in the convergence result from Theorem~\ref{thm: 2stage alpha} at last step.
\end{proof}

\section{Proof of Convergence of Single Loop Algorithm}\label{app:sec:proof:single}
In this section, we turn to presenting the proof of single loop PERM algorithm (Algorithm~\ref{algorithm: Single PERM}) where the learning of mixing parameters  and personalized models are coupled. Compared to Algorithm~\ref{algorithm: KSGD one client}, here during the optimization of model, the mixing parameters are also being updated. As a result, we need  to decouple the two updates which makes the analysis more involved. We begin with some technical lemmas that support the proof of main result.

\subsection{Technical Lemmas}

\begin{proposition}[Basic Properties of SGD on Smooth Strongly Convex Function]\label{prop: SGD convergence}
    Let $\bw^t$ to be the $t$th iterate of minibatch SGD on smooth and strongly convex function $F$, with minibatch size $M$ and learning rate $\gamma$. Also assume the variance is bounded by $\delta$. Then the following statements hold true after $T$ iterations of SGD:
    \begin{align}
        \mathbb{E}\|\nabla F(\bw^T)\|^2 \leq   2L\left (1-\mu \gamma\right)^T (F(\bw^0) - F(\bw^*)) +   \frac{2\gamma \kappa \delta^2}{ M} \label{eq: SGD convergence 1}  
    \end{align}
     \begin{align}
        \mathbb{E}\|\bw^{T+1} - \bw^{T}\|^2   \leq 2\gamma^2L\left (1-\mu \gamma\right)^T (F(\bw^0) - F(\bw^*)) +   \frac{2\gamma^3 \kappa \delta^2}{ M} + \frac{\gamma^2\delta^2}{M}\label{eq: SGD convergence 2}
    \end{align}
    \begin{align}
        \mathbb{E}\|\bw^{T} - \bw^{*}\|^2 \leq \frac{2}{\mu}\left (1-\mu \gamma\right)^T (F(\bw^0) - F(\bw^*)) + 2\gamma \frac{\delta^2}{\mu^2 M}.\label{eq: SGD convergence 3}
    \end{align}
\end{proposition}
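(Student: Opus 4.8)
The plan is to derive all three bounds from two elementary recursions for minibatch SGD, writing the update as $\bw^{t+1} = \bw^t - \gamma \bg^t$ where $\bg^t := \frac{1}{M}\sum_{j=1}^M \nabla F(\bw^t;\xi_j)$ is an unbiased estimate of $\nabla F(\bw^t)$ satisfying $\E[\bg^t \mid \bw^t] = \nabla F(\bw^t)$ and $\E[\norm{\bg^t - \nabla F(\bw^t)}^2 \mid \bw^t] \le \delta^2/M$, the variance of an $M$-sample average. The natural order is to prove the gradient bound~\eqref{eq: SGD convergence 1} first, since \eqref{eq: SGD convergence 2} and \eqref{eq: SGD convergence 3} are corollaries of it together with the bias--variance split of $\bg^t$.

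First I would apply the $L$-smoothness descent lemma to $F(\bw^{t+1})$, substitute $\bw^{t+1}-\bw^t = -\gamma\bg^t$, and take conditional expectation. Using unbiasedness and $\E\norm{\bg^t}^2 = \norm{\nabla F(\bw^t)}^2 + \E\norm{\bg^t-\nabla F(\bw^t)}^2 \le \norm{\nabla F(\bw^t)}^2 + \delta^2/M$, together with the step-size restriction $\gamma \le 1/L$ and the gradient-dominance (PL) inequality $\norm{\nabla F(\bw^t)}^2 \ge 2\mu(F(\bw^t)-F(\bw^*))$ implied by $\mu$-strong convexity, yields
\[
\E[F(\bw^{t+1}) - F(\bw^*)] \le (1-\mu\gamma)\,\E[F(\bw^t)-F(\bw^*)] + \tfrac{L\gamma^2\delta^2}{2M}.
\]
Unrolling this geometric recursion from $t=0$ to $T$ and bounding $\sum_{t}(1-\mu\gamma)^t \le 1/(\mu\gamma)$ gives $\E[F(\bw^T)-F(\bw^*)] \le (1-\mu\gamma)^T (F(\bw^0)-F(\bw^*)) + O(\kappa\gamma\delta^2/M)$. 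Applying smoothness once more in the form $\norm{\nabla F(\bw^T)}^2 \le 2L\,(F(\bw^T)-F(\bw^*))$ and inserting this estimate produces \eqref{eq: SGD convergence 1}, with the coefficient $2L$ in front of the decaying term and a residual of order $\gamma\kappa\delta^2/M$.

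For \eqref{eq: SGD convergence 2} I would use the exact identity $\bw^{T+1}-\bw^T = -\gamma\bg^T$, so that $\E\norm{\bw^{T+1}-\bw^T}^2 = \gamma^2\pare{\E\norm{\nabla F(\bw^T)}^2 + \delta^2/M}$; substituting \eqref{eq: SGD convergence 1} returns exactly the three claimed terms $2\gamma^2 L(1-\mu\gamma)^T(\cdot)$, $2\gamma^3\kappa\delta^2/M$, and $\gamma^2\delta^2/M$. For \eqref{eq: SGD convergence 3} the cleanest route is an independent distance recursion: expand $\norm{\bw^{t+1}-\bw^*}^2$, take conditional expectation, and use $\inprod{\nabla F(\bw^t)}{\bw^t-\bw^*} \ge \mu\norm{\bw^t-\bw^*}^2$ (strong convexity, with $\nabla F(\bw^*)=0$) and $\norm{\nabla F(\bw^t)} \le L\norm{\bw^t-\bw^*}$ (smoothness) to obtain $\E\norm{\bw^{t+1}-\bw^*}^2 \le (1-\mu\gamma)\E\norm{\bw^t-\bw^*}^2 + \gamma^2\delta^2/M$ under $\gamma \le \mu/L^2$; then unroll and convert the initial distance via $\norm{\bw^0-\bw^*}^2 \le \tfrac{2}{\mu}(F(\bw^0)-F(\bw^*))$.

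The computations are all routine; the only thing requiring care is the bookkeeping of step-size conditions and geometric-sum constants so that the contraction factor is exactly $(1-\mu\gamma)$ in every bound, and the tracking of the bias--variance decomposition so that the $\delta^2/M$ (rather than $\delta^2$) variance appears throughout. I expect the mild gap between the universal constants that the descent-lemma route naturally yields and those stated in the proposition to be absorbed into the step-size slack, so I anticipate no genuine obstacle—only the clerical effort of matching constants.
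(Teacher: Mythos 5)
The paper never actually proves Proposition~\ref{prop: SGD convergence}: it is stated bare as a ``basic property'' of minibatch SGD and then consumed only inside $O(\cdot)$/$\tilde O(\cdot)$ expressions (Lemmas~\ref{lem: iterate diff alpha} and~\ref{lem:conv alpha perm}), so there is no in-paper proof to compare against. Your route is the standard one and is structurally sound: descent lemma plus the PL inequality gives the per-step contraction $\E[F(\bw^{t+1})-F(\bw^*)] \leq (1-\mu\gamma)\E[F(\bw^t)-F(\bw^*)] + \frac{L\gamma^2\delta^2}{2M}$ under $\gamma \leq 1/L$; unrolling and applying $\|\nabla F\|^2 \leq 2L(F-F^*)$ gives \eqref{eq: SGD convergence 1}; the exact identity $\bw^{T+1}-\bw^T = -\gamma\bg^T$ with the bias--variance split then yields \eqref{eq: SGD convergence 2} from \eqref{eq: SGD convergence 1} with \emph{exactly} the stated coefficients (strong evidence this is the intended derivation of \eqref{eq: SGD convergence 2}); and your distance recursion for \eqref{eq: SGD convergence 3} is valid.

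The one genuine misstep is your closing claim that the residual-constant mismatch ``can be absorbed into the step-size slack.'' It cannot, because the discrepancy involves $L$ and $\mu$, not universal constants. Your derivation of \eqref{eq: SGD convergence 1} yields the residual $\frac{L\kappa\gamma\delta^2}{M}$ versus the stated $\frac{2\kappa\gamma\delta^2}{M}$, so for $L>2$ you have not proven the statement as written --- and in fact the statement as written is false in general: for the quadratic $F(w)=\frac{\mu}{2}\|w\|^2$ with $L=\mu>4$, the stationary value of $\E\|\nabla F(\bw^T)\|^2$, which is approximately $\frac{\mu\gamma\delta^2}{2M}$, exceeds the claimed $\frac{2\gamma\kappa\delta^2}{M}$. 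The same pattern holds for \eqref{eq: SGD convergence 3}: your recursion gives residual $\frac{\gamma\delta^2}{\mu M}$, which implies the stated $\frac{2\gamma\delta^2}{\mu^2 M}$ only when $\mu\leq 2$, and the stated bound itself fails for the same quadratic when $\mu>4$. The stated residuals in \eqref{eq: SGD convergence 1} and \eqref{eq: SGD convergence 3} are dimensionally inconsistent and evidently drop a factor of $L$ (deriving \eqref{eq: SGD convergence 3} from the function-value bound via $\|\bw^T-\bw^*\|^2 \leq \frac{2}{\mu}(F(\bw^T)-F(\bw^*))$ produces $\frac{L\gamma\delta^2}{\mu^2 M}$, i.e.\ the stated form with $L$ replaced by $2$). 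So the correct posture is: your bounds are the right, dimensionally consistent versions; the proposition's constants are best read as typos; and since the paper invokes the proposition only inside big-$O$ bounds, nothing downstream is affected. Two smaller remarks: your condition $\gamma\leq\mu/L^2$ for \eqref{eq: SGD convergence 3} is needlessly restrictive (the function-value route needs only $\gamma\leq 1/L$), and you should state explicitly that the proposition implicitly requires a step-size restriction of this kind, which the paper omits.
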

 
\begin{lemma}[Bounded iterates difference of $\balpha$]\label{lem: iterate diff alpha}
Let $\{\balpha^r_i\}$ be iterates generated by Algorithm~\ref{algorithm: Single PERM}, then under conditions of Theorem~\ref{thm:single loop}, the following statement holds:
\begin{align*}
    \|\balpha^r_i - \balpha^{r-1}_i\|^2 &\leq  6\pare{1-\frac{1}{\kappa_g}}^{T_{\balpha}}   +O\pare{\kappa_g^2  L^2  \bar\zeta_i(\bw^*)   }  \pare{ \gamma^2L\left (1-\mu \gamma\right)^r (F(\bw^0) - F(\bw^*)) +   \frac{ \gamma^3 \kappa \delta^2}{ M} + \frac{\gamma^2\delta^2}{M}}
\end{align*} 
\begin{proof}
Define
\begin{align*}
 \bz^r &= \left[ \norm{\nabla f_i( \bw^r) -\nabla f_1(\bw^r) }^2,..., \norm{\nabla f_i( \bw^r) -\nabla f_N( \bw^r) }^2\right].
\end{align*}

According to updating rule of $\balpha$ in Algorithm~\ref{algorithm: Single PERM} and Lemma~\ref{lem: lipschitz alpha} we have:
\begin{align*}
    \|\balpha^r_i - \balpha^{r-1}_i\|^2  &\leq 3\|\balpha^r_i - \balpha^*_{g_i} (\bw^{r})\|^2+3\norm{\balpha^*_{g_i} (\bw^{r-1}) - \balpha^*_{g_i} (\bw^{r})}^2 +3\|\balpha^*_{g_i} (\bw^{r-1})  - \balpha^{r-1}_i\|^2 \\
    &\leq 6(1-\mu_g \eta_{\balpha})^{T_{\balpha}}    +3\norm{\balpha^*_{g_i} (\bw^{r-1}) - \balpha^*_{g_i} (\bw^{r})}^2 \\
    & \leq 6(1-\mu_g \eta_{\balpha})^{T_{\balpha}}    +3\kappa_g^2 \norm{ \bz^{r-1}  - \bz^{r} }^2 \\
    & \leq 6(1-\mu_g \eta_{\balpha})^{T_{\balpha}}    +3\kappa_g^2 \sum_{j=1}^N { \norm{\nabla f_i( \bw^r) -\nabla f_j(\bw^r) +\nabla f_i( \bw^{r-1}) -\nabla f_j(\bw^{r-1}) }^2 } 4L^2\norm{ \bw^r-\bw^{r-1} }^2\\
\end{align*}
 where the third inequality follows from~\eqref{eq:lipschitz alpha 1}.
Since $\norm{\nabla f_i( \bw^r) -\nabla f_j(\bw^r)} \leq \norm{\nabla f_i( \bw^*) -\nabla f_j(\bw^*)} + 2L\norm{  \bw^r - \bw^*}$, we can conclude that 
\begin{align*}
    \|\balpha^r_i - \balpha^{r-1}_i\|^2  
    & \leq 6(1-\mu_g \eta_{\balpha})^{T_{\balpha}}    \\
    &+12L^2\kappa_g^2 \sum_{j=1}^N \pare{ 8\norm{\nabla f_i( \bw^*) -\nabla f_j(\bw^*)}^2 + 8L^2\norm{  \bw^r - \bw^*}^2+8L^2\norm{  \bw^{r-1} - \bw^*}^2}\norm{ \bw^r-\bw^{r-1} }^2\\
    & \leq 6\pare{1-\frac{1}{\kappa_g}}^{T_{\balpha}}   +O\pare{\kappa_g^2  L^2  \bar\zeta_i(\bw^*) \norm{ \bw^r-\bw^{r-1} }^2 } \\
      & \leq 6\pare{1-\frac{1}{\kappa_g}}^{T_{\balpha}}   +O\pare{\kappa_g^2  L^2  \bar\zeta_i(\bw^*)   }  \pare{ \gamma^2L\left (1-\mu \gamma\right)^r (F(\bw^0) - F(\bw^*)) +   \frac{ \gamma^3 \kappa \delta^2}{ M} + \frac{\gamma^2\delta^2}{M}}\\
\end{align*}
where at last step we plug in Proposition~\ref{prop: SGD convergence} (\ref{eq: SGD convergence 2}).
    \end{proof}

\end{lemma}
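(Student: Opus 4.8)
The plan is to control the one-epoch movement of the mixing parameters by inserting the \emph{exact} minimizers of the surrogate $g_i(\bw,\cdot)$ at the two relevant global iterates. Writing $\balpha^*_{g_i}(\bw):=\arg\min_{\balpha\in\Delta_N} g_i(\bw,\balpha)$, I would first apply the elementary bound $\|a-b\|^2\le 3\|a-c\|^2+3\|c-d\|^2+3\|d-b\|^2$ with anchors $c=\balpha^*_{g_i}(\bw^r)$ and $d=\balpha^*_{g_i}(\bw^{r-1})$, giving
\begin{align*}
  \|\balpha^r_i-\balpha^{r-1}_i\|^2 \le 3\|\balpha^r_i-\balpha^*_{g_i}(\bw^r)\|^2 + 3\|\balpha^*_{g_i}(\bw^r)-\balpha^*_{g_i}(\bw^{r-1})\|^2 + 3\|\balpha^*_{g_i}(\bw^{r-1})-\balpha^{r-1}_i\|^2 .
\end{align*}
This separates two sources of movement: the optimization error of running only $T_{\balpha}$ GD steps on the surrogate (first and third terms), and the genuine drift of the argmin map caused by the change of the global model from $\bw^{r-1}$ to $\bw^r$ (middle term).

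For the optimization-error terms I would use that $g_i(\bw,\cdot)$ is strongly convex and smooth on $\Delta_N$, so $T_{\balpha}$ projected GD steps with the prescribed step size contract the distance to $\balpha^*_{g_i}(\bw)$ by the factor $(1-\mu_g\eta_{\balpha})^{T_{\balpha}}=(1-1/\kappa_g)^{T_{\balpha}}$. Since both the initial point and the target lie in the bounded simplex (squared diameter $O(1)$), each of the first and third terms is at most $(1-1/\kappa_g)^{T_{\balpha}}$, producing the leading $6(1-1/\kappa_g)^{T_{\balpha}}$ contribution. For the drift term I would invoke Lemma~\ref{lem: lipschitz alpha}, and in fact its internal estimate \eqref{eq:lipschitz alpha 1}, which bounds $\|\balpha^*_{g_i}(\bw^r)-\balpha^*_{g_i}(\bw^{r-1})\|^2$ by $\kappa_g^2$ times the sum over $j$ of $\bigl|\,\|\nabla f_i-\nabla f_j\|^2\,\bigr|$ differences evaluated at the two points; a difference-of-squares factorization peels off $4L^2\|\bw^r-\bw^{r-1}\|^2$ against the prefactor $\|\nabla f_i(\bw^r)-\nabla f_j(\bw^r)+\nabla f_i(\bw^{r-1})-\nabla f_j(\bw^{r-1})\|^2$. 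Applying $\|\nabla f_i(\bw)-\nabla f_j(\bw)\|\le\|\nabla f_i(\bw^*)-\nabla f_j(\bw^*)\|+2L\|\bw-\bw^*\|$ at both $\bw^r$ and $\bw^{r-1}$ rewrites this prefactor as $\bar\zeta_i(\bw^*)$ plus the lower-order pieces $L^2\|\bw^r-\bw^*\|^2$ and $L^2\|\bw^{r-1}-\bw^*\|^2$, all of which I absorb into an $O(\kappa_g^2L^2\bar\zeta_i(\bw^*))$ constant. This is precisely the ``becomes more Lipschitz near $\bw^*$'' effect that keeps the drift proportional to the benign $\bar\zeta_i(\bw^*)$ rather than a crude global modulus.

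It then remains to convert $\|\bw^r-\bw^{r-1}\|^2$ into the single-loop global dynamics. Because the global model in Algorithm~\ref{algorithm: Single PERM} is updated by plain minibatch SGD (batch size $M$, step $\gamma$) on the $\mu$-strongly-convex, $L$-smooth average $F$, I would apply Proposition~\ref{prop: SGD convergence}, equation \eqref{eq: SGD convergence 2}, at time $r-1$ to obtain $\|\bw^r-\bw^{r-1}\|^2\le 2\gamma^2L(1-\mu\gamma)^{r-1}(F(\bw^0)-F(\bw^*))+2\gamma^3\kappa\delta^2/M+\gamma^2\delta^2/M$; folding the $(1-\mu\gamma)^{-1}$ factor into constants yields the stated $(1-\mu\gamma)^r$ dependence. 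Multiplying by the $O(\kappa_g^2L^2\bar\zeta_i(\bw^*))$ prefactor and adding the $6(1-1/\kappa_g)^{T_{\balpha}}$ term delivers the claim.

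The main obstacle is neither the three-way split nor the SGD iterate bound but the middle, argmin-drift term: one must justify that the Lipschitz modulus of $\balpha^*_{g_i}(\cdot)$ can be written through $\bar\zeta_i(\bw^*)$ rather than a worst-case constant, and then verify that the auxiliary drift terms $L^2\|\bw^r-\bw^*\|^2$ and $L^2\|\bw^{r-1}-\bw^*\|^2$ are genuinely lower order so they may be hidden inside the $O(\cdot)$. This rests entirely on having established the refined Lipschitz estimate in Lemma~\ref{lem: lipschitz alpha} (itself built on the optimality-condition argument of Lemma~\ref{lem: lipschitz}); granting that, the remainder is constant bookkeeping.
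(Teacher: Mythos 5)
Your proposal is correct and follows essentially the same route as the paper's own proof: the identical three-term split through the anchors $\balpha^*_{g_i}(\bw^r)$ and $\balpha^*_{g_i}(\bw^{r-1})$, the same $(1-1/\kappa_g)^{T_{\balpha}}$ contraction for the two GD-error terms, the same invocation of \eqref{eq:lipschitz alpha 1} with the difference-of-squares factorization and re-centering of the gradient-dissimilarity prefactor at $\bw^*$, and the same application of Proposition~\ref{prop: SGD convergence}, \eqref{eq: SGD convergence 2}, to bound $\norm{\bw^r-\bw^{r-1}}^2$. Your explicit handling of the $(1-\mu\gamma)^{r-1}$ versus $(1-\mu\gamma)^r$ exponent is, if anything, slightly more careful than the paper, which applies the proposition without comment.
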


\begin{lemma}[Convergence of $\balpha$]\label{lem:conv alpha perm}
Let $\{\hat\balpha_i\}_{i=1}^{N}$ be the mixing parameters generated by Algorithm~\ref{algorithm: Single PERM}. Then under the conditions of Theorem~\ref{thm:single loop}, the following statement holds:
\begin{align*}
    \norm{\hat{\balpha}_i - \balpha^* }^2 \leq  2 (1-\frac{1}{\kappa_g})^{T_{\balpha}} + O\pare{\kappa_g^2  \bar{\zeta}_i(\bw^*)   L^2\frac{2}{\mu}\left (1-\mu \gamma\right)^T   + 2\gamma \frac{\delta^2}{\mu^2 M}}, i \in [N]
\end{align*}
\begin{proof}
We notice the following decomposition:
\begin{align*}
    \norm{\hat{\balpha}_i - \balpha^* }^2& =   \norm{ {\balpha}^R_i - \balpha^*_g(\bw^*) }^2  \\
    &\leq 2\norm{ {\balpha}^R_i - \balpha^*_g(\bw^R) }^2   +  2\norm{ \balpha^*_{g_i}(\bw^R)  - \balpha^*_{g_i}(\bw^*) }^2 \\
   & \leq 2 (1-\frac{1}{\kappa_g})^{T_{\balpha}} + O\pare{\kappa_g^2\pare{ \bar{\zeta}_i(\bw^*) +  N L^2\norm{  \bw^R - \bw^*}^2} 4L\norm{ \bw^R-\bw^* }^2 }\\
   &\leq 2 (1-\frac{1}{\kappa_g})^{T_{\balpha}} + O\pare{\kappa_g^2  \bar{\zeta}_i(\bw^*)   L^2\frac{2}{\mu}\left (1-\mu \gamma\right)^T   + 2\gamma \frac{\delta^2}{\mu^2 M}},
\end{align*}
where in the second inequality we apply Lemma~\ref{lem: lipschitz alpha}, and in the third inequality we use Proposition~\ref{prop: SGD convergence} (\ref{eq: SGD convergence 3}).
\end{proof}

\end{lemma}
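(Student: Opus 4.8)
\textbf{Proof plan for Lemma~\ref{lem:conv alpha perm}.}
The output of Algorithm~\ref{algorithm: Single PERM} satisfies $\hat{\balpha}_i = \balpha^R_i$, and by definition $\balpha^*_i = \balpha^*_{g_i}(\bw^*)$ is the exact minimizer of $g_i(\bw^*,\cdot)$ over $\Delta_N$. The plan is to chain an optimization error against a perturbation error by inserting the intermediate point $\balpha^*_{g_i}(\bw^R)$, the exact minimizer of $g_i(\bw^R,\cdot)$. A triangle inequality with the $\|a+b\|^2 \le 2\|a\|^2+2\|b\|^2$ bound gives
\begin{align*}
\norm{\hat{\balpha}_i - \balpha^*}^2 \;\le\; 2\,\underbrace{\norm{\balpha^R_i - \balpha^*_{g_i}(\bw^R)}^2}_{\text{optimization error}} \;+\; 2\,\underbrace{\norm{\balpha^*_{g_i}(\bw^R) - \balpha^*_{g_i}(\bw^*)}^2}_{\text{perturbation error}}.
\end{align*}

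For the first term, I would note that $\balpha^R_i$ is produced by $T_{\balpha}$ steps of (projected) gradient descent on $g_i(\bw^R,\cdot)$, which is $\mu_g = 1/\kappa_g$-strongly convex in $\balpha$. Standard last-iterate contraction for strongly convex GD with stepsize $\eta_{\balpha}=1/L_g$ yields $\norm{\balpha^R_i - \balpha^*_{g_i}(\bw^R)}^2 \le (1-1/\kappa_g)^{T_{\balpha}}\norm{\balpha^0_i - \balpha^*_{g_i}(\bw^R)}^2$, and since both iterates live in the simplex $\Delta_N$ of bounded diameter the initial gap is $O(1)$, giving the $(1-1/\kappa_g)^{T_{\balpha}}$ term.

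For the second term, I would invoke Lemma~\ref{lem: lipschitz alpha} directly, which establishes that $\balpha^*_{g_i}(\cdot)$ is ``more Lipschitz'' near the optimum:
\begin{align*}
\norm{\balpha^*_{g_i}(\bw^R) - \balpha^*_{g_i}(\bw^*)} \le \kappa_g^2 \sum_{j=1}^N \pare{2\norm{\nabla f_i(\bw^*)-\nabla f_j(\bw^*)}^2 + 4L^2\norm{\bw^R-\bw^*}^2}\, 4L\norm{\bw^R-\bw^*}^2.
\end{align*}
Recognizing $\frac1N\sum_j \norm{\nabla f_i(\bw^*)-\nabla f_j(\bw^*)}^2 = \bar\zeta_i(\bw^*)$ rewrites this as $O\!\pare{\kappa_g^2\bigl(\bar\zeta_i(\bw^*) + NL^2\norm{\bw^R-\bw^*}^2\bigr)L\norm{\bw^R-\bw^*}^2}$; the higher-order $NL^2\norm{\bw^R-\bw^*}^2$ factor is absorbed into the leading $O(\cdot)$ because $\norm{\bw^R-\bw^*}^2$ is already small.

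The final step—and the only genuine point of divergence from the two-stage analysis—is controlling $\norm{\bw^R-\bw^*}^2$. In the single-loop setting the global model $\bw$ is updated by \emph{plain minibatch SGD} (one step per epoch, batch size $M$), not Local SGD, so the last-iterate bound of Lemma~\ref{lem:last iterate} does not apply; instead I would invoke Proposition~\ref{prop: SGD convergence}, equation~\eqref{eq: SGD convergence 3}, giving $\norm{\bw^R-\bw^*}^2 \le \frac{2}{\mu}(1-\mu\gamma)^R(F(\bw^0)-F(\bw^*)) + 2\gamma\frac{\delta^2}{\mu^2 M}$. Substituting this into the perturbation bound and collecting the dominant terms produces the stated inequality. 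The chaining itself is routine; the main thing to get right is (i) using the correct minibatch-SGD convergence result, which is what introduces the $\gamma\delta^2/(\mu^2 M)$ dependence on batch size $M$, and (ii) confirming the step-dependent contraction factors $(1-1/\kappa_g)^{T_{\balpha}}$ and $(1-\mu\gamma)^R$ remain separated so the $\balpha$-optimization accuracy and the global-model accuracy can be tuned independently.
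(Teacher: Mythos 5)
Your proposal is correct and follows essentially the same route as the paper's proof: the identical decomposition through the intermediate point $\balpha^*_{g_i}(\bw^R)$, the $(1-1/\kappa_g)^{T_{\balpha}}$ contraction for the projected-GD optimization error, Lemma~\ref{lem: lipschitz alpha} for the perturbation term, and Proposition~\ref{prop: SGD convergence}~(\ref{eq: SGD convergence 3}) for $\norm{\bw^R - \bw^*}^2$. You also correctly identified the one subtle point — that the single-loop global update is plain minibatch SGD rather than Local SGD, so the minibatch bound (not Lemma~\ref{lem:last iterate}) is the right tool, which is precisely what introduces the $\gamma\delta^2/(\mu^2 M)$ term.
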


 \subsection{Proof of Theorem~\ref{thm:single loop}}

\begin{proof}
According to Lemma~\ref{lem:opt gap}, we have:
 
   \begin{align*}
        \Phi( {\balpha}^*_i, \hat{\bv}_i) - \Phi( {\balpha}^*_i,  {\bv}^*_i) \leq  2{L} \norm{\bv^R_i  -  \bv^*(\hat\balpha_i)}^2 + \pare{2\kappa^2_{\Phi} L + \frac{4NG^2}{ L} }\norm{ \hat\balpha_i  -   \balpha^*_i  }^2. 
    \end{align*} 
We first examine the convergence of $\norm{{\bv}^R_i -  \bv^* ( \hat{\balpha}_i)}^2$. Applying Cauchy-Schwartz inequality yields:
\begin{align}
        \|\bv^{r+1} - \bv^*(\balpha^{r+1})\|^2 &\leq \left(1+ \frac{1}{4a-2} \right ) \|\bv^{r+1} - \bv^*(\balpha^{r})\|^2 + \left(1+  4a-2 \right ) \|\bv^*(\balpha^{r+1}) - \bv^*(\balpha^{r})\|^2 \nonumber \\
        &\leq \left(1+ \frac{1}{4a-2} \right ) \|\bv^{r+1} - \bv^*(\balpha^{r})\|^2 + \left(1+ 4a-2 \right ) \kappa_{\Phi}^2\| \balpha^{r+1} -  \balpha^{r}\|^2\label{eq:pf thm3 1}
\end{align}
where $a = \frac{1}{\mu \eta NK }$, and last step is due to that $\bv^*(\balpha)$ is $\kappa_{\Phi}:= \frac{\sqrt{N}G}{\mu}$ Lipschitz, as proven in Lemma~\ref{lem:opt gap}
. Similar to the proof of Theorem~\ref{thm:2stagec perm}, we first define
    \begin{align*} 
        \bg^r &:= \sum_{j=1}^{N} \prod_{j'=N-1}^{j+1}(\mI - \mQ_{j'}\mH_{j'}) \mQ_{j}{\nabla} f_{\sigma(j)}(\bv^r),\\
        \bdelta^r &:=\sum_{j=1}^{N} \prod_{j'=N-1}^{j+1}(\mI - \mQ_{j'}\mH_{j'} ) \bdelta_j.
    \end{align*}
    Then we recall the updating rule of $\bv$:
    \begin{align*}
        \bv^{r+1} =\cP_{\cW} \left(\bv^r - \bg^r - \bdelta^r\right).
    \end{align*}
    Hence we have:
    \begin{align*}
        \E \norm{\bv^{r+1} - \bv^*( {\balpha}^r)}^2 &= \E \norm{\cP_{\cW} \left(\bv^r - \bg^r - \bdelta^r- \bv^*({\balpha}^r)\right)}^2\\
        &\leq \E \norm{  \bv^r - \bg^r - \bdelta^r - \bv^*({\balpha}^r)}^2\\
        &\leq \E \norm{  \bv^r - \bv^*({\balpha}^r)   }^2 - 2\E \langle \bg^r,  \bv^r - \bv^*({\balpha}^r)\rangle + \E\norm{\bg^r}^2 + \E\norm{\bdelta^r}^2\\
        &\leq \E \norm{  \bv^r - \bv^*({\balpha}^r)   }^2 - 2\E \langle \eta NK \nabla \Phi({\balpha}^r, \bv^r),  \bv^r - \bv^*({\balpha}^r)\rangle\\
        &\quad - 2\E \langle \bg^r-\eta NK\nabla \Phi({\balpha}^r, \bv^r),  \bv^r - \bv^*({\balpha}^r)\rangle   + \E\norm{\bg^r}^2 + \E\norm{\bdelta^r}^2.
    \end{align*}
Now, applying strongly convexity of $\Phi({\balpha}^r,\cdot)$ and Cauchy-Schwartz inequality yields:
\begin{align*}
    \E \norm{\bv^{r+1} - \bv^*({\balpha}^r)}^2  
        &\leq  (1-\mu\eta NK)\E \norm{  \bv^r - \bv^*({\balpha}^r)   }^2 - \eta NK\E  [  \Phi({\balpha}^r, \bv^r)  -  \Phi({\balpha}^r,\bv^*(\hat{\balpha}))]\\
        & \quad +\frac{1}{2}\pare{\frac{1}{\mu \eta NK}\E \| \bg^r-\eta NK\nabla \Phi(\hat{\balpha}, \bv^r)\|^2 +  {\mu \eta NK} \E\| \bv^r - \bv^*(\hat{\balpha})\|^2} \\
        & \quad + \E\norm{\bg^r}^2 + \E\norm{\bdelta^r}^2\\
        &\leq \pare{1-\frac{1}{2}\mu\eta NK}\E \norm{  \bv^r - \bv^*({\balpha}^r)   }^2 - \eta NK\E  [  \Phi({\balpha}^r, \bv^r)  -  \Phi({\balpha}^r,\bv^*(\hat{\balpha}))]\\
        & \quad +  \frac{1}{2\mu \eta NK}\E \| \bg^r-\eta NK\nabla \Phi({\balpha}^r, \bv^r)\|^2 \\
        & \quad + 2\E\norm{\bg^r - \eta NK\nabla \Phi({\balpha}^r, \bv^r)}^2 + 2\E\norm{ \eta NK\nabla \Phi({\balpha}^r, \bv^r)}^2  + \E\norm{\bdelta^r}^2.
\end{align*}
where in the first inequality we applied Cauchy-Schwartz inequality and strongly convexity.
Since $\Phi({\balpha}^r,\cdot)$ is $L$ smooth, we have: $\E\norm{  \nabla \Phi({\balpha}^r, \bv^r)}^2 \leq 2L\E  [  \Phi({\balpha}^r, \bv^r)  -  \Phi({\balpha}^r,\bv^*({\balpha}^r))] $. Therefore, it follows that:
\begin{align}
    \E \norm{\bv^{r+1} - \bv^*(\hat{\balpha})}^2   
        &\leq  \pare{1-\frac{1}{2}\mu\eta NK}\E \norm{  \bv^r - \bv^*({\balpha}^r)   }^2 - (\eta NK - 4\eta^2N^2K^2L)\E  [  \Phi({\balpha}^r, \bv^r)  -  \Phi({\balpha}^r,\bv^*({\balpha}^r))]\nonumber\\
        & \quad + \pare{ \frac{1}{2\mu \eta NK}+ 2}\E \| \bg^r-\eta NK\nabla \Phi({\balpha}^r, \bv^r)\|^2    + \E\norm{\bdelta^r}^2 \label{eq:one iteration}
\end{align} 
Now, we examine the term $\| \bg^r-\eta NK\nabla \Phi({\balpha}^r, \bv^r)\|^2$ in the right hand side of abovee inequality. First according to summation by part (Lemma~\ref{lem: summation by parts}): we let $\mA_j := \prod_{j'=N-1}^{j+1}(\mI - \mQ_{j'}\mH_{j'})$ and $\mB_j = \mQ_j \nabla f_{\sigma(j)}(\bv^r)$, then we have:
    \begin{align*}
    \bg^r&=\sum_{j=1}^{N} \prod_{j'=N}^{j+1}(\mI - \mQ_{j'}\mH_{j'}) \mQ_{j}\nabla f_{\sigma(j)}(\bv^r)\\
    & =  \sum_{j=1}^{N} \mA_j \mB_j  = \sum_{j=1}^{N} \mQ_j \nabla f_{\sigma(j)}(\bv^r) - \sum_{n=1}^{N-1} \left(\prod_{j'=N}^{n+2}(\mI - \mQ_{j'}\mH_{j'}) - \prod_{j'=N}^{n+1}(\mI - \mQ_{j'}\mH_{j'})\right) \sum_{j=1}^n \mQ_j \nabla f_{\sigma(j)}(\bv^r)\\
        &=\sum_{j=1}^{N} \mQ_j \nabla f_{\sigma(j)}(\bv^r) - \sum_{n=1}^{N-1} \left(\prod_{j'=N}^{n+2}(\mI - \mQ_{j'}\mH_{j'}) \right) \mQ_{n+1}\mH_{n+1}\sum_{j=1}^n \mQ_j \nabla f_{\sigma(j)}(\bv^r).
    \end{align*} 
    Hence we have:
\begin{align*} 
&\| \bg^r-\eta NK\nabla \Phi(\hat{\balpha}, \bv^r)\|^2\\
&=\left\|  \eta NK \nabla \Phi(\hat{\balpha}, \bv^r)-  \sum_{j=1}^{N} \prod_{j'=N-1}^{j+1}(\mI - \mQ_{j'}\mH_{j'}) \mQ_{j}\nabla f_{\sigma(j)}(\bv^r)\right\|^2\\
        &=\left \| \eta NK\sum_{j=1}^{N} \hat{\alpha}(\sigma(j))  \nabla f_{\sigma(j)}(\bv^r) -  \left(\sum_{j=1}^{N} \mQ_j \nabla f_{\sigma(j)}(\bv^r) - \sum_{n=1}^{N-1} \left(\prod_{j'=N}^{n+2}(\mI - \mQ_{j'}\mH_{j'}) \right) \mQ_{n+1}\mH_{n+1}\sum_{j=0}^n \mQ_j \nabla f_{\sigma(j)}(\bv^r)\right) \right\|^2\\
        &   \stackrel{(1)}{\leq} 2\left \|\left(\eta NK \sum_{j=1}^{N} \hat{\alpha}(\sigma(j))  \nabla f_{\sigma(j)}(\bv^r) -  \sum_{j=1}^{N} \mQ_j \nabla f_{\sigma(j)}(\bv^r)\right) \right\|^2\\
        &\quad +2\left\| \sum_{n=1}^{N-1} \left(\prod_{j'=N}^{n+2}(\mI - \mQ_{j'}\mH_{j'}) \right) \mQ_{n+1}\mH_{n+1}\sum_{j=1}^n \mQ_j \nabla f_{\sigma(j)}(\bv^r) \right\|^2\\
        &  \stackrel{(2)}{\leq}\pare{20\eta^2N^2 K^2 \left(\frac{e}{4R-e} \right)^2+36e^6 \eta^4 N^4 K^4 L^4}
        \left\|\nabla \Phi(\hat{\balpha},\bv^r) \right\|^2 +   256\eta^2N^3 K^2 \left(\frac{e}{4R-e} \right)^2 G^2 \log(1/p) \\
        & \quad + 244e^6 \eta^4 N^4 K^4 L^4  G^2 {N\log(1/p)} \\
        &  \stackrel{(3)}{\leq} \pare{20\eta^2N^2 K^2 \left(\frac{e}{4R-e} \right)^2+36e^6 \eta^4 N^4 K^4 L^4}
        2L\pare{\Phi(\hat{\balpha}, \bv^r)  -  \Phi(\hat{\balpha},\bv^*(\hat{\balpha}))}     \\
        & \quad + \pare{244e^6 \eta^4 N^4 K^4 L^4+256\eta^2N^3 K^2 \left(\frac{e}{4R-e} \right)^2} G^2 {N\log(1/p)} \\
    \end{align*}
 
where in (1) we apply Jensen's inequality, in (2) we plug in Lemma~\ref{lem:gradient bound} (a), and Lemma~\ref{lem: gradient bound 2}, and in (3) we use the $L$-smoothness of $\Phi$.
    Plugging above bound back in (\ref{eq:one iteration}) yields:
\begin{align*}
    &\E \norm{\bv^{r+1} - \bv^*({\balpha}^r)}^2\\  
        &\leq  (1-\frac{1}{2}\mu\eta NK)\E \norm{  \bv^r - \bv^*({\balpha}^r)   }^2+  \eta^2    N^2   K^2 e^4 \delta^2 \\
        &\quad   - \pare{\eta NK - 4\eta^2N^2K^2L- \pare{ \frac{1}{2\mu \eta NK}+ 2}\pare{20\eta^2N^2 K^2 \left(\frac{e}{4R-e} \right)^2-36e^6 \eta^4 N^4 K^4 L^4}}\\
        &\quad  \times \E  [  \Phi({\balpha}^r, \bv^r)  -  \Phi({\balpha}^r,\bv^*({\balpha}^r))] \\
        &   \quad + \pare{ \frac{1}{2\mu \eta NK}+ 2}\pare{244e^6 \eta^4 N^4 K^4 L^4+256\eta^2N^3 K^2 \left(\frac{e}{4R-e} \right)^2} G^2 {N\log(1/p)}.
\end{align*} 
    Since we choose $\eta = \frac{4\log(\sqrt{NK}R)}{\mu NKR}$, and 
    \begin{align*}
        R \geq \max \left\{ \frac{3}{8}e, \sqrt[3]{\frac{64\kappa^2 \log (\sqrt{NK} R) e^6  }{9\mu}} \right\},
    \end{align*}
     hence we have:
   \begin{align*}
    &\E \norm{\bv^{r+1} - \bv^*({\balpha}^r)}^2\\  
        &\leq  (1-\frac{1}{2}\mu\eta NK)\E \norm{  \bv^r - \bv^*({\balpha}^r)   }^2+  \eta^2    N^2   K^2 e^4 \delta^2  -  \frac{1}{2}\eta NK \underbrace{\E  [  \Phi(\hat{\balpha}, \bv^r)  -  \Phi(\hat{\balpha},\bv^*(\hat{\balpha}))]}_{\geq 0} \\
        &   \quad + \pare{ \frac{1}{2\mu \eta NK}+ 2}\pare{244e^6 \eta^4 N^4 K^4 L^4+256\eta^2N^3 K^2 \left(\frac{e}{4R-e} \right)^2} G^2 {N\log(1/p)}\\
         &\leq  \pare{1-\frac{1}{2}\mu\eta NK} \E \norm{  \bv^r - \bv^*(\hat{\balpha})   }^2+  \eta^2    N^2   K^2 e^4 \delta^2   \\
        &   \quad + \pare{ \frac{1}{2\mu \eta NK}+ 2}\pare{244e^6 \eta^4 N^4 K^4 L^4+256\eta^2N^3 K^2 \left(\frac{e}{4R-e} \right)^2} G^2 {N\log(1/p)}.
\end{align*} 
Putting above inequality back to (\ref{eq:pf thm3 1}) yields:
\begin{align*}
        \|\bv^{r+1} - \bv^*(\balpha^{r+1})\|^2 &\leq \left(1- \frac{1}{4a} \right ) \|\bv^{r} - \bv^*(\balpha^{r})\|^2+ 2  \eta^2    N^2   K^2 e^4 \delta^2 +   \left(1+ 4a-2 \right ) \kappa_{\Phi}^2\| \balpha^{r+1} -  \balpha^{r}\|^2\\
        &+ 2\pare{ \frac{1}{2\mu \eta NK}+ 2}\pare{244e^6 \eta^4 N^4 K^4 L^4+256\eta^2N^3 K^2 \left(\frac{e}{4R-e} \right)^2} G^2 {N\log(1/p)} \\
        &\leq \left(1- \frac{1}{4a} \right ) \|\bv^{r} - \bv^*(\balpha^{r})\|^2 + 2\eta^2    N^2   K^2 e^4 \delta^2    \\
        &+ 2\pare{ \frac{1}{2\mu \eta NK}+ 2}\pare{244e^6 \eta^4 N^4 K^4 L^4+256\eta^2N^3 K^2 \left(\frac{e}{4R-e} \right)^2} G^2 {N\log(1/p)} \\
        &+ O\pare{ \frac{\kappa_\Phi^2}{\mu \eta NK}   \left( \pare{1-\frac{1}{\kappa_g}}^{T_{\balpha}}   +  \kappa_g^2  L^2  \bar\zeta_i(\bw^*)   \pare{ \gamma^2L\left (1-\mu \gamma\right)^r DG +   \frac{ \gamma^3 \kappa \delta^2}{ M} + \frac{\gamma^2\delta^2}{M}}\right)}
 \end{align*}
 where at second inequality we plug in Lemma~\ref{lem: iterate diff alpha}.
Unrolling the recursion from $r=R$ to $0$, and plugging in $\eta = \frac{4\log(NK R^3)}{\mu NK R}$ yields:
\begin{align*}
       & \|\bv^{R} - \bv^*(\balpha^{R})\|^2  \\
        &\leq \left(1- \frac{1}{4}\mu\eta NK \right )^R \|\bv^{0} - \bv^*(\balpha^{0})\|^2   + \frac{1}{\mu}\eta     N    K  e^4 \delta^2   \\
        &+ 8\frac{1}{\mu}\pare{ \frac{1}{2\mu \eta NK}+ 2}\pare{244e^6 \eta^3  N^3 K^3 L^4+256\eta N^2 K  \left(\frac{e}{4R-e} \right)^2} G^2 {N\log(1/p)   } \\
        &+ O\pare{ \frac{\kappa_\Phi^2}{\mu \eta NK}  \sum_{r=0}^R\left(1- \frac{1}{4a} \right )^{R-r} \left( \pare{1-\frac{1}{\kappa_g}}^{T_{\balpha}}   +  \kappa_g^2  L^2  \bar\zeta_i(\bw^*)   \pare{ \gamma^2L\left (1-\mu \gamma\right)^r DG +   \frac{ \gamma^3 \kappa \delta^2}{ M} + \frac{\gamma^2\delta^2}{M}}\right)}\\
         &\leq O\pare{ \frac{\|\bv^{0} - \bv^*(\balpha^{0})\|^2}{NKR^3} }   + \tilde{O} \pare{   \pare{  \frac{\kappa^4}{  R^2}  + \frac{N}{\mu^2 R^2}   } G^2 {N\log(1/p)}+ \frac{ \delta^2}{\mu R}  } \\
        &+ \tilde{O}\pare{ R\kappa_\Phi^2 \sum_{r=0}^R\left(1- \frac{\log(NKR^3)}{R}   \right )^{R-r} \left( \pare{1-\frac{1}{\kappa_g}}^{T_{\balpha}}   +  \kappa_g^2  L^2  \bar\zeta_i(\bw^*)   \pare{ \gamma^2L\left (1-\mu \gamma\right)^r DG +   \frac{ \gamma^3 \kappa \delta^2}{ M} + \frac{\gamma^2\delta^2}{M}}\right)}
\end{align*}

Plugging in $\gamma = \frac{ \log(NK R^3)}{\mu  R}$ yields:
\begin{align*}
      \|\bv^{R} - \bv^*(\balpha^{R})\|^2  &\leq O\pare{ \frac{\|\bv^{0} - \bv^*(\balpha^{0})\|^2}{NKR^3} }   + \tilde{O} \pare{   \pare{  \frac{\kappa^4}{  R^2}  + \frac{N}{\mu^2 R^2}   } G^2 {N\log(1/p)} + \frac{ \delta^2}{\mu R}     } \\
        &+ \tilde{O}\left( \kappa_\Phi^2 {R }   \left( \gamma^2L^2\sum_{r=0}^R\left(1- \frac{\log(NKR^3)}{R}   \right )^{R } \kappa_g^2  L^2  \bar\zeta_i(\bw^*)   DG \right.\right. \\ 
        &\qquad \qquad \left. \left.+ \sum_{r=0}^R\left(1- \frac{\log(NKR^3)}{R}   \right )^{R-r}\pare{\pare{1-\frac{1}{\kappa_g}}^{T_{\balpha}} +\frac{\kappa_g^2  L^2  \bar\zeta_i(\bw^*)  \gamma^2\delta^2}{M}   } \right)\right)\\
         &\leq O\pare{ \frac{D^2}{NKR^3} }   + \tilde{O} \pare{   \pare{  \frac{\kappa^4}{  R^2}  + \frac{N}{\mu^2 R^2}   } G^2 {N\log(1/p)} + \frac{ \delta^2}{\mu R}     } \\
        &+ \tilde{O}\pare{  \frac{\kappa_\Phi^2\kappa^2\kappa_g^2  L^2  \bar\zeta_i(\bw^*)  DG}{R } + \kappa_\Phi^2 R^2\pare{\pare{1-\frac{1}{\kappa_g}}^{T_{\balpha}}    +  \frac{ \kappa_g^2  \kappa^2  \bar\zeta_i(\bw^*)   \delta^2}{\mu^2 M R^2}    } }
\end{align*}
Since $\hat{\bv}_i = \bv^R$ and $\hat{\balpha}_i = \balpha^R$, we have the convergence of $\norm{\hat{\bv}_i -  \bv^* ( \hat{\balpha}_i)}^2 $. Plugging this convergence rate together with the convergence of $\| \hat{\balpha}_i - \balpha^* \|^2$ from Lemma~\ref{lem:conv alpha perm}:
\begin{align*}
    \norm{\hat{\balpha}_i - \balpha^* }^2& \leq   O\pare{   2 (1-\frac{1}{\kappa_g})^{T_{\balpha}} + O\pare{\kappa_g^2  \bar{\zeta}_i(\bw^*)   L^2\frac{2}{\mu}\left (1-\mu \gamma\right)^R   + 2\gamma \frac{\delta^2}{\mu^2 M}}} \\
    & \leq \tilde O\pare{   \pare{1-\frac{1}{\kappa_g}}^{T_{\balpha}} + O\pare{\kappa_g^2  \bar{\zeta}_i(\bw^*)   L^2\frac{2}{\mu}\frac{1}{R}  +   \frac{\delta^2}{\mu^3 R M}}}
\end{align*}
together with applying Lemma~\ref{lem:opt gap}  leads to:
\begin{align*}
        \Phi( {\balpha}^*_i, \hat{\bv}_i) - \Phi( {\balpha}^*_i,  {\bv}^*_i) &\leq  2{L} \norm{\hat{\bv}_i  -     \bv^*(\hat\balpha_i)}^2 + \pare{2\kappa^2_{\Phi} L + \frac{4NG^2}{ L} }\norm{ \hat\balpha_i  -   \balpha^*_i  }^2\\ 
        & \leq O\pare{ \frac{LD^2}{NKR^3} }   + \tilde{O} \pare{   \pare{  \frac{\kappa^4L}{  R^2}  + \frac{NL}{\mu^2 R^2}   } G^2 {N\log(1/p)} + \frac{L \delta^2}{\mu R}     } \\
        &\quad + \tilde{O}\pare{  \frac{\kappa^2_{\Phi}\kappa^2\kappa_g^2  L^3  \bar\zeta_i(\bw^*)  DG}{R } + \kappa^2_{\Phi}LR^2 {\pare{1-\frac{1}{\kappa_g}}^{T_{\balpha}}    +  \frac{L\kappa^2_{\Phi} \kappa_g^2  \kappa^2  \bar\zeta_i(\bw^*)   \delta^2}{\mu^2 M  }    } }\\
        &\quad+ \pare{2\kappa^2_{\Phi} L + \frac{4NG^2}{ L} }\tilde O\pare{   \pare{1-\frac{1}{\kappa_g}}^{T_{\balpha}} +  {\frac{ \kappa_g^2  \bar{\zeta}_i(\bw^*) \kappa  L }{  R}  +   \frac{\delta^2}{\mu^3 R M}}} \\
           & \leq O\pare{ \frac{LD^2}{NKR^3} }   + \tilde{O} \pare{   \pare{  \frac{\kappa^4L}{  R^2}  + \frac{NL}{\mu^2 R^2}   } G^2 {N\log(1/p)} + \frac{L \delta^2}{\mu R}     } \\
        &\quad + \tilde{O}\pare{  \frac{\kappa^2_{\Phi}\kappa^2\kappa_g^2  L^3  \bar\zeta_i(\bw^*)  DG}{R } + \kappa^2_{\Phi}LR^2 {\pare{1-\frac{1}{\kappa_g}}^{T_{\balpha}}    +  \frac{  L^2 \kappa^2\kappa_g^2  \kappa^2_\Phi  \bar\zeta_i(\bw^*)   \delta^2}{\mu^2 M  }    } }.
\end{align*}
thus completing the proof.
\end{proof}






\end{document}